\pgfplotsset{compat=newest}
\pgfplotsset{every axis legend/.append style={%
cells={anchor=west}}
}
\pgfplotsset{every axis/.append style={
                    label style={font=\tiny},
					tick label style={font=\tiny},
					legend style={font=\tiny}
                    }}
\pgfplotsset{every x tick label/.append style={font=\tiny, yshift=0.5ex}}
\pgfplotsset{every y tick label/.append style={font=\tiny, xshift=0.5ex}}
\theoremstyle{plain}
\newtheorem{theorem}{Theorem}[section]
\newtheorem{proposition}[theorem]{Proposition}
\newtheorem{lemma}[theorem]{Lemma}
\newtheorem{corollary}[theorem]{Corollary}
\theoremstyle{definition}
\theoremstyle{remark}
\theoremstyle{example}
\DeclareMathOperator*{\argmax}{arg\,max}
\DeclareMathOperator*{\arginf}{arg\,inf}
\newcommand{\kl}{{\rm kl}}
\newcommand{\KL}{{\rm KL}}
\newcommand{\alt}{{\rm Alt}}
\newcommand{\deltamin}{\Delta_{\min}}
\newcommand{\deltaminestimatet}[1]{{\Delta}_{\textrm{min},#1}}
\DeclareMathOperator{\var}{Var}
\DeclareMathOperator{\md}{MD}
\newcommand{\includeappendix}{}
\title{Model-Free Active Exploration \\
in Reinforcement Learning}
\author{%
  Alessio Russo \\
  Division of Decision and Control Systems\\
  KTH Royal Institute of Technology\\
  Stockholm, SE \\
   \And
   Alexandre Proutiere \\
   Division of Decision and Control Systems\\
   KTH Royal Institute of Technology\\
   Stockholm, SE \\
}
\begin{document}
\maketitle

\begin{abstract}
We study the problem of exploration in Reinforcement Learning and present a novel model-free solution. We adopt an information-theoretical viewpoint and start from the  instance-specific lower bound of the number of samples that have to be collected to identify a nearly-optimal policy. Deriving this lower bound along with the optimal exploration strategy entails solving an intricate optimization problem and requires a model of the system. In turn, most existing sample optimal exploration algorithms rely on estimating the model. We derive an approximation of the instance-specific lower bound that only involves quantities that can be inferred using model-free approaches. Leveraging this approximation, we devise an ensemble-based model-free exploration strategy  applicable to both tabular and continuous Markov decision processes. Numerical results demonstrate that our strategy is able to identify efficient policies faster than state-of-the-art exploration approaches\let\thefootnote\relax\footnotetext{Code repository: \url{https://github.com/rssalessio/ModelFreeActiveExplorationRL}}.
\end{abstract}
\section{Introduction}\label{sec:introduction}
Efficient exploration remains a major challenge for reinforcement learning (RL) algorithms. Over the last two decades, several exploration strategies have been proposed in the literature, often designed with the aim of minimizing regret. These include model-based approaches such as Posterior Sampling for RL  \cite{osband2013more}({\sc PSRL}) and Upper Confidence Bounds for RL  \cite{auer2008near,lattimore2012pac,auer2002using}({\sc UCRL}), along with model-free  UCB-like methods \cite{jin2018q,dong2019q}. Regret minimization is a relevant objective when one cares about the rewards accumulated during the learning phase. Nevertheless,  an often more important objective  is to devise strategies that explore the environment so as to learn efficient policies using the fewest number of samples \cite{garivier2016optimal}. Such an objective, referred to as Best Policy Identification (BPI), has been investigated in simplistic Multi-Armed Bandit problems \cite{garivier2016optimal,kaufmann2016complexity} and more recently in tabular MDPs \cite{al2021adaptive,marjani2021navigating}. For these problems, tight instance-specific sample complexity lower bounds are known, as well as model-based algorithms approaching these limits.  However, model-based approaches may be computationally expensive or infeasible to obtain. In this paper, we investigate whether we can adapt the design of these algorithms so that they become model-free and hence more practical.

Inspired by \cite{al2021adaptive,marjani2021navigating}, we adopt an information-theoretical approach, and design our algorithms starting from an instance-specific lower bound on the sample complexity of learning a nearly-optimal policy in a Markov decision process (MDP). This lower bound is the value of an optimization problem, referred to as the lower bound problem,  whose solution dictates the optimal exploration strategy in an environment.
Algorithms designed on this instance-specific lower bound, rather than minimax bounds, result in truly adaptive methods, capable of tailoring their exploration strategy according to the specific MDP's learning difficulty.
Our method estimates the solution to the lower bound problem and employs it as our exploration strategy. However, we face two major challenges: (1) the lower bound problem is non-convex and often intractable; (2) this lower bound problem depends on the initially unknown MDP.  In \cite{marjani2021navigating}, the authors propose {\sc MDP-NaS}, a model-based algorithm that explores according to the estimated MDP. They convexify the lower bound problem and  explore according to the solution of the resulting simplified problem.  However, this latter problem  still has a complicated dependency on the MDP. Moreover, extending {\sc MDP-NaS} to large MDPs is challenging since it requires an estimate of the model, and the capability to perform policy iteration. Additionally, {\sc MDP-NaS} employs a {\it forced exploration} technique to ensure that the \emph{parametric} uncertainty (the uncertainty about the true underlying MDP) diminishes over time --- a method,  as we argue later, that we believe not to be efficient in handling this uncertainty.

We propose an alternative way to approximate the lower bound problem, so that its solution can be learnt via a model-free approach. This solution  depends only on the $Q$-function and the variance of the value function. Both quantities can advantageously be inferred using classical stochastic approximation methods. To handle the parametric uncertainty, we propose an ensemble-based method using a bootstrapping technique. This technique is inspired by posterior sampling and allows us to quantify the uncertainty when estimating the $Q$-function and the variance of the value function. 

Our contributions are as follows: (1) we shed light on the role of the instance-specific quantities needed to drive exploration in uncertain MDPs; (2) we derive an alternate upper bound of the lower bound problem that in turn can be approximated using quantities that can be learned in a model-free manner. We then evaluate the quality of this approximation on various environments: ({\it i}) a random MDP, ({\it ii}) the Riverswim environment \cite{strehl2008analysis}, and ({\it iii}) the Forked Riverswim environment (a novel environment with high sample complexity); (3) based on this approximation, we present Model Free Best Policy Identification ({\sc  MF-BPI}), a model-free exploration algorithm for  tabular and continuous MDPs. For the tabular MDPs, we test the performance of {\sc MF-BPI} on the Riverswim and the Forked Riverswim environments, and compare it to that of {\sc Q-UCB} \cite{jin2018q,dong2019q}, {\sc PSRL}\cite{osband2013more}, and {\sc MDP-NaS}\cite{marjani2021navigating}. 
 For continuous state-spaces, we compare our algorithm to {\sc IDS}\cite{nikolov2018information} and {\sc BSP} \cite{osband2018randomized} (Boostrapped DQN with randomized prior value functions) and assess their performance on hard-exploration problems  from the DeepMind BSuite \cite{osband2020bsuite} (the DeepSea and the Cartpole swingup problems).
 
\section{Related Work}\label{sec:related_work}

The body of work related to exploration methods in RL problems is vast, and we mainly focus on online discounted MDPs (for the generative setting, refer to the analysis presented in \cite{gheshlaghi2013minimax,al2021adaptive}).
Exploration strategies in RL often draw inspiration from the approaches used in multi-armed bandit problems \cite{lattimore2020bandit,sutton2018reinforcement}, including $\epsilon$-greedy exploration, Boltzmann exploration \cite{watkins1989learning, sutton2018reinforcement,lattimore2020bandit,atkins2014atkins}, or more advanced procedures, such as Upper-Confidence Bounds (UCB) methods \cite{auer2002using,auer2002finite,lattimore2020bandit} or Bayesian procedures \cite{thompson1933likelihood,wyatt1998exploration,dearden1998bayesian,russo2018tutorial}.
We first discuss  tabular MDPs, and then extend the discussion to the case of RL with function approximation. 

\textbf{Exploration in tabular MDPs.} Numerous algorithms have been proposed with the aim of matching the PAC sample complexity minimax lower bound $\tilde \Omega\left(\frac{|S||A|}{\varepsilon^2(1-\gamma)^3}\right)$ \cite{lattimore2012pac}. In the design of these algorithms, model-free approaches typically rely on  a UCB-like exploration \cite{auer2002using,lattimore2020bandit}, whereas model-based methods leverage estimates of the MDP to drive the exploration. Some well-known  model-free algorithms are \textsc{Median-PAC} \cite{pazis2016improving},  \textsc{Delayed Q-Learning} \cite{strehl2006pac} and \textsc{Q-UCB} \cite{dong2019q,jin2018q}. Some notable model-based algorithms include: {\sc DEL} \cite{ok2018exploration}, an algorithm that achieves asymptotically optimal instance-dependent regret;  \textsc{UCRL} \cite{lattimore2012pac}, an algorithm that uses extended value-iteration to compute an optimistic MDP; {\sc PSRL} \cite{osband2013more}, that  uses posterior sampling to sample an MDP. Other algorithms include  \textsc{MBIE} \cite{strehl2008analysis}, \textsc{E3} \cite{kearns2002near}, \textsc{R-MAX} \cite{brafman2002r,kakade2003sample}, and \textsc{MORMAX} \cite{szita2010model}. Most of existing algorithms are designed towards regret minimization. Recently, however, there has been a growing interest towards exploration strategies with minimal sample complexity, see e.g. \cite{zanette2019,al2021adaptive}. In \cite{al2021adaptive,marjani2021navigating}, the authors showed that computing an exploration strategy with minimal sample complexity requires to solve a non-convex problem. To overcome this challenge, they derived a tractable approximation of the lower bound problem, whose solution provides an efficient exploration policy under the generative model \cite{al2021adaptive} and the forward model \cite{marjani2021navigating}. This policy necessitates an estimate of the model, and includes a forced exploration phase (an $\epsilon$-soft policy to guarantee that all state-action pairs are visited infinitely often). In \cite{taupin2022best}, the above procedure is extended to linear MDPs, but there again,  computing an optimal exploration strategy remains challenging. On a side note, in \cite{wagenmaker2022beyond}, the authors provide an alternative bound in the tabular case for episodic MDPs, and later extend it to linear MDPs \cite{wagenmaker2022instance}. The episodic setting is further explored in \cite{tirinzoni2022near} for deterministic MDPs.

\textbf{Exploration in Deep Reinforcement Learning (DRL).} Exploration methods in DRL environments face several challenges, related to the fact that the state-action spaces are often continuous, and other issues related to training deep neural architectures \cite{sewak2019deep}.
The main issue in these large MDPs is that good exploration becomes extremely hard when  either the reward is sparse/delayed  or the observations contain distracting features \cite{burda2018exploration,yang2021exploration}.  Numerous heuristics have been proposed to tackle these challenges, such as (1) adding an entropy term to the optimization problem to encourage the policy to be more randomized \cite{mnih2016asynchronous,haarnoja2018soft} or (2) injecting noise in the observations/parameters \cite{fortunato2017noisy,plappert2017parameter}.
More generally, exploration techniques generally fall into two categories: \emph{uncertainty-based} and \emph{intrinsic-motivation-based} \cite{yang2021exploration,ladosz2022exploration}. Uncertainty-based methods decouple the uncertainty into  {\it parametric} and {\it aleatoric} uncertainty. Parametric uncertainty \cite{dearden1998bayesian,moerland2017efficient,kirschner2018information,yang2021exploration} quantifies the uncertainty in the parameters of the state-action value. This uncertainty vanishes as the agent explores and learns. The aleatoric uncertainty accounts for the inherent randomness of the environment and of the policy \cite{moerland2017efficient,kirschner2018information,yang2021exploration}.
Various methods have been proposed to address the parametric uncertainty, including  UCB-like mechanisms \cite{chen2017ucb,yang2021exploration}, or TS-like (Thompson Sampling) techniques \cite{osband2016generalization, osband2013more, azizzadenesheli2018efficient, osband2015bootstrapped, osband2016deep, osband2019deep}.  However, computing a posterior of the $Q$-values is a difficult task. For instance,  Bayesian DQN \cite{azizzadenesheli2018efficient}  extends Randomized Least-Squares Value Iteration ({\sc RLSVI}) \cite{osband2016generalization} by considering the features prior to the output layer of the deep-$Q$ network as a fixed feature vector, in order to recast the problem as a linear MDP.
Non-parametric posterior sampling methods include Bootstrapped DQN (and Bootstrapped DQN with prior functions) \cite{osband2016deep,osband2018randomized,osband2019deep}, which maintains several independent $Q$-value functions and randomly samples one of them to explore the environment. Bootstrapped DQN was extended in various ways by integrating other techniques  \cite{bai2021principled,lee2021sunrise}. For the sake of brevity, we refer  the reader to the survey in \cite{yang2021exploration} for an exhaustive list of algorithms.
Most of these algorithms do not directly account for  aleatoric uncertainty in the value function. This uncertainty is usually estimated using methods like Distributional RL \cite{bellemare2017distributional,dabney2018distributional,mavrin2019distributional}.
Well-known exploration methods that account for both aleatoric and epistemic uncertainties include Double Uncertain Value Network (DUVN) \cite{moerland2017efficient} and Information Directed Sampling  ({\sc IDS}) \cite{kirschner2018information,nikolov2018information}. The former uses Bayesian dropout to measure the epistemic uncertainty, and the latter uses distributional RL \cite{bellemare2017distributional} to estimate the variance of the returns. In addition, {\sc IDS} uses bootstrapped DQN to estimate the parametric uncertainty in the form of a bound on the estimate of the suboptimality gaps. These uncertainties are then combined to compute an exploration strategy. Similarly, in \cite{clements2019estimating}, the authors propose \textsc{UA-DQN}, an approach that uses \textsc{QR-DQN} \cite{dabney2018distributional} to learn the parametric and aleatoric uncertainties from the quantile networks. Lastly, we refer the reader to \cite{yang2021exploration,ryan2000intrinsic,barto2013intrinsic} for the class of intrinsic-motivation-based methods.
\section{Preliminaries}\label{sec:preliminaries}

\textbf{Markov Decision Process.}  We consider an infinite-horizon discounted Markov Decision Process (MDP), defined by the tuple $\phi=(S,A, P,q,\gamma,p_0)$. $S$ is the state space, $A$ is the action space, $P: S\times A \to \Delta(S)$ is the distribution over the next state given a state-action pair $(s,a)$, $q:S\times A\to \Delta([0,1])$ is the distribution of the collected reward (with support in $[0,1]$), $\gamma\in[0,1)$ is the discount factor and $p_0$ is the distribution over the initial state.

Let $\pi: \mathcal{S}\to \Delta(A)$ be a stationary Markovian policy that maps a state to  a distribution over actions, and denote by $r(s,a)=\mathbb{E}_{r\sim q(\cdot|s,a)}[r]$ the average  reward collected when an action $a$ is chosen in state $s$. We denote by $V^\pi(s)=\mathbb{E}_\phi^\pi[\sum_{t\geq 0}\gamma^t r(s_t,a_t)|s_0=s]$ the discounted value of policy $\pi$. We denote by $\pi^\star$ an optimal stationary policy: for any $s\in {\cal S}$, $\pi^\star(s) \in \argmax_{\pi} V^\pi(s)$ and define $V^\star(s) =  \max_\pi V^{\pi}(s)$. For the sake of simplicity, we assume that the MDP has a unique optimal policy (we extend our results to more general MDPs in the appendix). We further define $\Pi_\varepsilon^\star(\phi)=\{\pi: \|V^\pi-V^{\pi^\star}\|_\infty \leq \varepsilon\}$, the set of $\varepsilon$-optimal policies in $\phi$ for  $\varepsilon\geq 0$. Finally, to avoid technicalities,  we assume (as in \cite{marjani2021navigating}) that the MDP $\phi$ is communicating (that is, for every pair of states $(s,s')$, there exists a deterministic policy $\pi$ such that state $s'$ is accessible from state $s$ using $\pi$).

We  denote by $Q^\pi(s,a)\coloneqq r(s,a)+\gamma\mathbb{E}_{s'\sim P(\cdot|s,a)}[V^\pi(s')]$ the $Q$-function of $\pi$ in state $(s,a)$. We also define the   sub-optimality gap of action $a$ in state $s$ to be $\Delta(s,a) \coloneqq Q^\star(s,\pi^\star(s))- Q^{\star}(s,a)$, where $Q^\star$ is the $Q$-function of $\pi^\star$, and let $
\deltamin\coloneqq \min_{s,a\neq \pi^\star(s)} \Delta(s,a)$ be the minimum gap in $\phi$.
For some policy $\pi$, we define $\var_{sa}[V^\pi]\coloneqq \var_{s'\sim P(\cdot|s,a)}[V^\pi(s')]$ to be the
variance of the value function  $V^\pi$  in the next state after taking action $a$ in state $s$. More generally, we define $
M_{sa}^{k}[V^\pi]\coloneqq \mathbb{E}_{s'\sim P(\cdot|s,a)}\left[\left(V^\pi(s') -\mathbb{E}_{\bar s\sim P(\cdot|s,a)}[V^\pi(\bar s)]\right)^{2^k}\right]$ to be the $2^k$-th moment of the value function in the next state after taking action $a$ in state $s$.
We also  let $\md_{sa}[V^\pi]\coloneqq \|V^\pi - \mathbb{E}_{s'\sim P(\cdot|s,a)}[V^\pi]\|_\infty$ be the span of $\phi$ under $\pi$, \emph{i.e.}, the maximum deviation from the mean of the next state value after taking action $a$ in state $s$.

\textbf{Best policy identification and sample complexity lower bounds.} The MDP $\phi$ is initially unknown, and we are interested in the scenario where the agent  interacts sequentially with $\phi$. In each round $t\in \mathbb{N}$, the agent selects an action $a_t$ and observes the next state and the reward $(s_{t+1}, r_t)$: $s_{t+1}\sim P(\cdot|s_t,a_t)$ and $r_t\sim q(\cdot|s_t,a_t)$. The objective of the agent is to learn a policy in $\Pi_\varepsilon^\star(\phi)$ (possibly  $\pi^\star$) as fast as possible. This objective is often formalized in a PAC framework where the learner has to stop interacting with the MDP when she can output an $\varepsilon$-optimal policy with probability at least $1-\delta$. In this formalism, the learner strategy consists of (i) a sampling rule or exploration strategy; (ii) a stopping time $\tau$;  (iii) an estimated optimal policy $\hat{\pi}$. The strategy is called $(\varepsilon,\delta)$-PAC if it stops almost surely, and $\mathbb{P}_\phi[\hat{\pi} \in \Pi_\varepsilon^\star(\phi)]\ge 1-\delta$. Interestingly, one may derive instance-specific lower bounds of the sample complexity $\mathbb{E}_\phi[\tau]$ of any $(\varepsilon,\delta)$-PAC algorithm \cite{al2021adaptive,marjani2021navigating}, which involves  computing an optimal allocation vector $\omega_{{\rm opt}}\in \Delta(S\times A)$ (where $\Delta(S\times A)$ is the set of distributions over $S\times A$) that specifies  the proportion of times an agent needs to sample each pair $(s,a)$ to confidently identify the optimal policy:
\begin{equation}\label{eq:lower_bound_sample_complexity}
     \liminf_{\delta\to 0} {\mathbb{E}_\phi[\tau]\over \kl(\delta,1-\delta)} \geq  T_\varepsilon(\omega_{\rm opt}) \text{ where } T_\varepsilon(\omega)^{-1}:= \inf_{\psi \in \alt_\varepsilon(\phi)} \mathbb{E}_{(s,a)\sim\omega}[\KL_{\phi|\psi}(s,a)],
 \end{equation}
 and $\omega_{\rm opt} = \arg\inf_{\omega \in \Omega(\phi)}T_\varepsilon(\omega)^{-1}$. Here, $\alt_\varepsilon(\phi)$ is the set of confusing MDPs $\psi$ such that the $\varepsilon$-optimal policies of $\phi$ are not $\varepsilon$-optimal in $\psi$, {i.e.}, $\alt_\varepsilon(\phi)\coloneqq \{\psi: \phi\ll \psi, \Pi_\varepsilon^\star(\phi)\cap \Pi_\varepsilon^\star(\psi) =\emptyset\}$. In this definition, if the next state and reward distributions under $\psi$ are $P'(s,a)$ and $q'(s,a)$, we write $\phi \ll \psi$ if for all $(s,a)$ the distributions of the next state and of the rewards satisfy $P(s,a)\ll P'(s,a)$ and $q(s,a)\ll q'(s,a)$.We further let $\KL_{\phi|\psi}(s,a)\coloneqq \KL(P(s,a),P'(s,a)) +\KL(q(s,a),q'(s,a))$. $\Omega(\phi)$ is the set of possible allocations; in the generative case it is $\Delta(S\times A)$, while with navigation constraints we have $\Omega(\phi)\coloneqq\{\omega \in\Delta(S\times A): \omega(s) = \sum_{s',a'} P(s|s',a')\omega(s',a')\}, \forall s\in S\}$, with $\omega(s)\coloneqq \sum_a \omega(s,a)$. Finally, $\kl(a,b)$ is the KL-divergence between two Bernoulli distributions of means $a$ and $b$.

\section{Towards Efficient Exploration Allocations}\label{sec:adaptive_sampling_rl}
We aim to extend previous studies on best policy identification to online model-free exploration. In this section, we derive an approximation to the bound proposed in \cite{al2021adaptive}, involving quantities learnable via stochastic approximation, thereby enabling the use of model-free approaches. 

The optimization problem (\ref{eq:lower_bound_sample_complexity}) leading to instance-specific sample complexity lower bounds has an important interpretation \cite{al2021adaptive,marjani2021navigating}. An allocation $\omega_{{\rm opt}}$ corresponds to an exploration strategy with minimal sample complexity. 
To devise an efficient exploration strategy, one could then think of estimating the MDP $\phi$, and  solving (\ref{eq:lower_bound_sample_complexity}) for this estimated MDP to get an approximation of $\omega_{{\rm opt}}$. There are two important challenges towards applying this approach:
\begin{itemize}
    \item[(i)] Estimating the model can be difficult, especially for MDPs with large state and action spaces, and arguably, a model-free method would be preferable.
    \item[(ii)] The lower bound problem (\ref{eq:lower_bound_sample_complexity}) is, in general, non-convex \cite{al2021adaptive, marjani2021navigating}.
\end{itemize}
A simple way to circumvent  issue (ii) involves deriving an upper bound of the value of the sample complexity lower bound problem (\ref{eq:lower_bound_sample_complexity}). Specifically, one may derive an upper bound $U(\omega)$ of $T_\varepsilon(\omega)$ by convexifying the corresponding optimization problem. The exploration strategy can then be the $\omega^\star$ that achieves the infimum of $U(\omega)$. This approach ensures that we identify an approximately optimal policy, at the cost of \emph{over-exploring} at a rate corresponding to the gap $U(\omega^\star) - T_\varepsilon(\omega_{{\rm opt}})$. Note  that using a lower bound of $T_\varepsilon(\omega)$ would not guarantee the identification of an optimal policy, since we would explore "less" than required. The aforementioned approach was already used in \cite{al2021adaptive} where the authors derive an explicit upper bound $U_0(\omega)$ of $T_0(\omega)$. We also apply it, but derive an upper bound such that implementing the corresponding allocation $\omega^\star$ can be done in a model-free manner (hence solving the first issue (i)).

\subsection{Upper bounds on $T_{\varepsilon}(\omega)$}

The next theorem presents the upper bound derived in \cite{al2021adaptive}. 

\begin{theorem}[\cite{al2021adaptive}]\label{thm:upper_bound}
Consider a communicating MDP $\phi$ with a unique optimal policy $\pi^\star$. For all vectors $\omega\in \Delta(S\times A)$, 
\begin{equation}\label{eq:original_upper_bound}
T_0(\omega)\le U_0(\omega)\coloneqq \max_{(s,a):a\neq\pi^\star(s)} \frac{H_0(s,a)}{\omega(s,a)}+ \max_s\frac{H_0^\star}{\omega(s,\pi^\star(s))},
\end{equation}
with 
$$
\left\{
\begin{array}{l}
H_0(s,a) = \frac{2}{\Delta(s,a)^2} + \max\left({ 16\var_{sa}[V^\star]\over \Delta(s,a)^2}, {6 \md_{sa}[V^\star]^{4/3}\over \Delta(s,a)^{\frac{4}{3}} } \right),\\
H_0^\star =\frac{2}{\deltamin^2(1-\gamma)^2} + \min\left(\frac{27}{\deltamin^2(1-\gamma)^3}, \max\left({16 \max_s\var_{s\pi^\star(s)}[V^\star]\over \deltamin^2(1-\gamma)^2}, {6 \max_s\md_{s\pi^\star(s)}[V^\star]^{4/3}\over \deltamin^{4/3} (1-\gamma)^{4/3} } \right)\right).
\end{array}
\right.
$$
\end{theorem}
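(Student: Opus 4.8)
The plan is to produce a lower bound, valid for every confusing instance $\psi\in\alt_0(\phi)$, on the weighted divergence $\mathbb{E}_{(s,a)\sim\omega}[\KL_{\phi|\psi}(s,a)]$; since $T_0(\omega)$ is the reciprocal of the infimum of this quantity, taking reciprocals then yields the claimed upper bound on $T_0(\omega)$. The first step is to partition $\alt_0(\phi)$ according to where the confusion occurs. As $\phi$ has a unique optimal policy $\pi^\star$, any $\psi\in\alt_0(\phi)$ renders $\pi^\star$ sub-optimal in $\psi$, so there is a pair $(s,a)$ with $a\neq\pi^\star(s)$ such that $Q_\psi(s,a)\ge V_\psi(s)$, where $Q_\psi,V_\psi$ denote the optimal action-value and value functions of $\psi$. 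Writing $\alt_0(\phi)=\bigcup_{(s,a):a\neq\pi^\star(s)}\alt_0^{(s,a)}(\phi)$, the infimum over $\psi$ is the minimum over $(s,a)$ of the infimum over $\alt_0^{(s,a)}(\phi)$, so it is enough to lower bound, for each fixed $(s,a)$, the cost of confusing that pair.

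Fix $(s,a)$ and $\psi\in\alt_0^{(s,a)}(\phi)$, with transition kernel $P'$, reward distribution $q'$ and mean reward $r'$. Using $Q^\star(s,a)=V^\star(s)-\Delta(s,a)$ and expanding the Bellman equation of $\psi$, the confusion requirement $Q_\psi(s,a)\ge V_\psi(s)$ is equivalent to
\[
\big(r'(s,a)-r(s,a)\big)+\gamma\big(\mathbb{E}_{P'(s,a)}[V^\star]-\mathbb{E}_{P(s,a)}[V^\star]\big)+\gamma\,\mathbb{E}_{P'(s,a)}[V_\psi-V^\star]+\big(V^\star(s)-V_\psi(s)\big)\ \ge\ \Delta(s,a).
\]
I would split the budget $\Delta(s,a)$ into a \emph{local} part, to be absorbed by the first two terms (which only involve perturbing $q(s,a)$ and $P(\cdot|s,a)$ against the \emph{fixed} function $V^\star$), and a \emph{global} part, absorbed by the last two terms, both controlled by $\|V_\psi-V^\star\|_\infty$; here one uses a relaxation (as in \cite{al2021adaptive}) that replaces $V_\psi$ by the value of $\pi^\star$ under $\psi$ and hence only enlarges $\alt_0^{(s,a)}(\phi)$, so that the global perturbation is carried entirely by optimal state-action pairs $(s',\pi^\star(s'))$ through a performance-difference identity.

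The third step assembles the transportation inequalities lower-bounding the KL cost of a prescribed mean shift. For rewards supported in $[0,1]$, $\KL(q(s,a),q'(s,a))\ge 2\,(r(s,a)-r'(s,a))^2$. For a transition kernel, an exponential-tilting (Lagrangian duality) argument identifies the cheapest $P'$ realising a shift $\delta$ of $\mathbb{E}[V^\star]$ and yields a two-regime bound, of the form $\KL(P(s,a),P'(s,a))\ \gtrsim\ \max\big(\delta^2/\var_{sa}[V^\star],\ \delta^{4/3}/\md_{sa}[V^\star]^{4/3}\big)$, the small-shift regime being governed by the variance and the large-shift regime by a fourth-moment/span estimate --- these two regimes are exactly the $\max(\cdot,\cdot)$ appearing in $H_0(s,a)$ and $H_0^\star$. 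For the global part, forcing $\|V_\psi-V^\star\|_\infty$ to be of order $\delta_{\mathrm{glob}}$ requires perturbing rewards/kernels at optimal pairs, and propagating these perturbations through the $\gamma$-discounted Bellman operator introduces the $(1-\gamma)^{-1}$ factors; bounding this uniformly over all optimal pairs (hence the $\max_s$ and the minimum gap $\deltamin$, together with the a priori bound $\|V^\star\|_\infty\le(1-\gamma)^{-1}$ that produces the crude alternative $27/(\deltamin^2(1-\gamma)^3)$ inside the $\min$) controls the corresponding KL mass from below.

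Finally, substituting these bounds into $\sum_{s',a'}\omega(s',a')\KL_{\phi|\psi}(s',a')$ --- keeping only the mass at $(s,a)$ and at the optimal pairs --- and minimising over the split $\Delta(s,a)=\delta_{\mathrm{loc}}+\delta_{\mathrm{glob}}$ (the minimum of $\alpha\delta_{\mathrm{loc}}^2+\beta\delta_{\mathrm{glob}}^2$ under this linear constraint being $\Delta(s,a)^2/(\alpha^{-1}+\beta^{-1})$) yields, once the constants $2,16,6,27$ are absorbed into $H_0(s,a)$ and $H_0^\star$, the bound $\inf_{\psi\in\alt_0^{(s,a)}(\phi)}\mathbb{E}_\omega[\KL_{\phi|\psi}]\ge\big(H_0(s,a)/\omega(s,a)+\max_s H_0^\star/\omega(s,\pi^\star(s))\big)^{-1}$. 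Taking the minimum over $(s,a)$ turns the first term into $\max_{(s,a)}H_0(s,a)/\omega(s,a)$, the second being independent of the confused pair, and inverting gives $T_0(\omega)\le U_0(\omega)$. I expect the main obstacle to be the global part: relating ``lowering $V_\psi$'' to \emph{cheap} perturbations confined to an optimal trajectory needs a careful propagation/telescoping argument on the discounted Bellman recursion, after which the split optimisation must be carried out tightly enough to recover the exact constants; the fourth-moment transportation inequality underpinning the large-shift regime is the other delicate ingredient.
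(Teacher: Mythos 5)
A preliminary remark: the paper does not prove this statement — it is imported from \cite{al2021adaptive} — so the fair comparison is with the template that \cite{al2021adaptive} uses and that this paper reproduces (in modified form) when proving \cref{theorem:new_alternative_bound} in Appendix C. Your architecture is essentially that template: relax the confusing set to the union over suboptimal pairs $(s,a)$ of $\{\psi:\, Q^{\pi^\star}_\psi(s,a)\ge V^{\pi^\star}_\psi(s)\}$ (this containment is \cref{lemma:baralt_contains_alt,lemma:relaxation_confusing_set}; your ``$=$'' should be ``$\subseteq$'', and, as you note, the relaxation must be stated for the value of $\pi^\star$ under $\psi$ so that the subsequent Bellman expansion is with respect to a fixed policy), observe that only $(s,a)$ and the optimal pairs $(s',\pi^\star(s'))$ need carry KL mass, expand the confusion condition exactly as you do, control the global term by $\|V^{\pi^\star}_\psi-V^\star\|_\infty\le\frac{1}{1-\gamma}\max_{s'}\bigl(|\Delta r(s',\pi^\star(s'))|+\gamma|\Delta P(s',\pi^\star(s'))^\top V^\star|\bigr)$, split the gap budget, apply Pinsker plus a variance/span transportation inequality, and optimize the split (the paper does this over three budget fractions $\alpha\in\Delta_3$ rather than your two, but the harmonic-mean computation is the same idea). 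So the route is the same, not a genuinely different one.

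There is, however, one step that fails as written: the claimed two-regime bound $\KL(P(s,a),P'(s,a))\gtrsim\max\bigl(\delta^2/\var_{sa}[V^\star],\,\delta^{4/3}/\md_{sa}[V^\star]^{4/3}\bigr)$ is false, and the correspondence you draw with the $\max$ inside $H_0$ is backwards. Counterexample: let $P(\cdot|s,a)$ put mass $10^{-6}$ on a next state of value $b$ and the rest on states of value $0$, and let $P'$ move that mass to $10^{-3}$; then the mean of $V^\star$ shifts by $\delta\approx 10^{-3}b$ at cost $\KL(P,P')\approx 10^{-3}$, while $\delta^2/\var_{sa}[V^\star]\approx 1$. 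What the transportation inequality of \cite{al2021adaptive} (Lemma 4 there; its Hellinger-based analogue is \cref{lemma:bound_inner_product_1} here) actually yields is of the form $|\delta|\le\sqrt{2\var_{sa}[V^\star]\,\KL}+c\,\md_{sa}[V^\star]\,\KL^{3/4}$, hence $\KL$ is lower bounded by a \emph{minimum} over the two regimes; it is this $\min$ on the KL side which, after inversion, produces the $\max\bigl(16\var_{sa}[V^\star]/\Delta(s,a)^2,\,6\md_{sa}[V^\star]^{4/3}/\Delta(s,a)^{4/3}\bigr)$ in $H_0(s,a)$ — a $\max$ on the KL side would instead give a $\min$ in $H_0$, i.e.\ a strictly stronger and unprovable statement. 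With that correction (and the same fix in the term generating $H_0^\star$, where $\Delta(s,a)$ is further coarsened to $\deltamin$, the $(1-\gamma)^{-1}$ propagation enters through the global term, and the crude $27/(\deltamin^2(1-\gamma)^3)$ alternative comes from $\|V^\star\|_\infty\le(1-\gamma)^{-1}$), your assembly does recover the stated bound.
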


In the upper bound presented in this theorem, the following  quantities characterize the {\it hardness} of learning the optimal policy: $\Delta(s,a)$ represents  the difficulty of learning that in state $s$ action $a$ is sub-optimal; the variance $\var_{sa}[V^\star]$  measures the aleatoric uncertainty in future state values; and the span $\md_{sa}[V^\star]$ of the optimal value function can be seen as another measure of aleatoric uncertainty, large whenever there is a significant variability in the value for the possible next states.

Estimating the span  $ \md_{sa}[V^\star]$, in an online setting, is a challenging task for large MDPs. Our objective is to derive an alternative upper bound that, in turn, can be approximated using quantities that can be learned in a model-free manner. We observe that the variance of the value function, and more generally its moments $M_{sa}^k[V^\star]^{2^{-k}}$ for $k\geq 1$ (see \ifdefined\includeappendix \cref{appC} \else Appendix C\fi), are smaller than the span. By refining the proof techniques used in \cite{al2021adaptive}, we derive the following alternative  upper bound.

\begin{theorem}\label{thm:upper_bound_T_new}
Let $\varepsilon \geq 0$ and let $k(s,a) \coloneqq \arg\sup_{k \in\mathbb{N}} M_{sa}^k[V^\star]^{2^{-k}}$ (for brevity, we write $k$ instead of $k(s,a)$). Then, $\forall \omega \in \Delta(S\times A)$, we have $T_\varepsilon(\omega)\leq
   U(\omega)$, with \begin{equation}\label{eq:new_upper_bound}
      U(\omega)\coloneqq \max_{s,a\neq \pi^\star(s)} \left(\frac{2+8\varphi^2  M_{sa}^{ k}[V^\star]^{2^{1- k}} }{\omega(s,a)\Delta(s,a)^2} +\max_{s'} \frac{C(s') (1+\gamma)^2}{\omega(s',\pi^\star(s')) \Delta(s,a)^2(1-\gamma)^2}\right),
    \end{equation}
    where   $C(s')=\max\left(4,16\gamma^2\varphi^2 M_{s',\pi^\star(s')}^{ k}[V^\star]^{2^{1- k}}\right)$ and $\varphi$ is the golden ratio. 
\end{theorem}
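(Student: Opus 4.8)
The plan is to mimic the proof of \cref{thm:upper_bound} from \cite{al2021adaptive}, but to replace, at the decisive step, the crude bound of the per-state-action KL contribution by the span $\md_{sa}[V^\star]$ with a sharper bound in terms of an arbitrary even moment $M_{sa}^k[V^\star]^{2^{-k}}$, and then take the supremum over $k$. First I would recall the structure of the argument: upper-bounding $T_\varepsilon(\omega)$ amounts to lower-bounding $T_\varepsilon(\omega)^{-1} = \inf_{\psi\in\alt_\varepsilon(\phi)}\mathbb{E}_{(s,a)\sim\omega}[\KL_{\phi|\psi}(s,a)]$, i.e. exhibiting, for every confusing MDP $\psi$, a state-action pair $(s,a)$ whose KL contribution is controlled, and then summing the resulting inequalities. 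A confusing MDP $\psi$ must make some currently sub-optimal action $a$ in some state $s$ become $\varepsilon$-optimal (or vice versa); this forces a minimal perturbation of the transition kernel $P(\cdot|s,a)$, whose KL cost is what we want to lower bound. The classical route is: (i) relate the required change in $Q$-values (of order $\Delta(s,a)$, up to the $\varepsilon$ and $(1-\gamma)$ factors) to the change in the transition distribution; (ii) invoke a transportation / change-of-measure inequality (a reverse Pinsker-type or a variance-weighted bound) to convert that distributional change into a KL lower bound. The key observation to exploit is the one flagged in the text: for the optimal value function $V^\star$, the relevant ``fluctuation scale'' entering this transportation inequality can be taken to be any even moment $M_{sa}^k[V^\star]^{2^{-k}}$, all of which are dominated by the span $\md_{sa}[V^\star]$ used in \cite{al2021adaptive}; the sharpest choice is $k(s,a)=\arg\sup_k M_{sa}^k[V^\star]^{2^{-k}}$.

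Concretely, I would proceed in the following steps. (1) Fix $\psi\in\alt_\varepsilon(\phi)$ and, using communicatingness and the definition of $\alt_\varepsilon$, identify a pair $(s,a)$ with $a\neq\pi^\star(s)$ at which $\psi$ and $\phi$ must differ, and quantify the minimal gap-closing: the perturbation must shift a value difference by at least something proportional to $\Delta(s,a)$ (this is where the $\max_{s'}$ over optimal actions and the $(1+\gamma)/(1-\gamma)$ amplification enters, exactly as in \cite{al2021adaptive}, because changing $P(\cdot|s,\pi^\star(s))$ propagates through the Bellman recursion). (2) Apply the moment-based transportation inequality: for a bounded function $f$ and two distributions $P,P'$ with $P\ll P'$, $ (\mathbb{E}_P f - \mathbb{E}_{P'} f)^2 \le c\, \big(\mathrm{something like } M^k_{P}[f]^{2^{-k}} \text{ or its square-root}\big)\cdot \KL(P,P')$, with the constant tracking the golden ratio $\varphi$ (the $\varphi$ and $\varphi^2$ in the statement strongly suggest that the inequality used is one where iterating a self-improving recursion produces the fixed point $\varphi^2=\varphi+1$; I would set up that recursion on the exponent $k$ and solve it). (3) Combine (1) and (2) to get, for this $\psi$, $ \KL_{\phi|\psi}(s,a)\ge \frac{\Delta(s,a)^2}{2+8\varphi^2 M_{sa}^k[V^\star]^{2^{1-k}}}$ at the sub-optimal pair and a companion bound $ \KL_{\phi|\psi}(s',\pi^\star(s'))\ge \frac{\omega(s',\pi^\star(s'))\Delta(s,a)^2(1-\gamma)^2}{C(s')(1+\gamma)^2}$-type estimate at the optimal pairs. (4) Since $\mathbb{E}_{(s,a)\sim\omega}[\KL_{\phi|\psi}(s,a)]\ge \omega(s,a)\KL_{\phi|\psi}(s,a) + \sum_{s'}\omega(s',\pi^\star(s'))\KL_{\phi|\psi}(s',\pi^\star(s'))$ keeps only the relevant terms, and this holds for the specific worst-case $\psi$, taking the infimum over $\psi\in\alt_\varepsilon(\phi)$ and then the reciprocal yields $T_\varepsilon(\omega)\le U(\omega)$ after a maximization over the pair $(s,a)$ and over $s'$, matching \eqref{eq:new_upper_bound}. (5) Finally, handle $\varepsilon\ge 0$ uniformly: the $\varepsilon$-slack only relaxes $\alt_\varepsilon(\phi)$, which can only decrease $T_\varepsilon(\omega)$ relative to $T_0(\omega)$, so the $\varepsilon=0$ bound is the binding one and the same $U(\omega)$ works for all $\varepsilon\ge 0$; alternatively track $\varepsilon$ through step (1) and check it is absorbed.

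The main obstacle, I expect, is step (2): producing the moment-based transportation inequality with the correct constants and, in particular, justifying that one may pass to an \emph{arbitrary} moment index $k$ rather than the variance ($k=1$) or the span ($k=\infty$). The span bound in \cite{al2021adaptive} comes from a blunt $\mathbb{E}_P f - \mathbb{E}_{P'}f \le \|f-\mathbb{E}_P f\|_\infty\,\|P-P'\|_1$ followed by Pinsker; to get the intermediate-moment version I would instead use a Hölder/Cauchy–Schwarz split of $\mathbb{E}_{P'}[(f-\mathbb{E}_P f)(1 - dP/dP')]$ at a tunable exponent, bound the $f$-part by a $2^k$-th moment and the density-ratio part by a KL term via the inequality $|1-x|^{p}\lesssim$ (a function controlled by $x\log x$) — the delicate point being the self-referential nature of the estimate (the constant in front of the moment itself depends on how much mass $\psi$ moves, which is what we are bounding), whose resolution is precisely the golden-ratio fixed point. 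A secondary, more bookkeeping-type obstacle is carrying the Bellman-amplification factor $(1+\gamma)/(1-\gamma)$ and the $C(s')$ definition correctly through the optimal-action pairs, and making sure the $\min$ with the $(1-\gamma)^{-3}$ term present in $H_0^\star$ of \cref{thm:upper_bound} is no longer needed here (indeed \eqref{eq:new_upper_bound} has no such $\min$, because the moment $M^k_{s',\pi^\star(s')}[V^\star]^{2^{1-k}}$ is itself at most $O((1-\gamma)^{-1})$ smaller than the corresponding span term, so the crude $(1-\gamma)^{-3}$ fallback is dominated). I would verify this last point explicitly since it is the one genuinely new simplification relative to \cite{al2021adaptive}.
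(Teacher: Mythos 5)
Your plan follows essentially the same route as the paper's proof: decompose $\alt_\varepsilon(\phi)$ (first into the $\varepsilon=0$ alternative set, then into the per-$(\pi,s,a)$ sets $\{\psi:\ Q_\psi^{\pi}(s,a)>V_\psi^{\pi}(s)\}$), propagate the perturbation through the Bellman equation to obtain the $(1+\gamma)/(1-\gamma)$ amplification along the optimal-action pairs, and replace the span-based change-of-measure step of the earlier bound by a moment-based one. Your guess about the origin of the golden ratio is exactly right: the paper proves $|(P-Q)^\top \mathbf{f}|\le \sqrt{8}\,\varphi\, d_H(P,Q)\,\sup_{k\ge 1}\mathbb{E}_Q[f^{2^k}]^{2^{-k}}$ by iterating a Hellinger/Cauchy--Schwarz bound whose leftover term is the same expression with $f$ replaced by its elementwise square, and the resulting nested radical is summed via Herschfeld's theorem, which is where $\varphi=\sqrt{1+\sqrt{1+\cdots}}$ enters; combined with $d_H^2\le \KL$ this is the analogue of Lemma 4 of the prior work, and your treatment of $\varepsilon\ge 0$ (reduce to the $\varepsilon=0$ alternative set) also matches the paper.

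Two points in your sketch would not survive as written. First, the moment-based transportation inequality is \emph{not} available for an arbitrary single index $k$: the recursion on the exponent generates all dyadic moments, and the only way to close it is to bound them uniformly by $\sup_{k}M_{sa}^{k}[V^\star]^{2^{-k}}$. This is precisely why the theorem fixes $k(s,a)=\arg\sup_k M_{sa}^k[V^\star]^{2^{-k}}$ (the largest normalized moment, not the most favorable one), and why the paper explicitly states it does not know whether the $k=1$ variant $U_1$ is a valid upper bound on $T_\varepsilon$; "passing to an arbitrary moment and picking the sharpest" is not justified by this argument. Second, your step (3) asserts separate lower bounds on $\KL_{\phi|\psi}(s,a)$ and on $\KL_{\phi|\psi}(s',\pi^\star(s'))$ individually; these cannot hold pair-by-pair, since a confusing $\psi$ may leave $(s,a)$ untouched and act only through the pairs $(s',\pi^\star(s'))$. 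The paper instead splits $\Delta_\pi(s,a)$ into fractions $\alpha_1,\alpha_2,\alpha_3$ attributed to the reward perturbation at $(s,a)$, the transition perturbation at $(s,a)$, and the perturbations along $\pi$, then minimizes $\sum_i B_i\alpha_i^2$ over the simplex, which produces the harmonic combination $\left(\sum_i B_i^{-1}\right)^{-1}$ and hence the additive two-term form of $U(\omega)$; with this correction (and keeping only the relevant nonnegative KL terms in the expectation, as you do in step (4)) your outline coincides with the paper's argument.
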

We can observe that in the worst case, the upper bound $U(\omega^\star)$ of the sample complexity lower bound, with $\omega^\star=\arg\inf_\omega U(\omega)$, scales as $O(\frac{|S||A|\max_s\md_{s,\pi^\star(s)}[V^\star]^2}{\deltamin^2 (1-\gamma)^2})$.  Since $\md_{sa}[V^\star]\leq (1-\gamma)^{-1}$, then $U(\omega^\star)$ scales at most as $O(\frac{|S||A|}{\deltamin^2 (1-\gamma)^4})$. However, the following questions arise: (1) Can we select a single value of $k$ that provides a good approximation across all states and actions?
(2) How much does this bound improve on that of \cref{thm:upper_bound}? 
As we illustrate in the example presented in the next subsection, we believe that  actually selecting $k=1$ for all states and actions leads to sufficiently good results. With this choice, we obtain the following approximation:

\begin{equation}\label{eq:new_bound_var_kbar_1}
 U_{1}(\omega)\coloneqq \max_{s,a\neq \pi^\star(s)}\left( \frac{2+8\varphi^2 \var_{sa}[V^\star]}{\omega(s,a)\Delta(s,a)^2} +\max_{s'} \frac{C'(s') (1+\gamma)^2}{\omega(s',\pi^\star(s'))\Delta(s,a)^2(1-\gamma)^2}\right),
\end{equation}
where $C'(s')=\max\left(4,16\gamma^2\varphi^2 \var_{s',\pi^\star(s')}[V^\star]\right)$. $U_1(\omega)$  resembles the term in \cref{thm:upper_bound} (note  that we do not know whether $ U_1$ is a valid upper bound for $T_\varepsilon$). For the second question, our numerical experiments (presented below) suggest that $U(\omega)$ is a tighter upper bound than $U_0(\omega)$.

 \begin{figure}[t]
 	\centering
 	\includegraphics[width=\columnwidth]{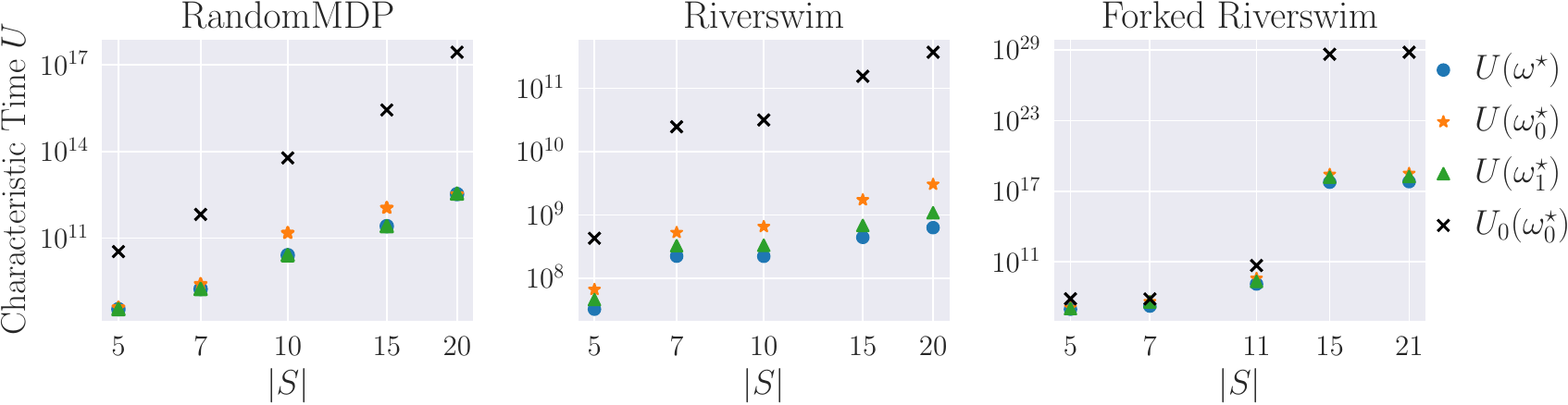}
 	\caption{Comparison of the upper bounds (\ref{eq:original_upper_bound}) and (\ref{eq:new_upper_bound}) for  different sizes of  $S$ and $\gamma=0.95$. We evaluated different allocations using $ U_0(\omega)$ and $U(\omega)$. The allocations are: $\omega_0^\star$ (the optimal allocation in (\ref{eq:original_upper_bound}), $ \omega^\star$ (the optimal allocation in (\ref{eq:new_upper_bound}) and $ \omega_1^\star$ (the optimal allocation in (\ref{eq:new_bound_var_kbar_1}) by setting $ k=1$ uniformly across states and actions). For the random MDP we show the median value across $30$ runs.}
 	\label{fig:example_upper_bound}
 \end{figure}

 \subsection{Example on Tabular MDPs}\label{example:randomly_drawn_mdp_value}
In \cref{fig:example_upper_bound}, we compare the characteristic time upper bounds obtained in the previous subsection. These upper bounds correspond to the allocations $\omega^\star$, $\omega_0^\star$, and $\omega_1^\star$ obtained by minimizing, over $\Delta(S\times A)$\footnote{Results are similar when we account for the navigation constraints. We omit these results for simplicity.}, $U(\omega)$, $U_0(\omega)$, and $U_1(\omega)$, respectively. We  evaluated these characteristic times on various MDPs: (1) a random MDP (see \ifdefined \includeappendix \cref{appA} \else Sec. A in the appendix\fi); (2) the \texttt{RiverSwim} environment \cite{strehl2008analysis}; (3) the \texttt{Forked RiverSwim}, a novel environment  where the agent needs to constantly explore two different states to learn the optimal policy (compared to the \texttt{RiverSwim} environment, the sample complexity is higher; refer to Appendix A for a complete description).

We note that across all plots, the  optimal allocation $\omega_0^\star$ has a quite large characteristic time (black cross). Instead, the optimal allocation $\omega^\star$ (blue circle) computed using our new upper bound (\ref{eq:new_upper_bound}) achieves a lower characteristic time. When we evaluate $\omega_0^\star$ on the new bound (\ref{eq:new_upper_bound}) (orange star), we observe similar characteristic times.

Finally, to verify that we can indeed  choose $ k=1$ uniformly across states and actions, we evaluated  the characteristic time $ \omega_1^\star$ computed using (\ref{eq:new_bound_var_kbar_1}) (green triangle). Our results indicate that the performance is not different from those obtained with $\omega^\star$, suggesting that the quantities of interest (gaps and variances) are enough to learn an efficient exploration allocation. We investigate the choice of $k$ in more detail in \ifdefined \includeappendix \cref{appA} \else Appendix A\fi.

\section{Model-Free Active Exploration Algorithms}\label{sec:obpi}
In this section we present {\sc MF-BPI},  a model-free exploration algorithm that leverages the optimal allocations obtained through the previously derived upper bound of the sample complexity lower bound. We first present an upper bound $\tilde{U}(\omega)$ of $U(\omega)$, so that it is possible to derive a closed form solution of the optimal allocation (an idea previously proposed in \cite{al2021adaptive}). 

\begin{proposition}\label{corollary:upper_bound_new_bound} Assume that $\phi$ has a unique optimal policy $\pi^\star$. For all $\omega\in\Delta(S\times A)$, we have: 
$$
U(\omega)\leq \tilde U(\omega) := \max_{s,a\neq \pi^\star(s)} \frac{ H(s,a)}{\omega(s,a)}+\frac{ H}{\min_{s'}\omega(s',\pi^\star(s'))},
$$
with $ H(s,a) \coloneqq \frac{2+8\varphi^2  M_{sa}^{ k}[V^\star]^{2^{1- k}} }{\Delta(s,a)^2}$ and $ H \coloneqq \frac{\max_{s'} C(s') (1+\gamma)^2}{ \deltamin^2(1-\gamma)^2}$.
The minimizer $\tilde\omega^\star \coloneqq \arg\inf_{\omega}\tilde U(\omega)$  satisfies $\tilde\omega^\star(s,a) \propto  H(s,a) $ for $a\neq\pi^\star(s)$ and $\tilde\omega^\star(s,\pi^\star(s)) \propto \sqrt{ H\sum_{s,a\neq\pi^\star(s)} H(s,a)/|S|}$ otherwise. 
\end{proposition}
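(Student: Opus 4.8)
The plan is to first establish the inequality $U(\omega)\le\tilde U(\omega)$ by a term-by-term relaxation of the bound in \cref{thm:upper_bound_T_new}, and then to compute $\arg\inf_\omega\tilde U(\omega)$ by a two-level water-filling argument that reduces the problem to a one-dimensional convex minimization over the total mass placed on sub-optimal actions.

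For the inequality, I would fix a sub-optimal pair $(s,a)$ (i.e.\ $a\neq\pi^\star(s)$) and inspect the corresponding summand inside the outer maximum in \eqref{eq:new_upper_bound}. Its first term is exactly $H(s,a)/\omega(s,a)$ by definition of $H(s,a)$. For the second term, I factor $\Delta(s,a)^{-2}$ out of the maximum over $s'$, use $\Delta(s,a)\ge\deltamin$, and bound $\max_{s'}\frac{C(s')}{\omega(s',\pi^\star(s'))}\le\frac{\max_{s'}C(s')}{\min_{s'}\omega(s',\pi^\star(s'))}$ (numerator increased, denominator decreased pointwise). Together with the definition of $H$, this replaces the second term by the $(s,a)$-independent quantity $H/\min_{s'}\omega(s',\pi^\star(s'))$, which can then be pulled out of the outer maximum, yielding $U(\omega)\le\max_{s,a\neq\pi^\star(s)}\frac{H(s,a)}{\omega(s,a)}+\frac{H}{\min_{s'}\omega(s',\pi^\star(s'))}=\tilde U(\omega)$.

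For the minimizer, I parametrize any $\omega\in\Delta(S\times A)$ by $m:=\sum_{s,a\neq\pi^\star(s)}\omega(s,a)\in[0,1]$, so $\sum_s\omega(s,\pi^\star(s))=1-m$. The mediant inequality $\max_i a_i/b_i\ge(\sum_i a_i)/(\sum_i b_i)$ for positive $b_i$ gives $\max_{s,a\neq\pi^\star(s)}\frac{H(s,a)}{\omega(s,a)}\ge\frac{G}{m}$ with $G:=\sum_{s,a\neq\pi^\star(s)}H(s,a)$, with equality iff $\omega(s,a)\propto H(s,a)$ on the sub-optimal pairs; and $\min_s\omega(s,\pi^\star(s))\le\frac{1-m}{|S|}$ (the min is at most the average), hence $\frac{H}{\min_s\omega(s,\pi^\star(s))}\ge\frac{H|S|}{1-m}$, with equality iff $\omega(s,\pi^\star(s))$ is constant in $s$. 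Therefore $\tilde U(\omega)\ge f(m):=\frac{G}{m}+\frac{H|S|}{1-m}$, and this bound is attained simultaneously by the allocations just described. Since $f$ is strictly convex on $(0,1)$ and diverges at the endpoints, it has a unique stationary minimizer $m^\star$ solving $\frac{G}{(m^\star)^2}=\frac{H|S|}{(1-m^\star)^2}$, i.e.\ $m^\star=\frac{\sqrt{G}}{\sqrt{G}+\sqrt{H|S|}}$ and $1-m^\star=\frac{\sqrt{H|S|}}{\sqrt{G}+\sqrt{H|S|}}$. Substituting back, $\omega(s,a)=\frac{H(s,a)\,m^\star}{G}$ on sub-optimal pairs and $\omega(s,\pi^\star(s))=\frac{1-m^\star}{|S|}$; dividing both by the common constant $\sqrt{G}\,(\sqrt{G}+\sqrt{H|S|})$ gives $\tilde\omega^\star(s,a)\propto H(s,a)$ and $\tilde\omega^\star(s,\pi^\star(s))\propto\sqrt{HG/|S|}=\sqrt{H\sum_{s,a\neq\pi^\star(s)}H(s,a)/|S|}$, as claimed.

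I do not anticipate a genuine obstacle: the inequality is a chain of monotone relaxations, and the optimization is convex after the reduction to $f(m)$. The one point needing care is ensuring the two water-filling steps are tight at the \emph{same} $\omega$ (so $f(m)$ is achieved, not merely a lower bound) and then tracking the normalization constants so that the two proportionality statements share a single normalizer; one should also remark that, because $H(s,a)>0$ and $H>0$, positivity of $\omega$ on all the relevant pairs is forced at the optimum, so the interior formulas apply.
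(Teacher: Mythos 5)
Your proof of the inequality $U(\omega)\le\tilde U(\omega)$ is correct and is essentially the paper's own argument: the first term is $H(s,a)/\omega(s,a)$ by definition, and the second term is relaxed by $\Delta(s,a)\ge\deltamin$ together with $\max_{s'}\frac{C(s')}{\omega(s',\pi^\star(s'))}\le\frac{\max_{s'}C(s')}{\min_{s'}\omega(s',\pi^\star(s'))}$, after which the resulting $(s,a)$-independent term is pulled out of the outer maximum. For the minimizer, the paper only states the closed form (cf.\ \cref{corollary:generative_solution}, "obtained by simply solving the optimization problem," following the analogous computation in \cite{al2021adaptive}), so your explicit derivation fills in a step the paper omits; it is sound: the mediant inequality gives $\max_{s,a\neq\pi^\star(s)}H(s,a)/\omega(s,a)\ge G/m$ with equality at proportional allocations, the min-versus-average bound gives $H/\min_s\omega(s,\pi^\star(s))\ge |S|H/(1-m)$ with equality at the uniform split, the two equality cases are compatible in a single $\omega$, and minimizing the strictly convex $f(m)=G/m+|S|H/(1-m)$ yields exactly the paper's allocation with normalizer $\Gamma=G+\sqrt{|S|HG}$ and optimal value $\bigl(\sqrt{G}+\sqrt{|S|H}\bigr)^2$, matching the statement. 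The only point worth making explicit, which you flag, is that $H(s,a),H>0$ forces the optimum to lie in the interior of the relevant coordinates so the stationarity computation for $m^\star$ applies.
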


In the MF-BPI algorithm, we estimate the gaps $\Delta(s,a)$ and $M_{sa}^{ k}[V^\star]$ for a fixed small value of $k$ (we later explain how to do this in a model-free manner.) and compute the corresponding allocation $\tilde\omega^\star$. This allocation drives the exploration under MF-BPI. Using this design approach, we face two issues:

{\bf (1) Uniform $k$ and regularization.} It is impractical to estimate $M_{sa}^{ k}[V^\star]$ for multiple values of $k$. Instead, we fix a small value of $k$ (\emph{e.g.}, $k=1$ or $k=2$) for all state-action pairs (refer to the previous section for a discussion on this choice). Then, to avoid  excessively small values of the gaps in the denominator, we regularize the allocation $\tilde\omega^\star$ by replacing, in the expression of $H(s,a)$ (resp. $H_{\min}$), $\Delta(s,a)$ (resp. $\Delta_{\min}$) by $(\Delta(s,a)+\lambda)$ (resp. $(\Delta_{\min}+\lambda)$) for some $\lambda>0$.

{\bf (2) Handling parametric uncertainty via bootstrapping.} The quantities $\Delta(s,a)$ and $M_{sa}^k[V^\star]$ required to compute $\tilde\omega^\star $ remain unknown during training, and we adopt the Certainty Equivalence principle, substituting the current estimates of these quantities to compute  the exploration strategy. By doing so, we are inherently introducing parametric uncertainty into these terms that is not taken into account by the allocation $\tilde \omega^\star$. To deal with this uncertainty, the traditional method, as used e.g. in \cite{al2021adaptive,marjani2021navigating}), involves using $\epsilon$-soft exploration policies to guarantee that all state-action pairs are visited infinitely often. This ensures that the estimation errors vanish as time grows large. In practice, we find this type of forced exploration inefficient. In MF-BPI, we opt for a bootstrapping approach to manage parametric uncertainties, which can augment the traditional forced exploration step, leading to more principled exploration.

\subsection{Exploration in tabular MDPs.}
\begin{algorithm}[t]
	\caption{Boostrapped \textsc{MF-BPI} (Boostrapped Model Free Best Policy Identification)}
	\label{algo:boostrapped_mfbpi}
\begin{algorithmic}[1]
	\REQUIRE Parameters $(\lambda,  k,p)$; ensemble	 size $B$;  learning rates $\{(\alpha_{t},\beta_{t})\}_{t}$.
	\STATE Initialize  $Q_{1,b}(s,a)\sim {\cal U}([0,1/(1-\gamma)])$ and $M_{1,b}(s,a)\sim {\cal U}([0,1/(1-\gamma)^{2^{ k}}])$ for all $(s,a)\in S\times A$ and $b\in[B]$. 
	\FOR{$t=0,1,2,\dots,$}
		\STATE Bootstrap a sample $(\hat Q_t, \hat M_t)$ from the ensemble, and compute the allocation  $\omega^{(t)}$ using  \cref{corollary:upper_bound_new_bound}. Sample $a_t\sim \omega^{(t)}(s_t,\cdot)$; observe $(r_t,s_{t+1})\sim q(\cdot|s_t,a_t)\otimes P(\cdot|s_t,a_t)$.
		\FOR{$b=1,\dots,B$}
			\STATE With probability $p$, using the experience $(s_t,a_t,r_t,s_{t+1})$, update  $Q_{t,b}$ and $M_{t,b}$  using \cref{eq:stochastic_approximation_step_qvalues,eq:stochastic_approximation_step_mvalues}.
		\ENDFOR
	\ENDFOR
\end{algorithmic}
\end{algorithm}

The pseudo-code of MF-BPI for tabular MDPs is presented in Algorithm \ref{algo:boostrapped_mfbpi}. In round $t$, MF-BPI explores the MDP using the allocation $\omega^{(t)}$ estimating $\tilde{\omega}^\star$. To compute this allocation, we use \cref{corollary:upper_bound_new_bound} and need (i) the sub-optimality gaps $\Delta(s,a)$, which can be easily derived from the $Q$-function; (ii) the $2^k$-th moment $M_{sa}^k[V^\star]$, which can always be learnt by means of stochastic approximation. In fact, for any  Markovian policy $\pi$ and pair $(s,a)$ we have
$
   M_{sa}^{k}[V_\phi^\pi]=\frac{1}{\gamma^{2^k}}\mathbb{E}_{s'\sim P(\cdot|s,a)}[\delta^\pi(s,a,s')^{2^k}],
$
where $\delta^\pi(s,a,s')=r(s,a)+\gamma \mathbb{E}_{a'\sim \pi(\cdot|s')}[Q^\pi(s',a')] - Q^\pi(s,a)$ is a variant of the TD-error. 
MF-BPI then uses an asynchronous two-timescale stochastic approximation algorithm to learn $Q^\star$ and $M_{sa}^{ k}[V^\star]$,
\begin{align}
	Q_{t+1}(s_t,a_t) &= Q_t(s_t,a_t) + \alpha_t(s_t,a_t)\left(r_t+\gamma \max_a Q_t(s_{t+1},a)-Q_t(s_t,a_t)\right),\label{eq:stochastic_approximation_step_qvalues}\\
	M_{ t+1}(s_t,a_t) &= M_{t}(s_t,a_t) + \beta_t(s_t,a_t)\left(\left(\delta_t'/\gamma\right)^{2^{ k}} - M_{ t}(s_t,a_t)\right),\label{eq:stochastic_approximation_step_mvalues}
\end{align}
where  $\delta_t'= r_t+\gamma \max_a Q_{t+1}(s_{t+1},a)-Q_{t+1}(s_t,a_t)$, and $\{(\alpha_t,\beta_t)\}_{t\geq 0}$ are learning rates  satisfying  $\sum_{t\geq 0}\alpha_t(s,a)=\sum_{t\geq 0}\beta_t(s,a)=\infty, \sum_{t\geq 0}(\alpha_t(s,a)^2+\beta_t(s,a)^2)\leq \infty$, and $\frac{\alpha_t(s,a)}{\beta_t(s,a)}\to 0$.


MF-BPI uses bootstrapping to handle parametric uncertainty. We maintain an ensemble of $(Q,M)$-values, with $B$ members, from which we sample $(\hat Q_t,\hat M_t)$ at time $t$. This sample is generated by sampling a uniform random variable $\xi\sim {\cal U}([0,1])$ and, for each $(s,a)$ set $\hat Q_t(s,a)={\rm Quantile}_\xi({Q_{t,1}(s,a), \dots, Q_{t,B}(s,a)})$ (assuming a linear interpolation). This method is akin to sampling from the parametric uncertainty distribution (we perform the same operation also to compute $\hat M_t$). This  sample is  used to compute the allocation $\omega^{(t)}$ using \cref{corollary:upper_bound_new_bound} by setting  $\Delta_{t}(s,a) = \max_{a'} \hat Q_{t}(s,a')-\hat Q_{t}(s,a)$, $\pi_t^\star(s)=\argmax_a \hat Q_t(s,a)$ and $\deltaminestimatet{t} = \min_{s,a\neq \pi_t^\star(s)} \Delta_t(s,a)$. Note that, the allocation $\omega^{(t)}$ can be mixed with a uniform policy, to guarantee asymptotic convergence of the estimates. Upon observing an experience, with probability $p$, MF-BPI updates a member of the ensemble using this new experience. $p$ tunes the rate at which the models are updated, similar to sampling with replacement, speeding up the learning process. Selecting a high value for $p$ compromises the estimation of the parametric uncertainty, whereas choosing a low value may slow down the learning process.

{\bf Exploration without bootstrapping?} To illustrate the need for our bootstrapping approach, we tried to  use the allocation $\omega^{(t)}$ mixed with a uniform allocation. In \cref{fig:forced_generative_performance}, we show the results on Riverswim-like environments with $5$ states.  While forced exploration ensures infinite visits to all state-action pairs, this guarantee only holds asymptotically. As a result, the allocation mainly focuses on the current MDP estimate, neglecting other plausible MDPs that could produce the same data. This makes the forced exploration approach too sluggish for effective convergence, suggesting its inadequacy for rapid policy learning. These results highlight the need to account for the uncertainty in $Q,M$ when computing the allocation.

\begin{figure}[t]
	\centering
	\includegraphics[width=\linewidth]{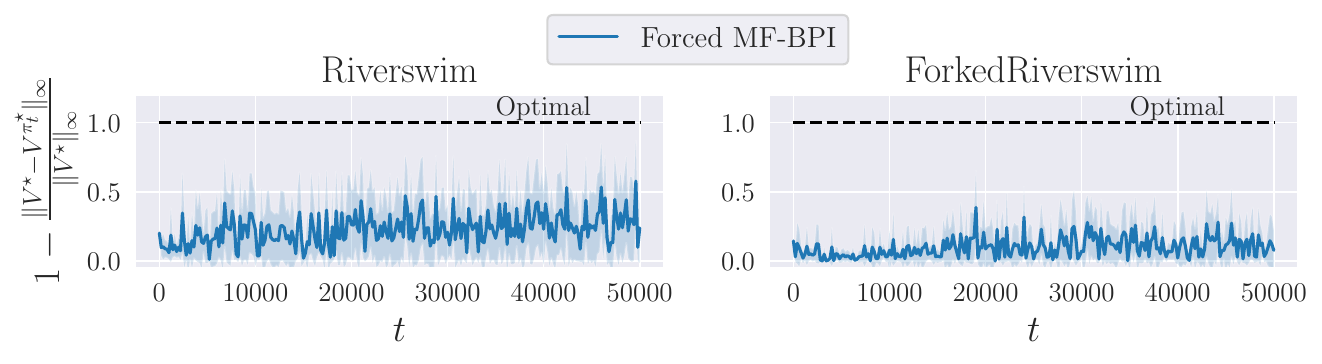}
	\caption{Forced exploration example with $5$ states. We explore according to $\omega^{(t)}(s_t,a) = (1-\epsilon_t) \frac{\tilde{\omega}_t^\star(s_t,a)}{\sum_{a'}\tilde{\omega}_t^\star(s_t,a')} + \epsilon_t \frac{1}{|A|}$, mixing the estimate of the allocation $\tilde{\omega}^\star$ from \cref{corollary:upper_bound_new_bound} with a uniform policy, with $\epsilon_t = \max(10^{-3}, 1/N_t(s_t))$ where $N_t(s)$ indicates the number of times the agent visited state $s$ up to time $t$. Shade indicates $95\%$ confidence interval.}
	\label{fig:forced_generative_performance}
\end{figure}
\begin{algorithm}[b]
\caption{\textsc{DBMF-BPI} (Deep Bootstrapped Model Free BPI)}\label{algo:dbomfbpi}
\small
\begin{algorithmic}[1]
	\REQUIRE Parameters $(\lambda,  k)$; ensemble size $B$;  exploration rate $\{\epsilon_t\}_t$;  estimate $\deltaminestimatet{0}$; mask probability $p$.
	\STATE Initialize replay buffer ${\cal D}$, networks  $Q_{\theta_b}, M_{\tau_b}$  and targets $Q_{\theta'_b}$ for all $b\in[B]$.
	\FOR{$t=0,1,2,\dots,$}
	\STATE {\bf Sampling step.}
	\begin{ALC@g}
		\STATE Compute allocation $\omega^{(t)} \gets {\tt ComputeAllocation}(s_t,\{Q_{\theta_b} ,M_{\tau_b}\}_{b\in [B]},\deltaminestimatet{t},\gamma,\lambda, k, \epsilon_t)$.
		\STATE Sample $a_t\sim \omega^{(t)}(s_t,\cdot)$ and observe $(r_t,s_{t+1})\sim q(\cdot|s_t,a_t)\otimes P(\cdot|s_t,a_t)$.
		\STATE Add transition $z_t=(s_t,a_t,r_t,s_{t+1})$ to the replay buffer ${\cal D}$.
	\end{ALC@g}
	\STATE {\bf Training step.}
	\begin{ALC@g}
		\STATE  Sample a batch ${\cal B}$ from ${\cal D}$, and with probability $p$ add the $i^{th}$ experience in ${\cal B}$ to a sub-batch ${\cal B}_b$, $\forall b\in [B]$.   Update the $(Q,M)$-values  of the $b^{th}$ member in the ensemble using ${\cal B}_b$: $\{Q_{\theta_b},Q_{\theta_b'},M_{\tau_b}\}_{b\in [B]} \gets {\tt Training}(\{{\cal B}_b,Q_{\theta_b},Q_{\theta_b'},M_{\tau_b}\}_{b\in [B]})$.
		\STATE Update estimate $\deltaminestimatet{t+1} \gets {\tt EstimateMinimumGap}(\deltaminestimatet{t}, {\cal B}, \{Q_{\theta_b}\}_{b\in[B]})$.
	\end{ALC@g}
	\ENDFOR
\end{algorithmic}
\end{algorithm}
\subsection{Extension to Deep Reinforcement Learning}
To extend bootstrapped \textsc{MF-BPI} to continuous MDPs,  we propose \textsc{DBMF-BPI} (see \cref{algo:dbomfbpi}, or \ifdefined\includeappendix \cref{appB} \else Appendix B\fi). \textsc{DBMF-BPI} uses the mechanism of prior networks from  \textsc{BSP} \cite{osband2018randomized}(bootstrapping with additive prior) to account for uncertainty that does not originate from the observed data.
As before, we keep an ensemble $\{Q_{\theta_1},\dots,Q_{\theta_B}\}$ of $Q$-values (with their target networks) and an ensemble $\{M_{\tau_1},\dots,M_{\tau_B}\}$ of $M$-values, as well as their prior networks.
We use the same procedure as in the tabular case to compute $(\hat Q_t, \hat M_t)$ at time $t$, except that we sample $\xi \sim {\cal U}([0,1])$ every $T_s\propto (1-\gamma)^{-1}$ training steps (or at the end of an episode) to make the training procedure more stable. The quantity $\hat Q_t$ is used to compute  $\pi_{t}^\star(s_t)$ and  $\Delta_{t}(s_t,a)$. We estimate $\deltaminestimatet{t}$  via stochastic approximation, with the minimum gap from the last batch of transitions sampled from the replay buffer serving as a target. To derive the exploration strategy, we compute $H_{t}(s_t,a) = \frac{2+8\varphi^2 \hat M_{t}(s_t,a)^{2^{1- k}} }{(\Delta_{t}(s_t,a)+\lambda)^2}$ and $H_t=\frac{ 4 (1+\gamma)^2\max(1,4\gamma^2\varphi^2 \hat M_{t}(s_t,\pi_{t}^\star(s_t))^{2^{1- k}})}{ (\deltaminestimatet{t}+\lambda)^2(1-\gamma)^2}$. Next, we set the allocation $\omega_o^{(t)}$ as follows:  $\omega_o^{(t)}(s_t,a) = H_{t}(s_t,a)$ if $a\neq \pi_{t}^\star(s_t)$ and $\omega_o^{(t)}(s_t,a) = \sqrt{H_{t} \sum_{a\neq \pi_{t}^\star(s_t)} H_{t}(s_t,a)}$ otherwise. Finally, we obtain an $\epsilon_t$-soft exploration policy $\omega^{(t)}(s_t,\cdot)$ by mixing $\omega_o^{(t)}(s_t,\cdot)/\sum_{a} \omega_o^{(t)}(s_t,a)$ with a uniform distribution (using an exploration parameter $\epsilon_t$).

\section{Numerical Results}\label{sec:numerical_results}

We evaluate the performance of MF-BPI on benchmark problems and compare it against state-of-the-art methods (details can be found in \ifdefined \includeappendix \cref{appA} \else Appendix A\fi).

\textbf{Tabular MDPs.} In the tabular case, we compared various algorithms on the {\tt Riverswim} and {\tt Forked Riverswim} environments. We evaluate \textsc{MF-BPI} with (1) bootstrapping and with (2) the forced exploration step using an $\epsilon$-soft exploration policy, \textsc{MDP-NaS} \cite{marjani2021navigating}, {\sc PSRL} \cite{osband2013more} and \textsc{Q-UCB} \cite{jin2018q,dong2019q}. For {\sc MDP-NaS},  the model of the MDP was initialized in an optimistic way (with additive smoothing). In both environments, we varied the size of the state space. In \cref{fig:riverswim_results}, we show $1- \frac{\|V^\star-V^{\pi_T^\star}\|_\infty}{\|V^\star\|_\infty}$, a performance measure for the estimated policy $\pi_T^\star$ after $T=|S|\times 10^4$ steps with $\gamma=0.99$. Results (the higher the better) indicate that bootstrapped {\sc MF-BPI} can compete with model-based and model-free algorithms on hard-exploration problems, without resorting to expensive model-based procedures. Details of the experiments, including the initialization of the algorithms, are provided in \ifdefined \includeappendix \cref{appA} \else Appendix A\fi.

\begin{figure}[h]
	\centering
	\includegraphics[width=\linewidth]{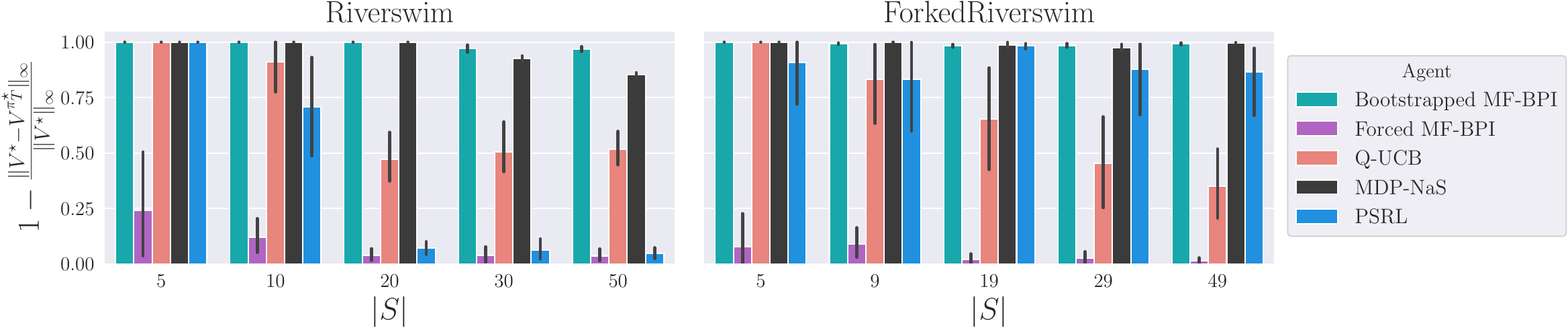}
	\caption{Evaluation of the estimated optimal policy $\pi_T^\star$ after $T$ steps for {\sc MF-BPI, Q-UCB, MDP-NaS} and {\sc PSRL}. Results are averaged across 10 seeds and lines indicate $95\%$ confidence intervals. }
	\label{fig:riverswim_results}
\end{figure}

\begin{figure}[b]
	\centering
	\includegraphics[width=0.45\linewidth]{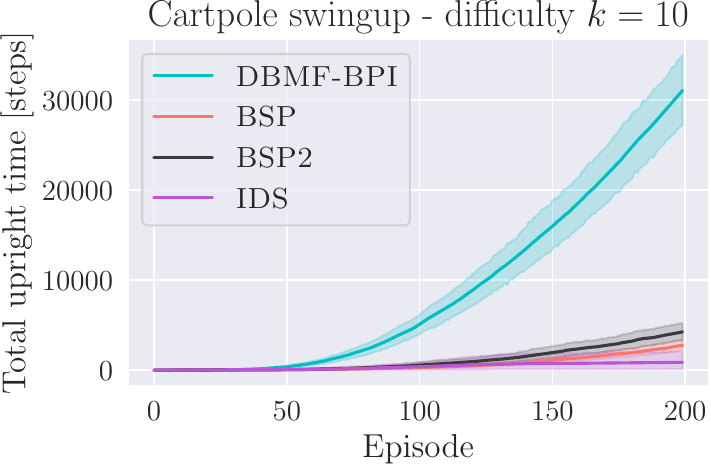}
	\includegraphics[width=0.45\linewidth]{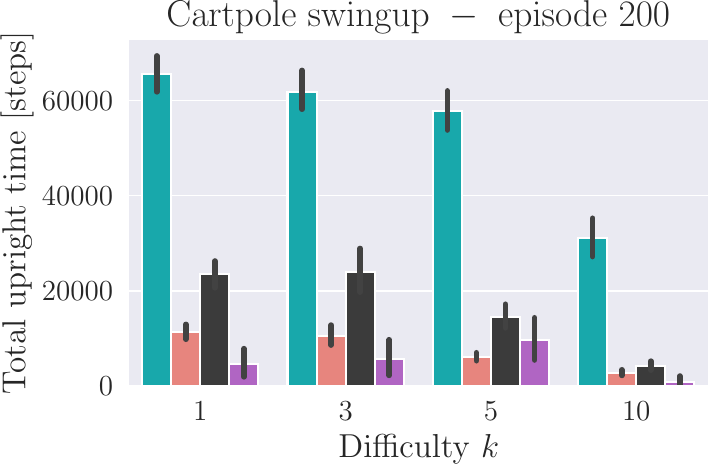}
	\caption{Cartpole swingup problem. On the left: total upright time at a difficulty level of $k=10$. On the right: total upright time after $200$ episodes for different difficulties $k$. To observe a positive reward, the pole's angle must satisfy $\cos(\theta) > k/20$, and the cart's position should satisfy $|x|\leq 1-k/20$. Bars and shaded areas indicate $95\%$ confidence intervals. }
	\label{fig:cartpole_results}
\end{figure}
\textbf{Deep RL.}  In environments with continuous state space, we compared  \textsc{DBMF-BPI} with \textsc{BSP} \cite{osband2019deep,osband2018randomized} (Bootstrapped DQN with
randomized priors) and \textsc{IDS} \cite{nikolov2018information} (Information-Directed Sampling). We also evaluated \textsc{DBMF-BPI}  against {\sc BSP2}, a variant of {\sc BSP} that uses the same masking mechanism as \textsc{DBMF-BPI} for updating the ensemble. These methods were tested on challenging exploration problems from the DeepMind behavior suite \cite{osband2020bsuite} with varying levels of difficulty: (1) a stochastic version of  DeepSea and (2) the Cartpole swingup problem. The DeepSea problem includes a $5\%$ probability of the agent slipping, {i.e.}, that an incorrect action is executed, which increases the aleatoric variance.
\begin{figure}[t]
	\centering
	\includegraphics[width=.9\linewidth]{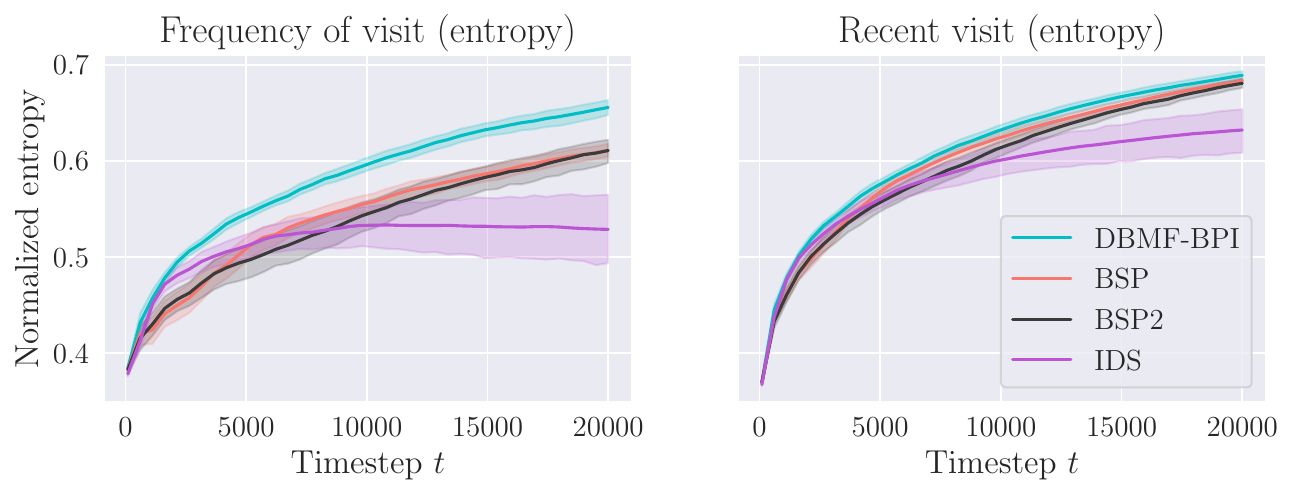}
    \caption{Exploration in Cartpole swingup for $k=5$. On the left, we show the entropy of visitation frequency for the state space \( (x, \dot{x}, \theta, \dot{\theta}) \) during training. On the right,  we show a measure of the dispersion of the most recent visits; smaller values indicate that the agent is less explorative as \( t \) increases.}
	\label{fig:cartpole_entropy}
\end{figure}

The results for the Cartpole swingup problem are depicted in Figure \ref{fig:cartpole_results} for various difficulty levels $k$ (see also \ifdefined\includeappendix \cref{sec:cartpole_appendix} \else Appendix A.5 \fi for more details), demonstrating the ability of {\sc DBMF-BPI} to quickly learn an efficient policy. While {\sc BSP} generally performs well, there is a notable difference in performance when compared to {\sc DBMF-BPI}. For a fair comparison, we used the same network initialization across all methods, except for IDS. Untuned, IDS performed poorly; proper initialization improved its performance, but results remained unsatisfactory. In  \cref{fig:cartpole_entropy}, we present two exploration metrics for difficulty $k=5$. The frequency of visits measures the uniformity and dispersion of visits across the state space, while the second metric evaluates the recency of visits to different regions, capturing how frequently the methods keep visiting previously visited states (a smaller value indicates that the agent tends to concentrate on a specific region of the state space). For detailed analysis, please refer to appendix A.
 
\begin{figure}[h]
	\centering
	\includegraphics[width=0.45\linewidth]{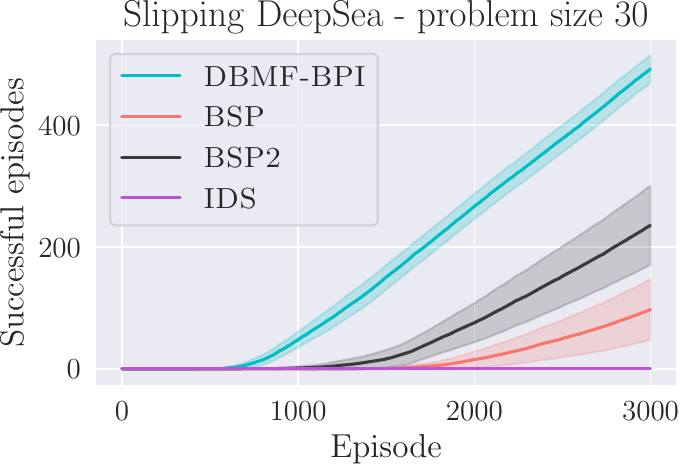}
 \includegraphics[width=0.45\linewidth]{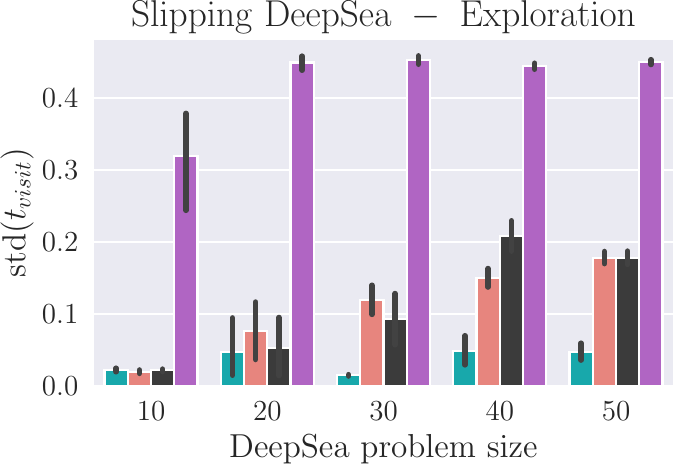}
	\caption{Slipping DeepSea problem. On the left: total number of successful episodes (\emph{i.e.}, that the agent managed to reach the final reward) for a grid with $30^2$ input features. On the right: standard deviation of $t_{{\rm visit}}$ at the last episode, depicting how much each agent explored (the lower the better).}
	\label{fig:deepsea_results}
\end{figure}

For the slipping DeepSea problem, results are depicted in Fig. \ref{fig:deepsea_results} (see also  \ifdefined\includeappendix \cref{sec:deepsea_appendix} \else Appendix A.4 \fi for more details). Besides the number of successful episodes, we also display the standard deviation of $(t_{{\rm visit}})_{ij}$ across all cells $(i,j)$, where $(t_{{\rm visit}})_{ij}$ indicates the last timestep $t$ that a cell $(i,j)$ was visited (normalized by $NT$, the product of the grid size, and the number of episodes). The right plot shows $\mathrm{std}(t_{{\rm visit}})$ for different problem sizes, highlighting the good exploration properties of {\sc DBMF-BPI}. Additional details and exploration metrics can be found in \ifdefined\includeappendix \cref{appA} \else Appendix A\fi.

\section{Conclusions}\label{sec:conclusions}
In this work, we studied the problem of exploration in Reinforcement Learning and presented {\sc MF-BPI},  a  model-free solution for both tabular and continuous state-space MDPs.
To derive this method, we established a novel approximation of the instance-specific lower bound necessary for identifying nearly-optimal policies. Importantly, this approximation depends only on quantities learnable via stochastic approximation, paving the way towards model-free methods. Numerical results on hard-exploration problems highlighted the effectiveness of our approach for learning efficient policies over state-of-the-art methods.
\newpage
\section*{Acknowledgments}
This research was supported by the Swedish Foundation for Strategic Research through the CLAS project (grant RIT17-0046) and partially supported by the Wallenberg AI, Autonomous Systems and Software Program
(WASP) funded by the Knut and Alice Wallenberg Foundation. The authors would also like to thank the anonymous reviewers for their valuable and insightful feedback.  On a personal note, Alessio Russo wishes to personally thank Damianos Tranos, Yassir Jedra, Daniele Foffano, and Letizia Orsini for their invaluable assistance in reviewing the manuscript.
\bibliography{main}
\bibliographystyle{plainnat}
\ifdefined \includeappendix
 \newpage
 \appendix
 \part*{Appendix}
 \tableofcontents
\newpage
\subsection*{Appendix introduction}
We start by examining the wider impact of our work and acknowledging its limitations. This provides a balanced view of our contribution and points out areas for future research.

Next, we turn to the numerical results. Here, we give a more detailed account of our findings and include additional results for further clarity. We also introduce and describe the new Forked RiverSwim environment, an advanced version of the existing RiverSwim model, which has a larger  sample complexity.

In the subsequent section, we break down the algorithms used in our study. This gives a deeper understanding of the methods underpinning our research.

We wrap up the appendix by providing all the proofs that support our conclusions. 

 \subsection*{Broader impact}
 This paper primarily focuses on foundational research in reinforcement learning, specifically the exploration problem, and proposes a novel model-free exploration strategy. While our work does not directly engage with societal impact considerations, we acknowledge the importance of considering the broader implications of AI technologies. As our proposed method improves the efficiency of reinforcement learning algorithms, it could potentially be applied in a wide range of contexts, some of which could have societal impacts. For instance, reinforcement learning is used in decision-making systems, which could include areas like healthcare, finance, and autonomous vehicles, where biases or errors could have significant consequences. Hence, while the direct societal impact of our work may not be immediately apparent, we strongly encourage future researchers and practitioners who apply these techniques to carefully consider the ethical implications and potential negative impacts in their specific use-cases. The responsible use of AI, including the mitigation of bias and the respect for privacy, should always be a priority.
 \subsection*{Limitations}
 While our work presents significant advancements in the area of reinforcement learning, it also has its limitations that need to be acknowledged:

 \begin{itemize}
 	\item  {\bf Assumptions}: Our approach relies on  the assumption that the MDP is communicating. The instance-specific lower bound we propose may not be as effective if this assumption does not hold.
 	\item  {\bf Scalability}: Our method, despite being model-free, still relies on stochastic approximations, which may not scale well with the complexity and size of certain MDPs.
 	\item  {\bf Comparison with Model-Based Approaches}: While we have shown that our approach performs competitively with existing model-based exploration algorithms in hard-exploration environments, a comprehensive comparison across a wider range of environments is needed. It is possible that our method may not perform as well in some MDPs as the model-based approaches.
 	\item  {\bf Bootstrapping}: Although bootstrapping has proven to be an effective technique, its usage is yet to be fully understood in RL applications. To achieve a more profound theoretical comprehension, a comprehensive analysis is necessary.
 \end{itemize}

 These limitations present opportunities for future research and the continued evolution of efficient exploration in reinforcement learning.

 \newpage

\section{Numerical Results}\label{appA}

The appendix begins with the numerical results. We first introduce the Forked RiverSwim environment, a more complex variant of the traditional RiverSwim model.

Our discussion continues with a detailed exposition of \cref{example:randomly_drawn_mdp_value}, providing further experimental details. We conclude this section with additional findings related to both the tabular case and two specific problems: the CartPole Swing-Up and the Slipping DeepSea.

\subsection{The Forked Riverswim Environment}\label{subsec:forked_riverswim}
The \texttt{Forked RiverSwim}$(N)$ is a novel environment (see also \cref{fig:forked_riverswim}) where the agent needs to constantly explore two different states, $(s_g,s_g')$, to learn the optimal policy. The number of states is $2N-1$, and there are $3$ actions.

The environment is similar to  \texttt{RiverSwim}, but the initial state $s_1$ forks into two rivers: the final state in both branches of the rivers ($s_g$ and $s_g'$) have a similar  high reward. Furthermore, the agent can deterministically switch between the two branches at any intermediate state. Intermediate states do not give any reward. Moreover, a little subtlety is that the agent can exploit the deterministic transition between $s_1$ and $s_2'$ to deterministically transition to $s_2$ (although this has a small effect as $N$ grows large).

Lastly, the Bernoulli rewards in $s_g$ and $s_g'$, which are the {\it highly rewarding} states, are quite similar ($1$ vs $0.95$). Therefore, an optimal policy that starts in $s_1$ should achieve a slightly better reward than the optimal policy on the {\tt RiverSwim} environment with $N+1$ states (due to the fact that the transition to $s_2$ from $s_1$ can be made in a deterministic way).

Due to these reasons, this variant introduces additional complexity into the decision-making process. It is reasonable that a learning algorithm  may learn an approximately good greedy policy in a short time-span, but not exactly the optimal one. In fact, we may expect an algorithm to take longer (compared to {\tt Riverswim}) to learn the true optimal policy. Finally, always compared to  \texttt{RiverSwim}, the sample complexity is of orders of magnitude higher, as also depicted in \cref{fig:example_upper_bound}. For a Python implementation, please refer to the GitHub repository of this manuscript.
\begin{figure}[h]
	\centering
	\includegraphics[width=\linewidth]{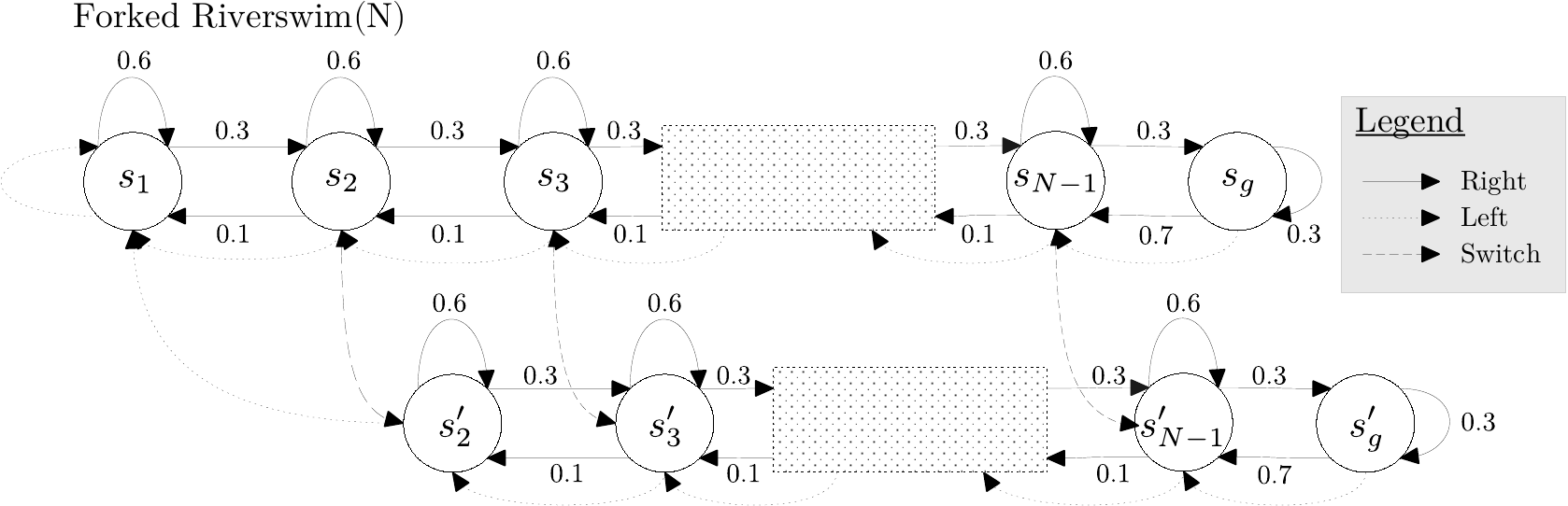}
	\caption{\texttt{Forked Riverswim}$(N)$  with $|S|=2N-1$ states. When taking action \texttt{left} in $s_1$ the agent observes a Bernoulli reward $r$ of parameter $0.05$. When taking action \texttt{right} in $s_g$ (resp. $s_g'$) the agent observes a reward $r$ drawn from a Bernoulli of parameter $1$ (resp. $0.95$). In all other states the reward is $0$. Action \texttt{left} and \texttt{switch} are deterministic, while the probability of action \texttt{Right} is indicated in the figure. The square boxes indicate that the pattern of states is being repeated from $s_3$ (or $s_3'$) until $s_{N-1}$ (or $s_{N-1}'$). This variant introduces additional complexity into the decision-making process, as the Bernoulli rewards in $s_g$ and $s_g'$ are quite similar ($1$ vs $0.95$).}
	\label{fig:forked_riverswim}
\end{figure}
\subsection{Details of Example \ref{example:randomly_drawn_mdp_value}}
In the following we report the details of \cref{example:randomly_drawn_mdp_value}.
In \cref{example:randomly_drawn_mdp_value} we evaluated the characteristic time of three different environments with same discount factor $\gamma=0.95$:
\begin{enumerate}
	\item {\tt RandomMDP}: an MDP with $|S|$ states and $3$ actions. The transition probability for each $(s,a)$ is drawn from a Dirichlet distribution ${\rm Dir}(\alpha_1,\dots,\alpha_{|S|})$, with $\alpha_i=\alpha_{i-1}+(i-1)/10$ and $\alpha_1=1$. The rewards also follow the same Dirichlet distribution, that is, for each $(s,a)$ we sample a $|S|$-dimensional vector $q$ of rewards from ${\rm Dir}(\alpha_1,\dots,\alpha_{|S|})$. This vector defines the rewards in the next state $r(s,a,s')=q_{s'}$, with $q\sim {\rm Dir}(\alpha_1,\dots,\alpha_{|S|})$. 	For this type of environment see also the details of the instance-specific quantities  in \cref{tab:randommdp_details}.

	\item {\tt RiverSwim}: this environment is specified in \cite{strehl2008analysis}, but we refer to the version used in \cite{marjani2021navigating} for a direct comparison. 	The reward is always $0$ except in the initial state $s_1$, and the final state $s_{|S|}$.  In the initial state we have $q(1|s_1, \textrm{left})=0.05$ (probability $0.05$ of observing a reward of $1$, and $0.95$ probability of observing a reward of $0$), while in the final state $q(1|s_{|S|},\textrm{right})=1$. All other rewards are set to $0$. Transition probabilities are the same as in \cite{strehl2008analysis}. 	For this type of environment see also the details of the instance-specific quantities  in \cref{tab:riverswim_details}.
	\item {\tt Forked RiverSwim}: we refer the reader to \cref{subsec:forked_riverswim} for a description of this environment. 	For this type of environment see also the details of the instance-specific quantities  in \cref{tab:forked_riverswim_details}.
\end{enumerate}

Interestingly, these environments have different properties that make them suitable for analysis: (1) the {\tt RandomMDP} environment has very small gaps and variances; (2) the {\tt Riverswim} environment has a relatively larger maximum span; (3) the {\tt Forked Riverswim} environment, in contrast to the {\tt Riverswim} environment, has a very small minimum gap $\deltamin$ and similar values for the span.

\begin{figure}[t]
	\centering
	\includegraphics[width=\columnwidth]{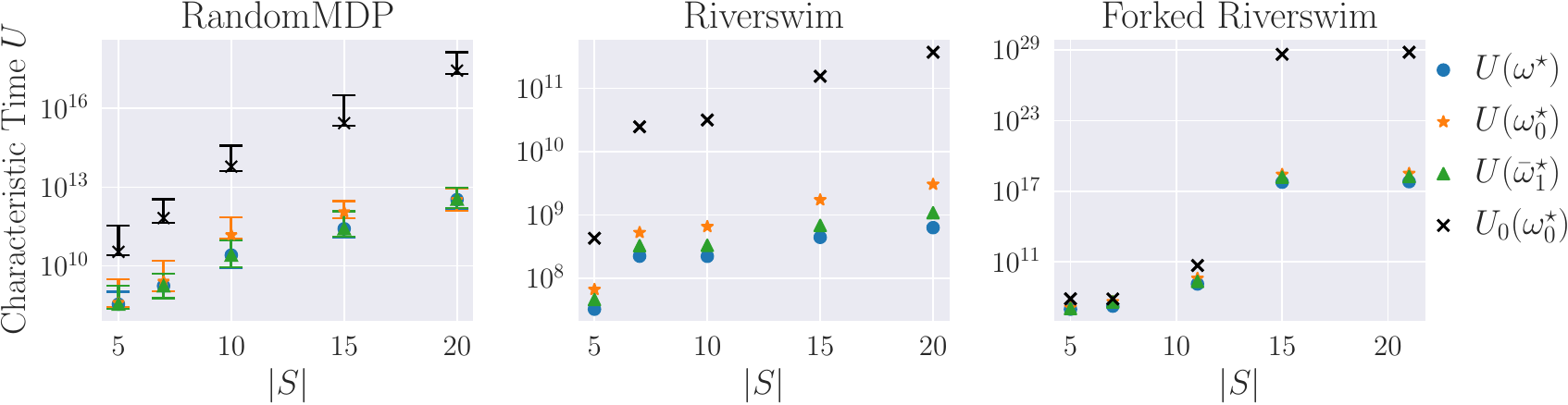}
	\caption{Comparison of (\ref{eq:original_upper_bound}) and (\ref{eq:new_upper_bound}) for discount $\gamma=0.99$ and different sizes of the state space $S$. We evaluated different allocations using $ U_0(\omega)$ and $U(\omega)$. The allocations are: $ \omega^\star$ (the optimal allocation in \cref{eq:new_upper_bound}), $\omega_0^\star$ (the optimal allocation in \cref{eq:original_upper_bound}) and $ \omega_1^\star$ (the optimal allocation that we get from (\ref{eq:new_upper_bound}) by setting $ k=1$ uniformly across states and actions). Results for the RandomMDP  indicate the median and the bars 95\% confidence intervals across 30 runs.}
	\label{fig:example_upper_bound_appendix}
\end{figure}
\begin{table}[b]
	\centering
	\setlength\arrayrulewidth{1pt}
	\resizebox{\textwidth}{!}{
		\begin{tabular}{c|cccccccc}
			\toprule
			$|S|$ & $\deltamin$ & $\max_{sa} \Delta_{sa}$ & $\min_{sa} \md_{sa}[V^\star]$ & $\max_{sa} \md_{sa}[V^\star]$ & $\min_{sa} \var_{sa}[V^\star]$ & $\max_{sa} \var_{sa}[V^\star]$ & $\max_{s,a,k} M_{sa}^k[V^\star]^{2^{-k}}$ \\ \hline
 $5$ & $1.1\cdot 10^{-2}$ & $1.6\cdot 10^{-1}$ & $6.4\cdot 10^{-2}$ & $1.0\cdot 10^{-1}$ & $8.3\cdot 10^{-4}$ & $3.4\cdot 10^{-3}$ & $1.0\cdot 10^{-1}$ \\
$10$ & $2.3\cdot 10^{-3}$ & $6.3\cdot 10^{-2}$ & $2.7\cdot 10^{-2}$ & $3.6\cdot 10^{-2}$ & $1.4\cdot 10^{-4}$ & $3.7\cdot 10^{-4}$ & $3.6\cdot 10^{-2}$ \\
 $25$ & $1.2\cdot 10^{-4}$ & $1.0\cdot 10^{-2}$ & $4.6\cdot 10^{-3}$ & $5.1\cdot 10^{-3}$ & $3.3\cdot 10^{-6}$ & $4.9\cdot 10^{-6}$ & $5.1\cdot 10^{-3}$ \\
$50$ & $9.5\cdot 10^{-6}$ & $1.9\cdot 10^{-3}$ & $9.1\cdot 10^{-4}$ & $9.5\cdot 10^{-4}$ & $1.2\cdot 10^{-7}$ & $1.4\cdot 10^{-7}$ & $9.5\cdot 10^{-4}$ \\
 $100$ & $1.1\cdot 10^{-6}$ & $3.7\cdot 10^{-4}$ & $1.8\cdot 10^{-4}$ & $1.8\cdot 10^{-4}$ & $0$ & $0$ & $1.8\cdot 10^{-4}$ \\
			\bottomrule
		\end{tabular}
	}
	\vspace{1ex}
	\caption{Details of the instance-specific quantities for the \texttt{RandomMDP} environment  (we evaluated up to $k=19$). Results indicate an average over 300 different realizations. Confidence intervals are omitted for brevity, and values are rounded up to the 1st decimal.}
	\label{tab:randommdp_details}
\end{table}
\begin{table}[h]
	\centering
	\setlength\arrayrulewidth{1pt}
	\resizebox{\textwidth}{!}{
		\begin{tabular}{c|cccccccc}
			\toprule
			$|S|$ & $\deltamin$ & $\max_{sa} \Delta_{sa}$ & $\min_{sa} \md_{sa}[V^\star]$ & $\max_{sa} \md_{sa}[V^\star]$ & $\min_{sa} \var_{sa}[V^\star]$ & $\max_{sa} \var_{sa}[V^\star]$ & $\max_{s,a,k} M_{sa}^k[V^\star]^{2^{-k}}$ \\ \hline
		$5$ & $7.6\cdot 10^{-2}$ & $1.3\cdot 10^{0}$ & $1.7\cdot 10^{0}$ & $3.0\cdot 10^{0}$ & $0$ & $3.6\cdot 10^{-1}$ & $1.1\cdot 10^{0}$ \\
 $10$ & $3.4\cdot 10^{-2}$ & $1.3\cdot 10^{0}$ & $2.5\cdot 10^{0}$ & $4.5\cdot 10^{0}$ & $0$ & $3.7\cdot 10^{-1}$ & $1.1\cdot 10^{0}$ \\
$25$ & $1.9\cdot 10^{-2}$ & $1.3\cdot 10^{0}$ & $2.5\cdot 10^{0}$ & $5.0\cdot 10^{0}$ & $0$ & $3.7\cdot 10^{-1}$ & $1.1\cdot 10^{0}$ \\
 $50$ & $8.4\cdot 10^{-3}$ & $1.3\cdot 10^{0}$ & $2.7\cdot 10^{0}$ & $5.4\cdot 10^{0}$ & $0$ & $3.7\cdot 10^{-1}$ & $1.1\cdot 10^{0}$ \\
$100$ & $2.1\cdot 10^{-4}$ & $1.3\cdot 10^{0}$ & $2.9\cdot 10^{0}$ & $5.5\cdot 10^{0}$ & $0$ & $3.7\cdot 10^{-1}$ & $1.1\cdot 10^{0}$ \\
			\bottomrule
		\end{tabular}
	}
	\vspace{1ex}
	\caption{Details of the instance-specific quantities for the \texttt{Riverswim} environment  (we evaluated up to $k=19$). Values are rounded up to the 1st decimal.}
	\label{tab:riverswim_details}
\end{table}

\begin{table}[h]
	\centering
	\setlength\arrayrulewidth{1pt}
	\resizebox{\textwidth}{!}{
		\begin{tabular}{c|cccccccc}
			\toprule
			$|S|$ & $\deltamin$ & $\max_{sa} \Delta_{sa}$ & $\min_{sa} \md_{sa}[V^\star]$ & $\max_{sa} \md_{sa}[V^\star]$ & $\min_{sa} \var_{sa}[V^\star]$ & $\max_{sa} \var_{sa}[V^\star]$ & $\max_{s,a,k} M_{sa}^k[V^\star]^{2^{-k}}$ \\ \hline
	 $5$ & $1.0\cdot 10^{-1}$ & $1.4\cdot 10^{0}$ & $1.0\cdot 10^{0}$ & $2.0\cdot 10^{0}$ & $0$ & $3.2\cdot 10^{-1}$ & $1.0\cdot 10^{0}$ \\
 $11$ & $2.8\cdot 10^{-2}$ & $1.3\cdot 10^{0}$ & $1.6\cdot 10^{0}$ & $2.9\cdot 10^{0}$ & $0$ & $4.9\cdot 10^{-1}$ & $2.0\cdot 10^{0}$ \\
 $25$ & $1.0\cdot 10^{-6}$ & $1.3\cdot 10^{0}$ & $1.7\cdot 10^{0}$ & $3.2\cdot 10^{0}$ & $0$ & $4.8\cdot 10^{-1}$ & $2.0\cdot 10^{0}$ \\
 $51$ & $1.0\cdot 10^{-6}$ & $1.3\cdot 10^{0}$ & $2.1\cdot 10^{0}$ & $4.2\cdot 10^{0}$ & $0$ & $4.8\cdot 10^{-1}$ & $2.0\cdot 10^{0}$ \\
$101$ & $1.0\cdot 10^{-6}$ & $1.3\cdot 10^{0}$ & $2.5\cdot 10^{0}$ & $5.1\cdot 10^{0}$ & $0$ & $4.8\cdot 10^{-1}$ & $2.0\cdot 10^{0}$ \\
			\bottomrule
		\end{tabular}
	}
	\vspace{1ex}
	\caption{Details of the instance-specific quantities for the \texttt{Forked Riverswim} environment (we evaluated up to $k=19$). Values are rounded up to the 1st decimal.}
	\label{tab:forked_riverswim_details}
\end{table}

\begin{figure}[H]
	\centering
	\includegraphics[width=\columnwidth]{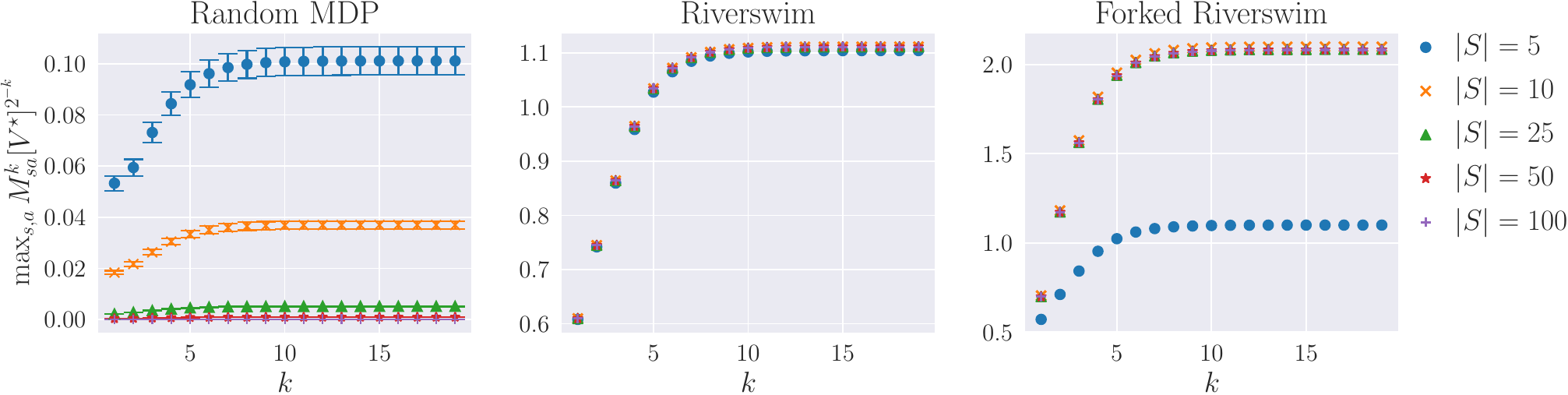}
	\caption{Plot of $\max_{s,a} M_{sa}^k[V^\star]^{2^{-k}}$ for various values of $k$. For the random MDP we depict the median value, as well as the $95\%$ confidence interval.  }
	\label{fig:evaluation_k}
\end{figure}
\begin{figure}[H]
	\centering
	\includegraphics[width=\columnwidth]{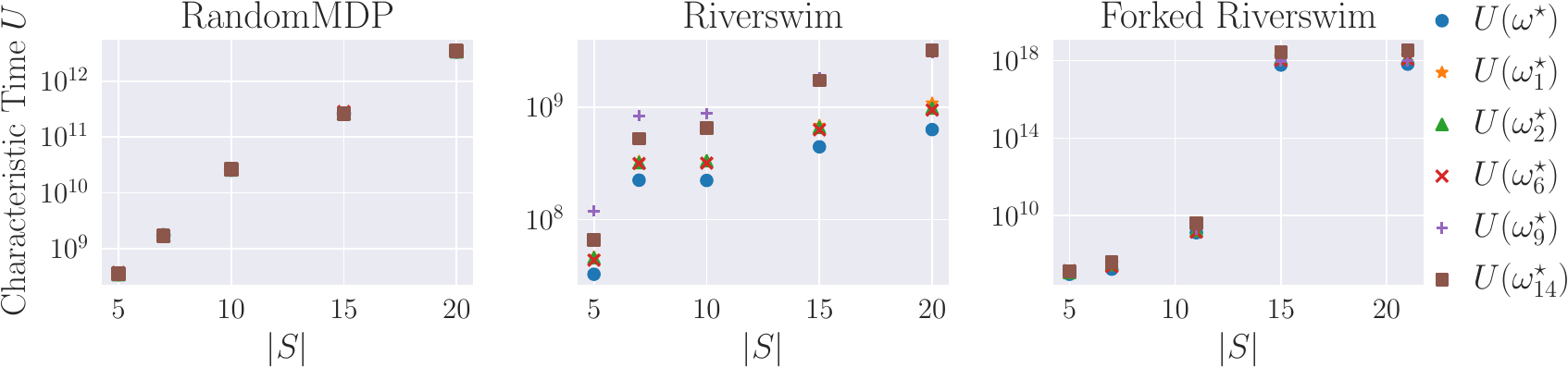}
	\caption{Evaluation of $ \omega_k^\star$ for different values of $k$. For the {\tt RandomMDP} we only show the median value over $300$ runs.}
	\label{fig:evaluation_different_values_k}
\end{figure}

Finally, in \cref{fig:evaluation_k}, we depict $\max_{s,a} M_{sa}^k[V^\star]^{2^{-k}}$ for different values of $k$, up to $k=19$. For the {\tt RandomMDP} environment we observe that $\max_{s,a} M_{sa}^k[V^\star]^{2^{-k}}$ tends to the maximum of the span $\max_{sa} \md_{sa}[V^\star]$, which depends on the size of the state space (as $|S|$ grows larger the span diminishes). For the other two environments, {\tt Riverswim} and {\tt Forked Riverswim}, $\max_{s,a} M_{sa}^k[V^\star]^{2^{-k}}$ does not seem to depend on the size of the state space. Furthermore, we also observe a sudden convergence of this quantity for relatively small values of $k$, followed by a relatively very slow increase.

In \cref{fig:evaluation_different_values_k} are shown the results when we evaluate the allocations $ \omega_k^\star$ for different values of $k$. In general, we do not observe a striking difference between those allocations. 

\newpage
\subsection{Riverswim and Forked Riverswim - Description and Additional Results}

In \cref{fig:cartpole_barplot_2} we present results from the Riverswim and ForkedRiverswim environments. These results include data from two new algorithms: {\sc O-BPI} (Online Best Policy Identification) and {\sc PS-MDP-NaS} (Posterior Sampling for MDP-NaS). 

{\sc O-BPI} is a novel algorithm that draws inspiration from {\sc MDP-NaS}. However, a distinguishing characteristic is its use of stochastic approximation to determine the $Q$-values and $M$-values. These values, as for {\sc MF-BPI}, are used to compute the allocation $\omega$ by solving   the sample-complexity bound $\inf_{\omega \in \Omega(\phi)} U(\omega)$ with  navigation constraints.  On the other hand, {\sc PS-MDP-NaS} is an adaptation of {\sc MDP-NaS} that uses posterior sampling over the MDP's model to address the parametric uncertainty. It's worth noting that both these algorithms,   {\sc O-BPI} and {\sc PS-MDP-NaS}, are model-based, and a detailed description of these algorithms is available in the following section, see \cref{algo:obpi} and \cref{algo:psmdpnas}. 

The results in \cref{fig:cartpole_barplot_2} clearly show the superiority of these allocation computing methods compared to other algorithms such as {\sc PSRL} and {\sc Q-UCB.}
\begin{figure}[h]
	\centering
	\includegraphics[width=\linewidth]{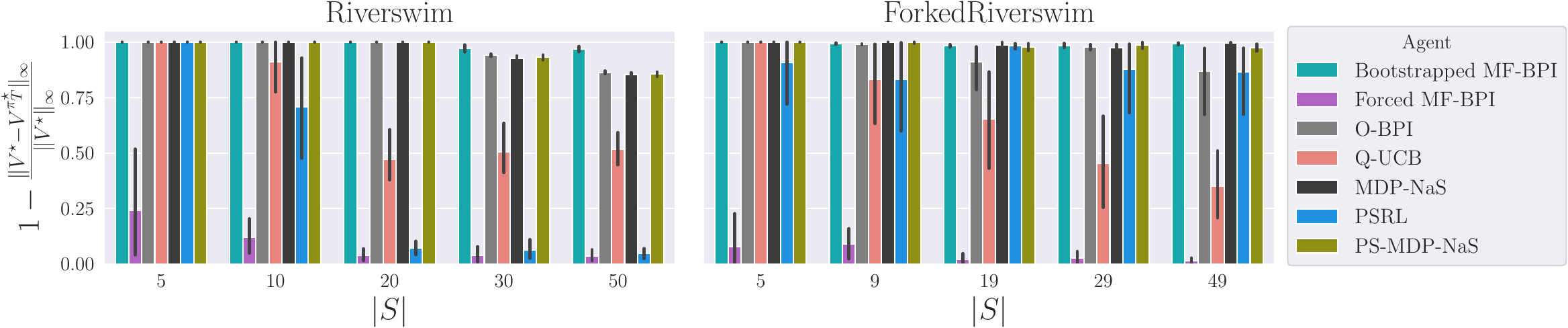}
	\caption{Evaluation of the estimated optimal policy $\pi_T^\star$ after $T$ steps for {\sc MF-BPI, O-BPI, Q-UCB, MDP-NaS, PS-MDP-NaS}, and  {\sc PSRL}. Results are averaged across 10 seeds and lines indicate $95\%$ confidence intervals. Note that for {\tt Forked Riverswim} we have $N=2|S|-1$.
    } 
	\label{fig:cartpole_barplot_2}
\end{figure}

\cref{fig:cartpole_full} on the next page provides a visualization of the performance of each algorithm over the entire horizon  $t=0,\dots, T-1$. We exhibit the performance of the estimated greedy policy $\pi_t^\star$ at each timestep $t$ for each respective method. The results offer a clear demonstration of the efficiency of  those methods based on the instance-specific sample complexity lower bound.

\clearpage

\begin{figure}[h]
	\centering
	\includegraphics[width=.85\linewidth]{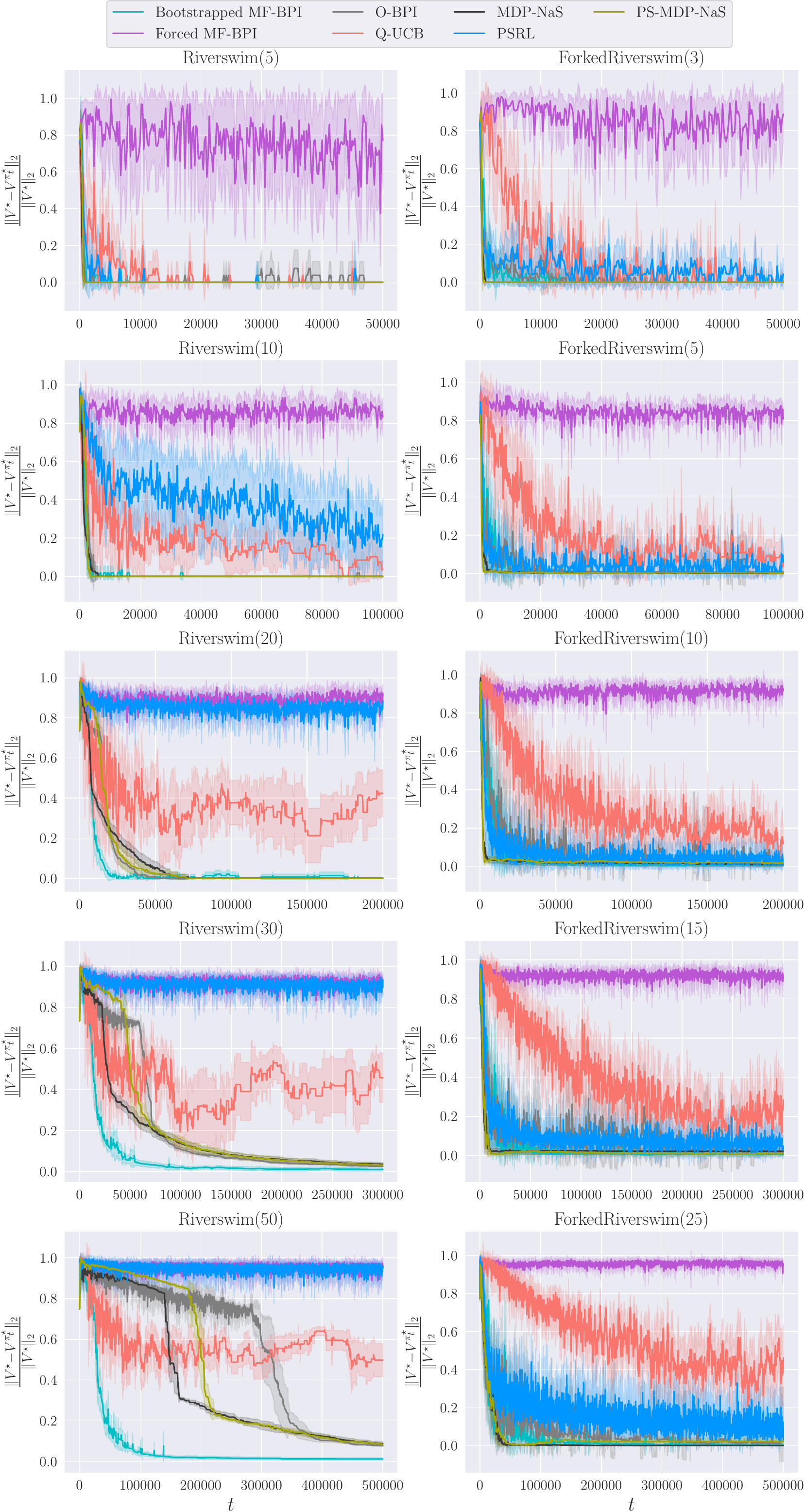}
	\caption{Evaluation of the estimated optimal policy $\pi_t^\star$ for {\sc MF-BPI, O-BPI, Q-UCB, MDP-NaS, PS-MDP-NaS}, and  {\sc PSRL}. Results are averaged across 10 seeds and lines indicate $95\%$ confidence intervals. }
	\label{fig:cartpole_full}
\end{figure}
\clearpage\newpage
\subsection{Slipping DeepSea - Description and Additional Results}\label{sec:deepsea_appendix}
\paragraph{Description.} The Slipping DeepSea problem is an hard-exploration  reinforcement learning problem. In the standard version, there's an $N \times N$ grid, and the agent starts in the top left corner (state 0, 0) and needs to reach the bottom right corner (state $N-1$, $N-1$) for a large reward (the state vector is an $N^2$-dimensional vector, that one-hot encodes the agent's position in the grid).
The agent can move diagonally, left or right (or down when close to the wall).  The agent incurs in a cost when moving of $0.01/N$, while obtaining a positive reward of $1$ when reaching the bottom right corner.
Furthermore, we introduce the modification that there is a small probability of $0.05$ that the incorrect action will be executed.
This is a challenging problem because the optimal policy requires the agent to move  (incurring a negative reward) many times before eventually reaching the high reward in the bottom right corner. However, due to the stochastic nature of the problem (the chance of slipping), the agent might be forced to take suboptimal actions, making it harder to learn the optimal policy.

\paragraph{Additional results.} \Cref{fig:deep_sea_exploration} presents additional metrics encapsulating the exploration conducted by each algorithm, offering a comprehensive summary of the exploration process after $T$ episodes for each size $N$ (note that for a given size $N$ the number of input features in the state is $N^2$).

We focus on two key metrics: (a) $(t_{{\rm visit}})_{ij}$ and (b) $(t_{{\rm avg}})_{ij}$. Here, (a) $(t_{{\rm visit}})_{ij}$ represents the last timestep $t$ at which a cell $(i,j)$ was visited (this value is normalized by $NT$, the multiplication of the grid size and the number of episodes), while (b) $(t{{\rm avg}})_{ij}$ signifies the average frequency with which a cell $(i,j)$ was visited.
\begin{figure}[h]
	\centering
	\includegraphics[width=.65\linewidth]{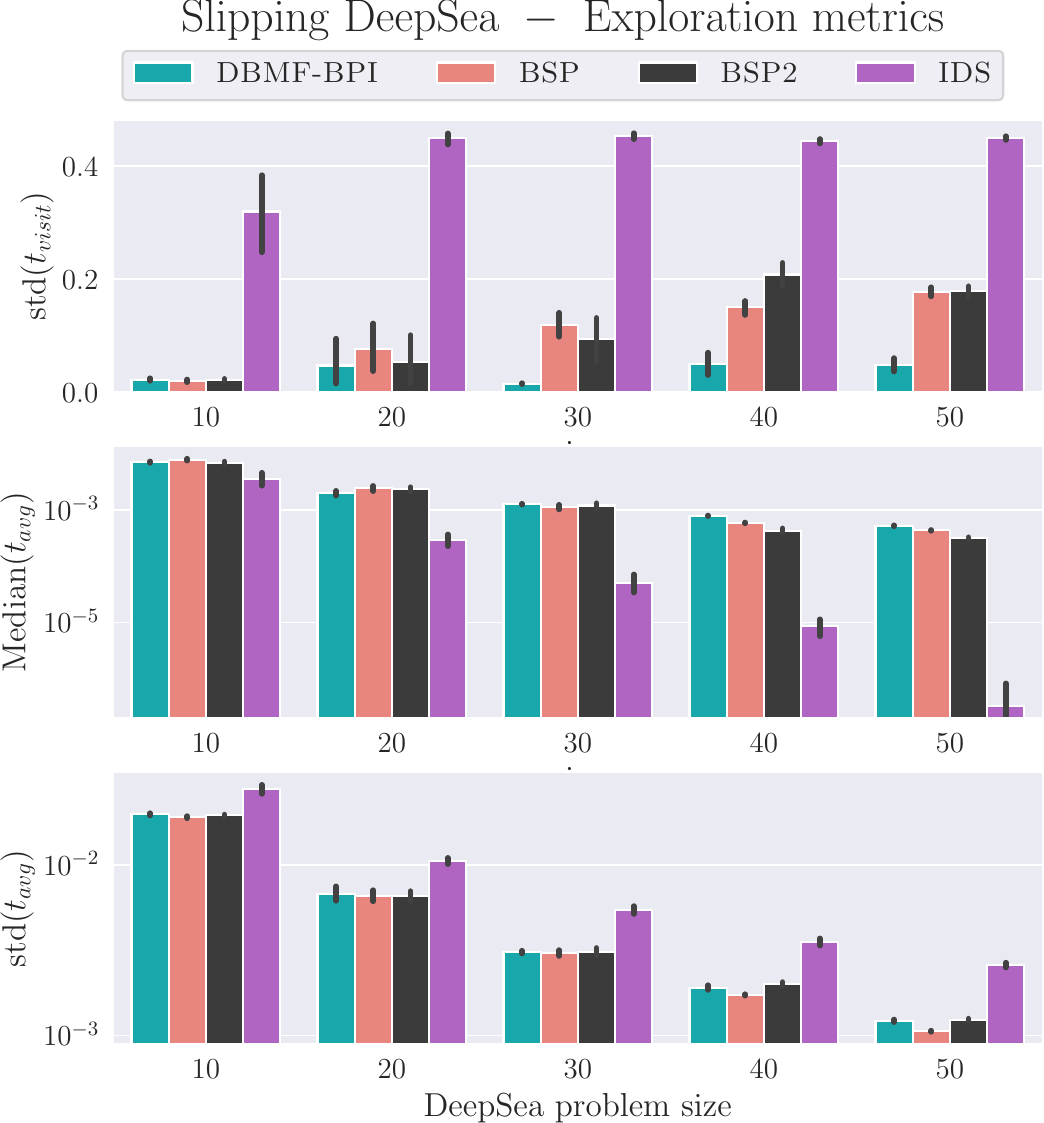}
	\caption{Slipping DeepSea problem - exploration metrics. From top to bottom: (1)standard deviation of $t_{{\rm visit}}$ at the last episode, depicting how much each agent explored (the lower the better); (2) median value of  $(t_{{\rm avg}})_{ij}$, \emph{i.e.}, the median value of a cell's visit frequency; (3) standard deviation of $(t_{{\rm avg}})_{ij}$ across all cells. Results are averaged over $24$ runs and bars indicate $95\%$ confidence intervals.}
	\label{fig:deep_sea_exploration}
\end{figure}
In terms of arrangement, from the top downwards: (1) we present the standard deviation of $(t_{{\rm visit}})_{ij}$ across all cells; (2) we show the median value of a cell's visit frequency; (3) we depict the standard deviation of $(t_{{\rm avg}})_{ij}$ across all cells.
From the central plot, we notice that {\sc DBMF-BPI} tends to visit all cells slightly more frequently. The first plot also highlights that {\sc DBMF-BPI} maintains a consistent visit rate to all cells. This pattern is a strong indication of {\sc DBMF-BPI}'s explorative behavior. Conversely, neither {\sc BSP} nor {\sc BSP2} match this performance in terms of successful episodes (shown in \cref{fig:deep_sea_successful_episodes_appendix}), despite the median value of $t_{{\rm avg}}$ being very similar to that of {\sc DBMF-BPI}.
\begin{figure}[]
	\centering
	\includegraphics[width=.5\linewidth]{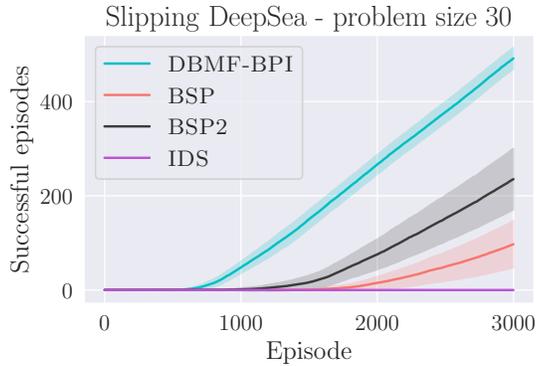}
	\caption{Slipping DeepSea problem. Total number of successful episodes (\emph{i.e.}, that the agent managed to reach the final reward) for a grid with $30^2$ input features. }
	\label{fig:deep_sea_successful_episodes_appendix}
\end{figure}
In order to provide a more comprehensive view, \cref{fig:deepsea_exploration_freq_visit} and \cref{fig:deepsea_exploration_last_visit} present additional exploration metrics. Specifically, we display $(t_{{\rm avg}})_{ij}$ and $(t_{{\rm visit}})_{ij}$, respectively, after $T=3000$ episodes, given a DeepSea problem size of $30$. The initial plot illustrates how {\sc DBMF-BPI} tends to concentrate on the grid's diagonal. However, the bottom plot shows that, in spite of this diagonal focus, {\sc DBMF-BPI} also maintains a consistent exploration of other cells within the grid. We also observe how {\sc BSP} seems to uniformly explore all cells, while {\sc IDS} does not manage to explore the entire grid within the number of episodes.
\begin{figure}[h]
	\centering
	\includegraphics[width=\linewidth]{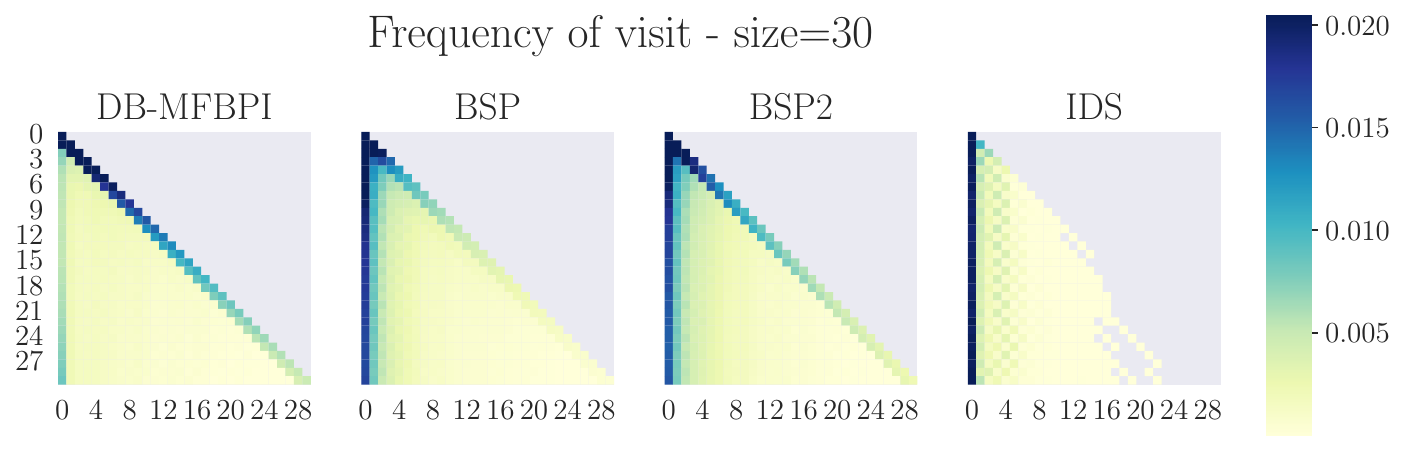}
	\caption{Slipping DeepSea problem. In this figure we depict the average frequency of visits, after $3000$ episodes, when the size of the problem is $k=30$. }
	\label{fig:deepsea_exploration_freq_visit}
	\vspace{.5cm}
	\includegraphics[width=\linewidth]{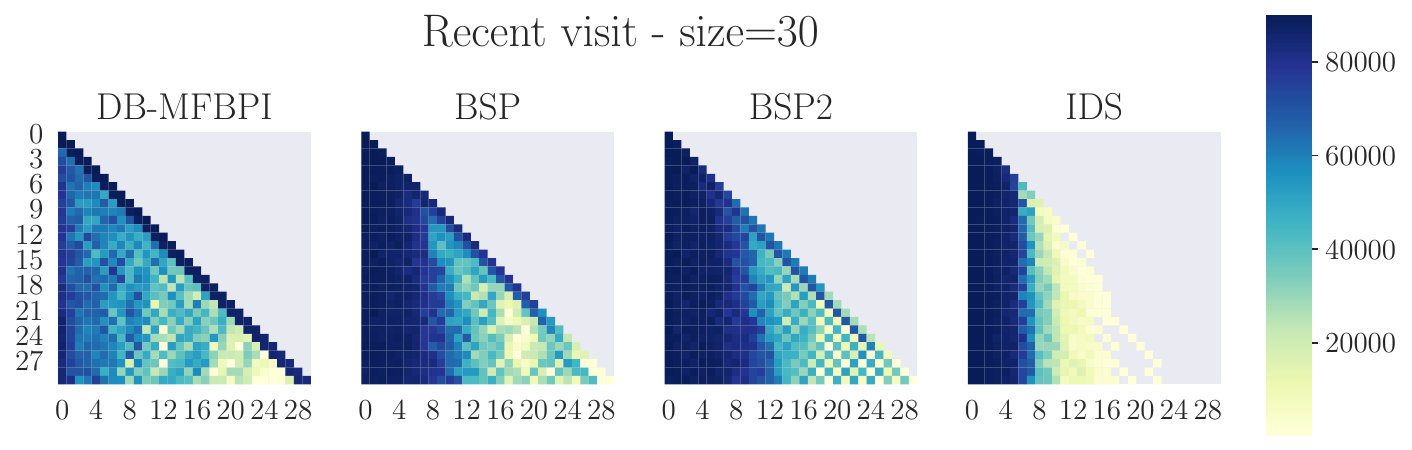}
	\caption{Slipping DeepSea problem. In this figure we depict the last timestep a cell was visited, after $3000$ episodes,   when the size of the problem is $k=30$. }
	\label{fig:deepsea_exploration_last_visit}
\end{figure}
Last, but not least, on the right column in \cref{fig:deepsea_greedy_results}, are shown the results for the learnt greedy policy $\pi_t^\star$ at time $t$. Clearly, {\sc DBMF-BPI} is able to learn an efficient policy more quickly than the other methods for different problem sizes.
\clearpage

\begin{figure}[h]
	\centering
	\includegraphics[width=\linewidth]{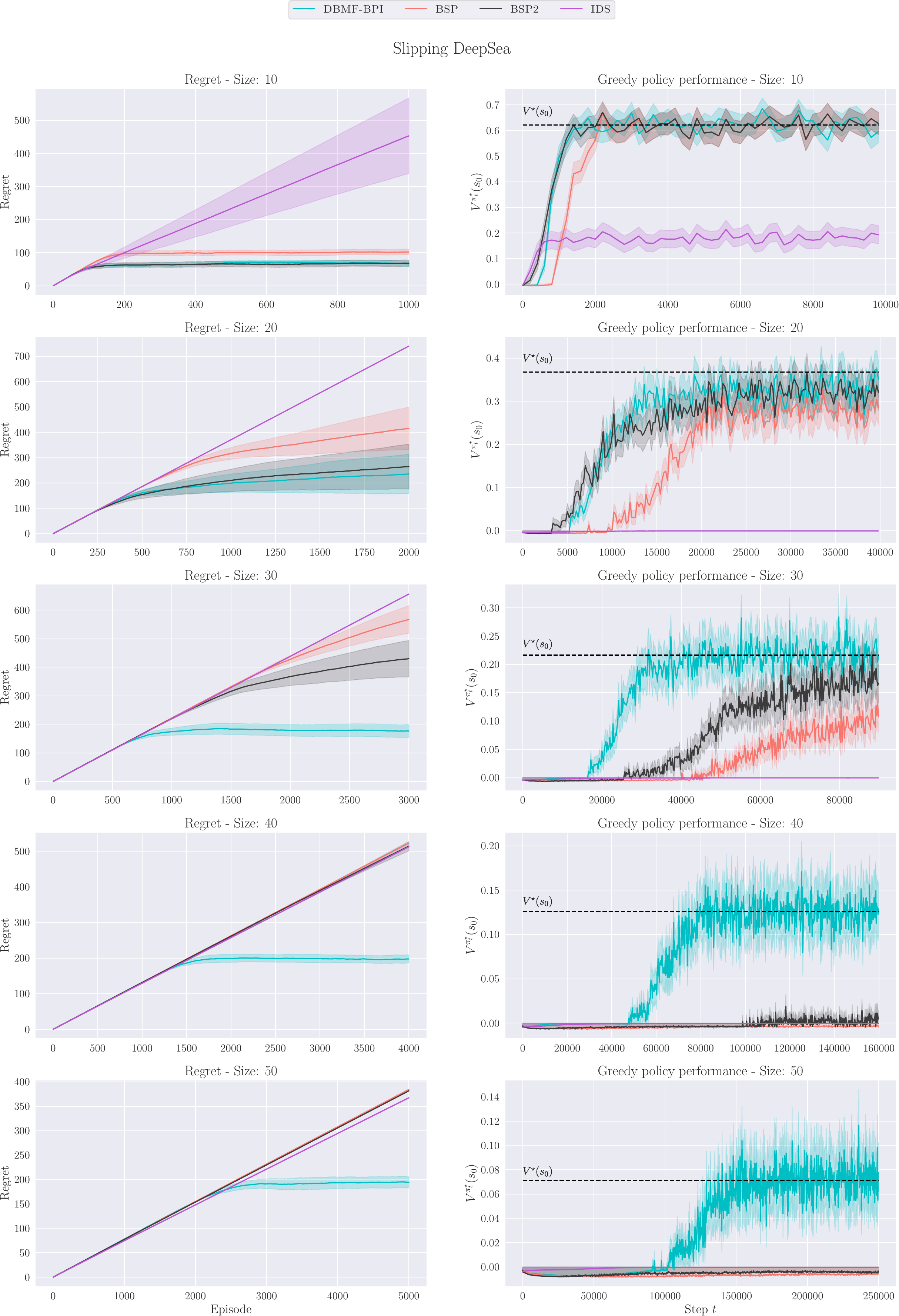}
	\caption{Slipping DeepSea problem - evaluation of the greedy policy. On the left we depict the regret of the learning agent over the number of episodes $T$ for each problem size $k$. On the right, we display the average value of the learnt greedy policy $\pi_t^\star$ at time $t$ (black dashed-line indicates the average optimal value). Results are averaged over $24$ runs, and the shaded area depicts $95\%$ confidence intervals. }
	\label{fig:deepsea_greedy_results}
\end{figure}
\clearpage

\subsection{Cartpole Swingup - Description and Additional Results}\label{sec:cartpole_appendix}
\paragraph{Description.} In this subsection we present additional results for the Cartpole swingup problem. The cartpole swingup problem is a classic problem in control theory and reinforcement learning \cite{barto1983neuronlike}. The task is to balance a pole that is attached by an un-actuated joint to a cart, which moves along a frictionless track. The system is controlled by applying a force to the cart. Initially, the pole is hanging down and the goal is to swing it up so it stays upright. In contrast to the classic cartpole balance problem, the pole needs not only to be balanced when it's upright but also to be swung up to the upright position.

The state of the system at any point in time is described by four variables: the position of the cart $x$, the velocity of the cart $\dot x$, the angle of the pole $\theta$, and the angular velocity of the pole $\dot \theta$. There are $4$ additional variables in the state, and for simplicity we refer the user to \cite{osband2020bsuite}.

To make the problem more difficult, as in \cite{osband2020bsuite} we introduce a parameter $k \in \{1,\dots,19\}$ (to not be confused with the parameter of $M_{sa}^k[V^\star]$) that parameterizes the reward function. Specifically, the agent observes a positive reward of $1$ only if the pole's angle satisfies $\cos(\theta) > k/20$, and the cart's position satisfies $|x|\leq 1-k/20$. There is also a negative reward of $-0.1$ that the agent incurs for moving, which aggravates the explore-exploit tradeoff (algorithms like {\sc DQN} \cite{mnih2015human} simply remain still).

\paragraph{Additional results.}
In \cref{fig:cartpole1,fig:cartpole2,fig:cartpole3}, we provide supplementary results for this problem. \cref{fig:cartpole1} illustrates the total upright time achieved by each learner after $200$ episodes, across various difficulty levels, $k$. Here, the total upright time refers to the total count of steps where the pole maintained an angle satisfying $\cos(\theta) > k/20$, concurrently with the cart maintaining a position that satisfied $|x|\leq 1-k/20$.

Subsequently, \cref{fig:cartpole2} showcases the evolution of this metric throughout all $200$ episodes.

 \cref{fig:cartpole3} demonstrates the performance of the learnt greedy policy $\pi_t^\star$ over the course of the training. Every $10$ episodes, we evaluated the greedy policy over $20$ episodes and computed the cumulative reward.
\begin{figure}[h]
	\centering
	\includegraphics[width=.8\linewidth]{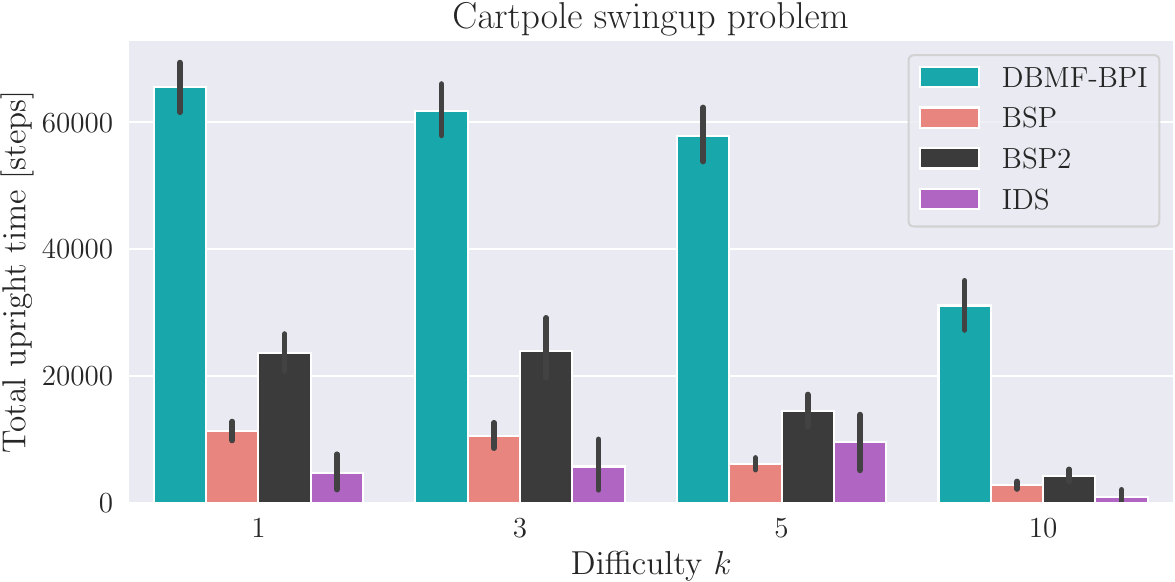}
	\caption{Cartpole swingup problem. Total upright time after $200$ episodes for different difficulties $k$. To observe a positive reward, the pole's angle must satisfy $\cos(\theta) > k/20$, and the cart's position should satisfy $|x|\leq 1-k/20$. Bars indicate $95\%$ confidence intervals. }
	\label{fig:cartpole1}
\end{figure}

\clearpage
\begin{figure}[h]
	\centering
	\includegraphics[width=.65\linewidth]{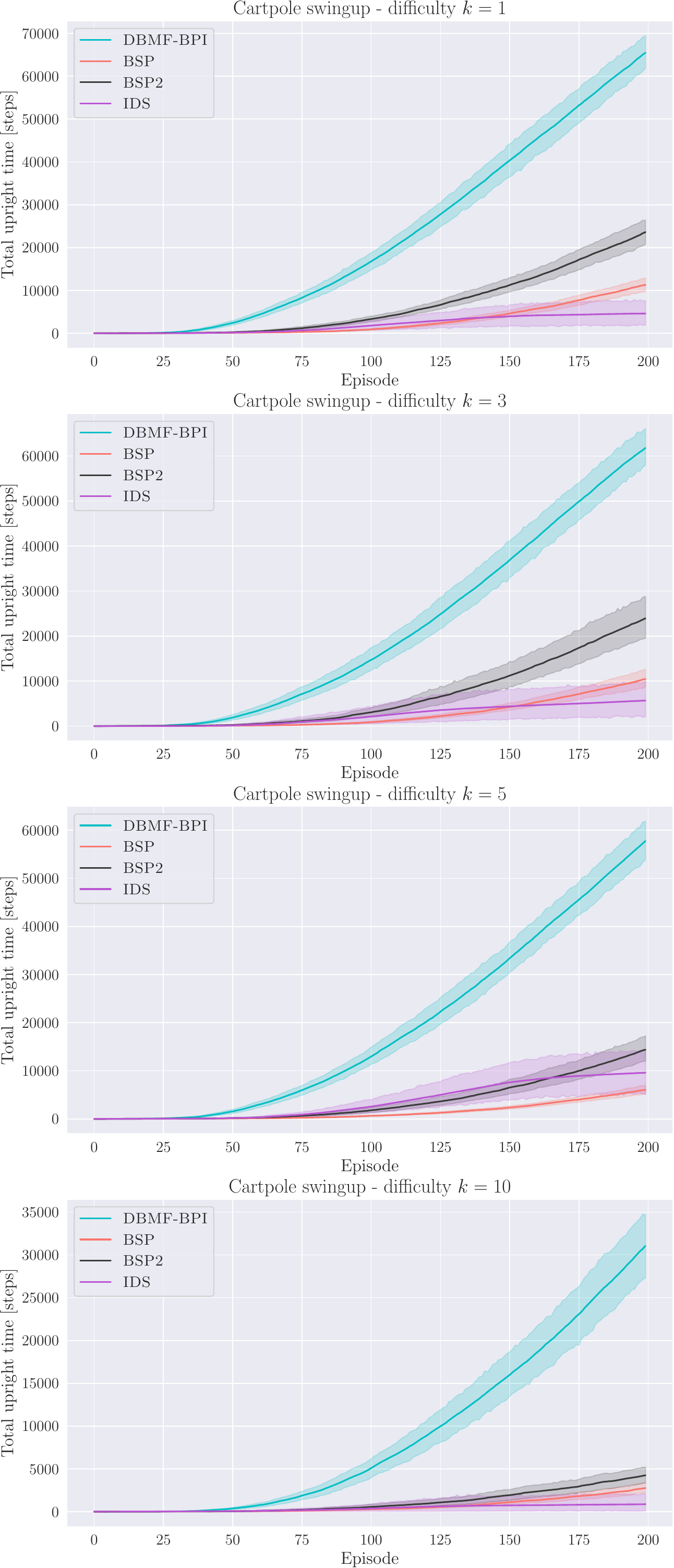}
	\caption{Cartpole swingup problem. Total upright time over $200$ episodes for different difficulties $k$. To observe a positive reward, the pole's angle must satisfy $\cos(\theta) > k/20$, and the cart's position should satisfy $|x|\leq 1-k/20$. Bars indicate $95\%$ confidence intervals. }
	\label{fig:cartpole2}
\end{figure}
\clearpage
\begin{figure}[h]
	\centering
	\includegraphics[width=.65\linewidth]{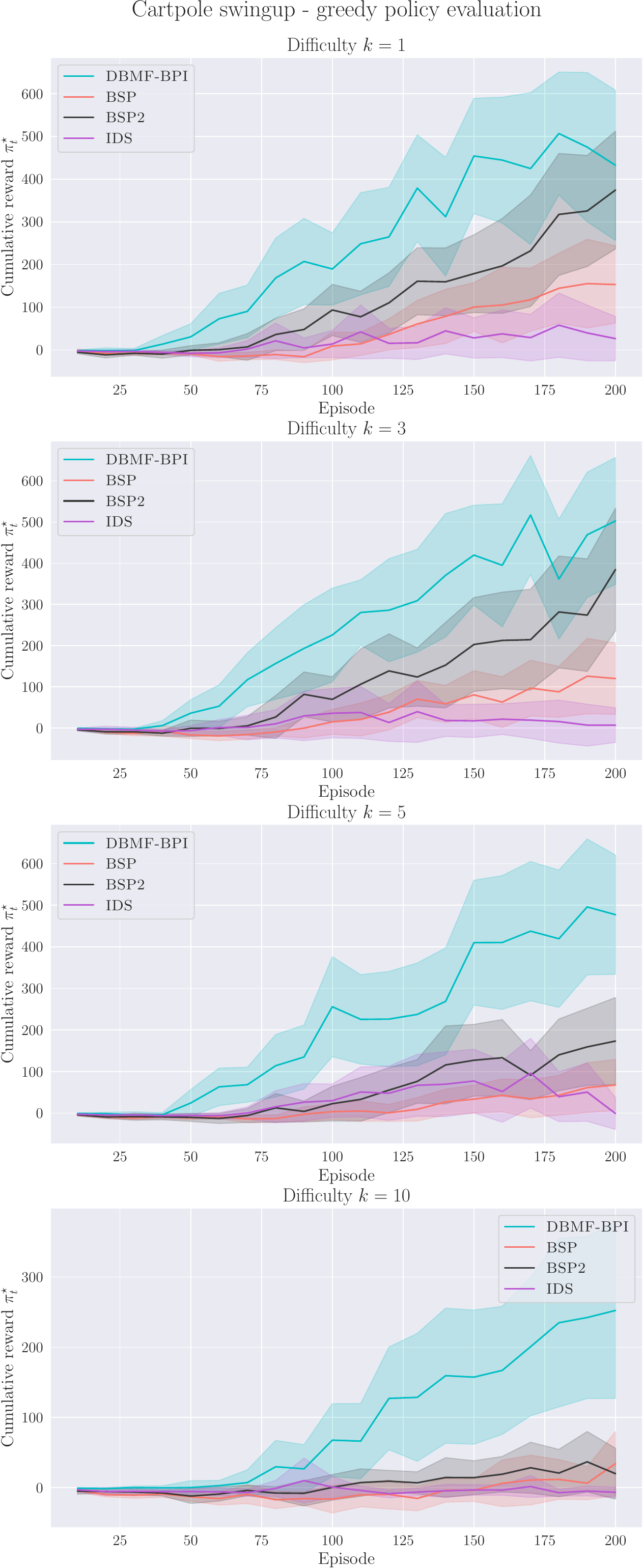}
	\caption{Cartpole swingup problem. Performance of the learnt greedy policy $\pi_t^\star$ over the training episodes (average cumulative reward collected by the greedy policy).}
	\label{fig:cartpole3}
\end{figure}
\clearpage

\paragraph{Exploration results.}
In \cref{fig:cartpole4,fig:cartpole5,fig:cartpole6}, we show additional results that illustrate the exploration of the various algorithms for difficulties $k=3,5$.

In \cref{fig:cartpole4}, we display two metrics at each training step $t$: the entropy of visit frequency and the entropy of the most recent visit.
The first metric quantifies how thoroughly the method has explored the state space $(x,\dot x, \theta, \dot \theta) $ up to time $t$. To do this, we discretize the state space into bins and tally the occurrences in each bin. We then normalize these counts by their sum and calculate the resulting entropy, which is normalized to the range $[0,1]$.

While this measure of visit frequency provides some insight, it is insufficient for understanding whether the algorithm continues to explore new states or revisits old ones. To address this, the second metric measures the dispersion of the timing of the last visits to various regions of the state space. A larger dispersion indicates that the algorithm is concentrating on a specific region, resulting in a smaller entropy (and vice-versa). To calculate this, we again use normalized entropy.


Finally, in \cref{fig:cartpole5} and \cref{fig:cartpole6}, we illustrate the visitation frequency after \(20K\) training steps for \( (x, \dot{x}) \) and \( (\dot{x}, \dot{\theta}) \) at difficulty levels \( k=3 \) and \( k=5 \). Darker regions signify higher visitation frequencies. The pattern in \( (\dot{x}, \dot{\theta}) \) is characteristic of algorithms that have learned to stabilize the policy. Notably, \textsc{DBMF-BPI} is also actively exploring various velocities. A similar trend is observed for \( (x, \dot{x}) \): while most methods focus on an \emph{s}-shaped trajectory, \textsc{DBMF-BPI} also explores other regions of the state space.

\begin{figure}[h]
	\centering
	\includegraphics[width=\linewidth]{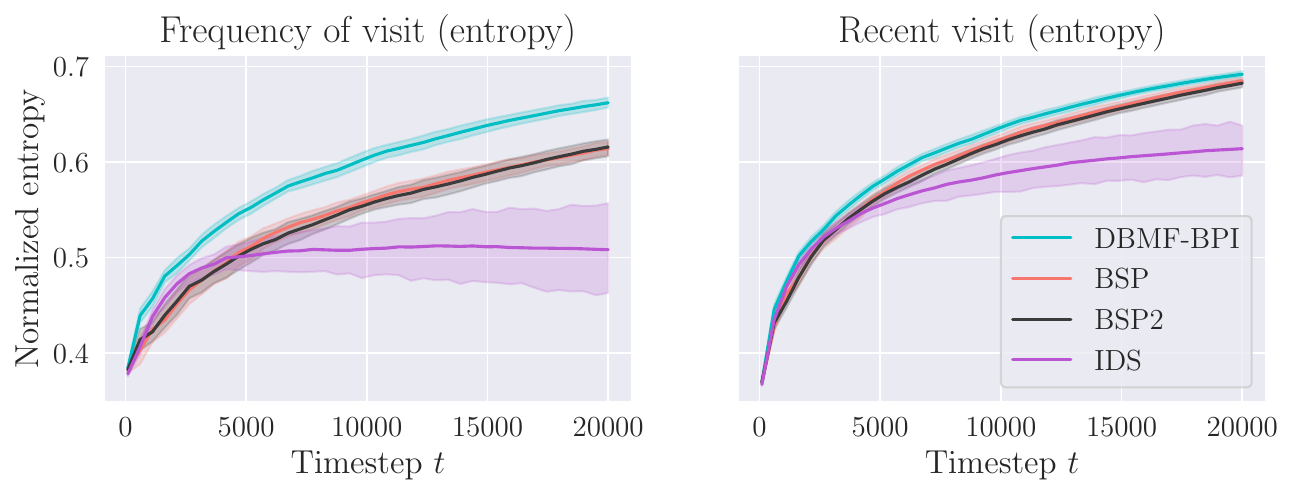}
 \includegraphics[width=\linewidth]{figures/cartpole/exploration_entropy_5.pdf}
	\caption{Exploration in Cartpole swingup: At the top, we present results for difficulty \(k=3\), and at the bottom, for \(k=5\). In the left column, we depict the entropy of visitation frequency for the state space \( (x, \dot{x}, \theta, \dot{\theta}) \) during training. In the right column, we display a measure of the dispersion of the most recent visits; smaller values indicate that the agent is less explorative as \( t \) increases.}
	\label{fig:cartpole4}
\end{figure}

\begin{figure}[h]
	\centering
	\includegraphics[width=.8\linewidth]{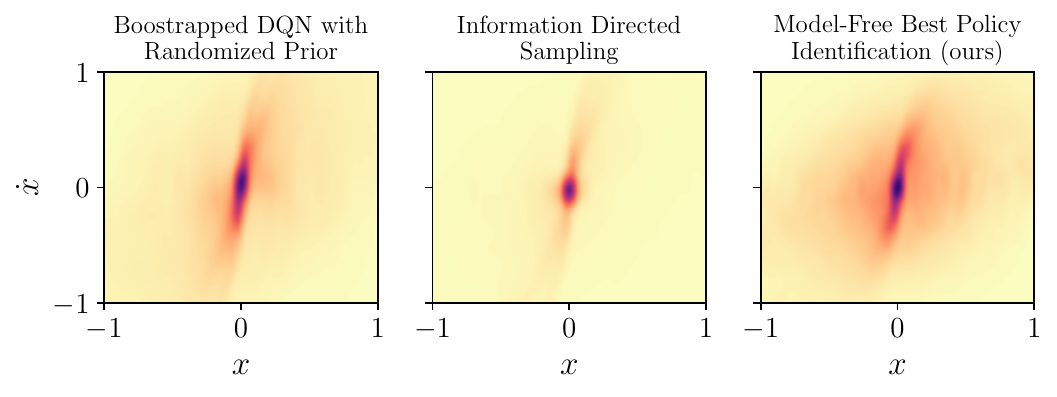}
 \includegraphics[width=.8\linewidth]{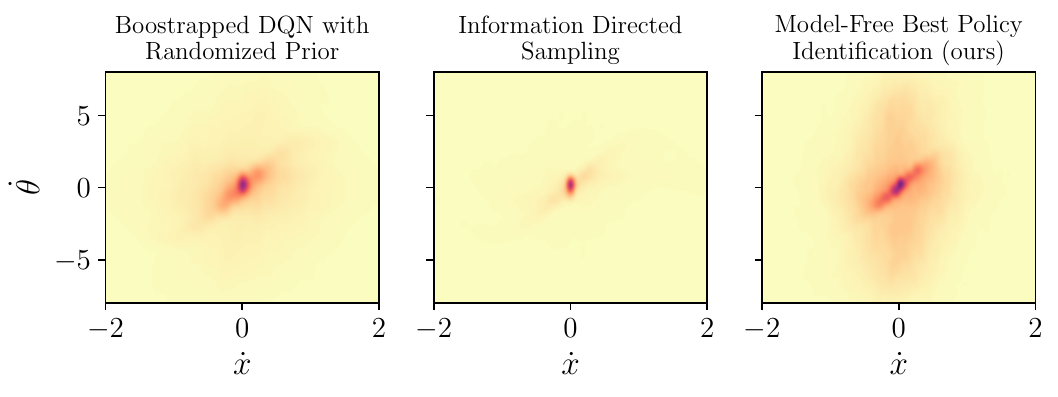}
	\caption{Cartpole Swingup \cite{osband2020bsuite} after $20$K training steps for difficulty $k=3$, comparing  {\sc BSP} (Bootstrapped DQN with randomized priors) \cite{osband2018randomized},  {\sc IDS} (Information-Directed Sampling) \cite{nikolov2018information}, and  {\sc MF-BPI} (Model-Free Best Policy Identification). Darker areas indicate higher visitation frequency. At the top we show this frequency for $(x,\dot x)$, the cart's position and linear's velocity, and at the bottom of $(\dot x,\dot\theta)$, the cart's linear and pole's angular velocities. }
	\label{fig:cartpole5}
\end{figure}

\begin{figure}[h]
	\centering
	\includegraphics[width=.8\linewidth]{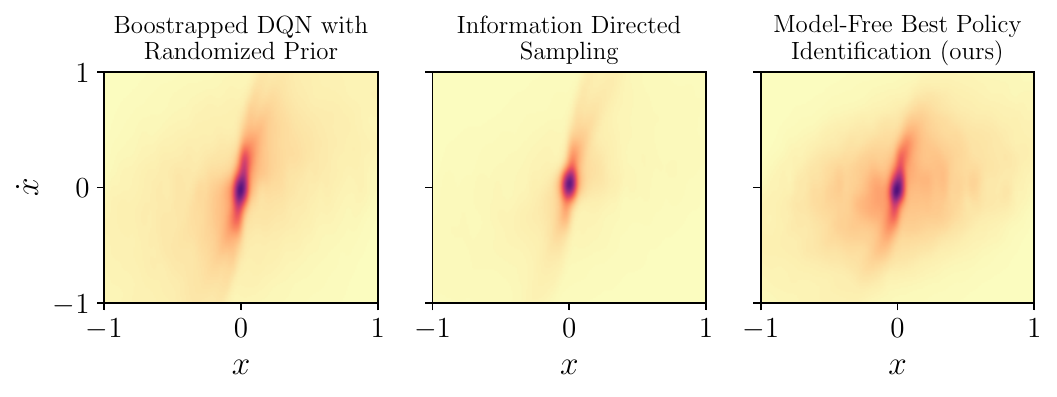}
 \includegraphics[width=.8\linewidth]{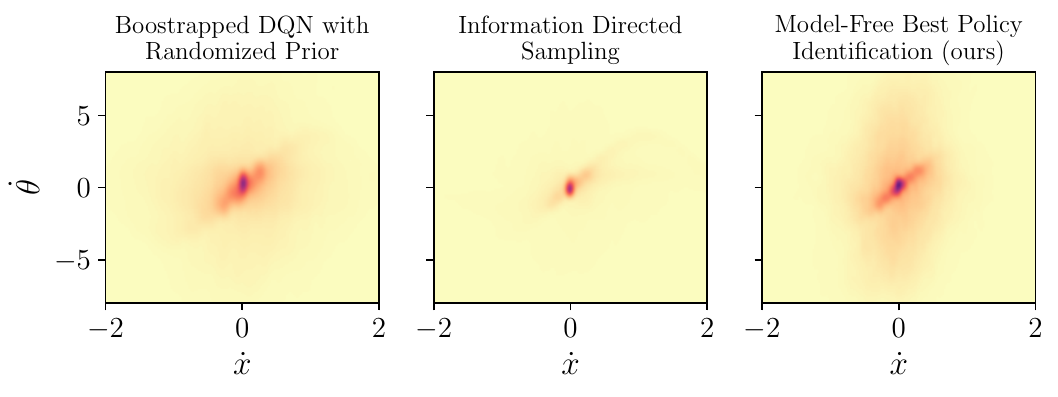}
	\caption{Cartpole Swingup \cite{osband2020bsuite} after $20$K training steps for difficulty $k=5$, comparing  {\sc BSP} (Bootstrapped DQN with randomized priors) \cite{osband2018randomized},  {\sc IDS} (Information-Directed Sampling) \cite{nikolov2018information}, and  {\sc MF-BPI} (Model-Free Best Policy Identification). Darker areas indicate higher visitation frequency. At the top we show this frequency for $(x,\dot x)$, the cart's position and linear's velocity, and at the bottom of $(\dot x,\dot\theta)$, the cart's linear and pole's angular velocities. }
	\label{fig:cartpole6}
\end{figure}

\clearpage\newpage
\subsection{Parameters, Hardware, Code and Libraries}
In this section, we outline the parameters used for the simulations, describe the hardware employed to run the simulations, and list the libraries that we used.

\subsubsection{Simulation parameters - Riverswim and Forked Riverswim}
In both the Riverswim and Forked Riverswim environments we used a discount factor of $\gamma=0.99$. Depending on the size of the state space, the horizon length was different. We used $T=10000 \times |S|$ for the Riverswim environment, and $T=20000\times N$ for the Forked Riverswim environment (where $N$ is the length of the main river; see also the description of the environment in \cref{subsec:forked_riverswim}).

We run simulations for $10$ different seeds, and evaluated the estimated greedy policy $\pi_t^\star$ every $200$ steps. All agents were optimistically initialized (\emph{i.e.}, the $Q$-values were initialized to $1/(1-\gamma)$, etc...), and model-based approaches used additive smoothing (with factor $1$).

For the {\sc MDP-NaS} and {\sc PS-MDP-NaS} (see next section for a description) we computed the allocation every $T_0=\min(T_{max},\max(200, \frac{T_{max}t}{T/2} ))$ steps, where $T_{max}=\frac{2000T}{50000}$. For {\sc PSRL} we computed a new greedy policy every $\lceil 1/(1-\gamma) \rceil$ steps.

We used a learning rate of $\alpha_t = \frac{H+1}{H+k_t}$ to learn the $Q$-values, where $H=(1-\gamma)^{-1}$ and $k_t=N_t(s_t,a_t)$ is the number of visits to $(s_t,a_t)$ at time $t$. Similarly, to learn the $M$-values we used a learning rate of $\beta_t = \alpha_t^{1.1}$ (which was not optimized).

For bootstrapped {\sc MFBPI} we used a parameter $ k=1$, and an ensemble size $B=50$ with training probability $p=0.7$. Similarly, these values were not optimized.

All methods that employed an $\epsilon$-soft policy, obtained a final policy $\omega$ by mixing
mixed the original policy $\pi$ with a uniform policy as follows $\omega(a|s)= (1-\epsilon_t) \pi(a|s) + \epsilon_t/ |A|$. The value of $\epsilon_t$ is $\epsilon_t= \min(1, 1/N_t(s_t))$  where $N_t(s_t)$ is the total number of visits to state $s_t$ at time $t$.

\subsubsection{Simulation parameters - Slipping DeepSea}

For the DeepSea problem we used a discount factor of $\gamma=0.99$, and different problem sizes $N\in\{10,20,30,40,50\}$. The number of training episodes was $T=100 N$. Every $200$ steps we evaluated the performance of the estimated greedy policy $\pi_t^\star$ over $20$ episodes. For all simulations we used a slipping probability of $0.05$. The number of features in the state is $N^2$, and the number of actions is $2$.

Refer to \cref{tab:parameters_slipping_deepsea} for the parameters of the agents.

\begin{table}[h]
	\centering
	\caption{Parameters of the agents for the slipping DeepSea problem.}
	\small
	\begin{tabular}[t]{c|cccc}
		\toprule
		{\bf Property} & {\sc DBMF-BPI} &{\sc BSP} & {\sc BSP 2} & {\sc IDS}  \\ \midrule
		Ensemble size $Q$	& $20$ & $20$ & $20$  &  $20+\frac{(N-10)}{2}$\\ \hline
		Ensemble size $M$   & $20$ & N.A. &N.A. & N.A.      \\ \hline
		Hidden layers sizes   & $[32]$ & $[32]$ &$[32]$ & $[50]$      \\ \hline
		Num. of quantiles & N.A. & N.A. & N.A. & $50$ \\ \hline
		Prior scale $Q$-values\\(depends on $N$)	& $\{3,5,10,15,20\}$		& $\{3,5,10,15,20\}$& $\{3,5,10,15,20\}$&  N.A.   \\ \hline
		Prior scale $M$-values\\(depends on $N$)		& $\{3,5,10,15,20\}$		& $\{3,5,10,15,20\}$& $\{3,5,10,15,20\}$&  N.A.   \\ \hline
		Replay buffer size			&$10^5$ 	&$10^5$ &$10^5$ &       $10^5$            \\  \hline
		Training period			& $1$	& $1$&$1$ &$1$                   \\  \hline
	    Target network update period		& $4$	&$4$ &$4$ &$4$                   \\  \hline
		Batch size			& $128$	&$128$ &$128$ &$128$                   \\  \hline
		Mask probability $p$			& $0.7$	&$0.5$ & $0.7$& N.A.                   \\  \hline
		Learning rate $Q$-values			& $5\times 10^{-4}$	&$10^{-3}$ & $10^{-3}$& $5\times 10^{-4}$                  \\  \hline
		Learning rate $M$-values			& $5\times 10^{-4}$	& N.A.&N.A. &    N.A.               \\  \hline
		Learning rate quantile network & N.A.	&N.A. & N.A.&      $10^{-6}$             \\  \hline
		$ k$			& $2$	&N.A. & N.A.& N.A.                  \\
		\bottomrule
	\end{tabular}

	\label{tab:parameters_slipping_deepsea}
\end{table}

\subsubsection{Simulation parameters - Cartpole Swingup}
For the Cartpole swingup problem we used a discount factor of $\gamma=0.99$, and different difficulties  $k\in\{1,3,5,10\}$.
 The number of training episodes was $T=200$, and we run simulations for $20$ different seeds. Every $10$ steps in the training we evaluated the performance of the estimated greedy policy $\pi_t^\star$ over $20$ episodes. The state is a vector in $\mathbb{R}^8$ and the number of actions is $3$.

For every method, with the exception of {\sc IDS}, we set up the parameters in the $i^{th}$ layer of each network by sampling from a truncated Gaussian distribution with a $0$ mean and a standard deviation of $1/\sqrt{f_{in}}$, where $f_{in}$ represents the number of inputs to the $i^{th}$ layer. Values were cut off at twice the standard deviation. For {\sc IDS}, enhancing the standard deviation improved results. Specifically, we employed a standard deviation of $1.5/\sqrt{f_{in}}$ for the $Q$-networks ensemble, and a standard deviation of $2/\sqrt{f_{in}}$ for the quantile network. Generally, this initialization mirrors an optimistic initialization. However, the results can vary significantly between runs, and our observation was that the {\sc IDS} method often exhibited greater variance compared to the other methods incorporated in our study. To conclude, the bias for all layers was set to $0$.

Refer to \cref{tab:parameters_cartpole} for the parameters of the agents.

\begin{table}[h]
	\centering
	\caption{Parameters of the agents for the Cartpole swingup problem.}
	\small
	\begin{tabular}[t]{c|cccc}
		\toprule
		{\bf Property} & {\sc DBMF-BPI} &{\sc BSP} & {\sc BSP 2} & {\sc IDS}  \\ \midrule
		Ensemble size $Q$	& $20$ & $20$ & $20$  &  $20$\\ \hline
		Ensemble size $M$   & $20$ & N.A. &N.A. & N.A.      \\ \hline
		Hidden layers sizes   & $[50]$ & $[50]$ &$[50]$ & $[50]$      \\ \hline
		Num. of quantiles & N.A. & N.A. & N.A. & $50$ \\ \hline
		Prior scale $Q$-values\\(depends on $N$)	& $3$		& $3$& $3$&  N.A.   \\ \hline
		Prior scale $M$-values\\(depends on $N$)		& $3$		& $3$& $3$&  N.A.   \\ \hline
		Replay buffer size			&$10^5$ 	&$10^5$ &$10^5$ &       $10^5$            \\  \hline
		Training period			& $1$	& $1$&$1$ &$1$                   \\  \hline
		Target network update period		& $4$	&$4$ &$4$ &$4$                   \\  \hline
		Batch size			& $128$	&$128$ &$128$ &$128$                   \\  \hline
		Mask probability $p$			& $0.7$	&$0.5$ & $0.7$& N.A.                   \\  \hline
		Learning rate $Q$-values			& $5\times 10^{-4}$	&$5\times 10^{-4}$ & $5\times 10^{-4}$& $5\times 10^{-4}$                  \\  \hline
		Learning rate $M$-values			& $5\times 10^{-4}$	& N.A.&N.A. &    N.A.               \\  \hline
		Learning rate quantile network & N.A.	&N.A. & N.A.&      $10^{-6}$             \\  \hline
		$ k$			& $2$	&N.A. & N.A.& N.A.                  \\
		\bottomrule
	\end{tabular}

	\label{tab:parameters_cartpole}
\end{table}
\subsubsection{Hardware and simulation time} To run the simulations, we used a local stationary computer with Ubuntu 20.10, an Intel® Xeon® Silver 4110 Processor (8 cores) and 48GB of ram. On average, it takes approximately $14$ days to complete all the simulations contained in this manuscript. Ubuntu is an open-source Operating System using the Linux kernel and based on Debian. For more information, please check \url{https://ubuntu.com/}.

\subsubsection{Code and libraries} We set up our experiments using Python 3.10 \cite{van1995python} (For more information, please refer to the following link \url{http://www.python.org}), and made use of the following libraries: Cython \cite{behnel2011cython}, NumPy \cite{harris2020array}, SciPy \cite{2020SciPy-NMeth}, PyTorch  \cite{NEURIPS2019_9015}, CVXPY \cite{diamond2016cvxpy}, MOSEK \cite{mosek}, Seaborn \cite{michael_waskom_2017_883859}, Pandas \cite{mckinney2010data}, Matplotlib \cite{hunter2007matplotlib}.

In the code, we make use of some code from the Behavior suite \cite{osband2020bsuite}, which is licensed with the APACHE 2.0 license. Changes, and new code, are published under the MIT license. To run the code, please, read the attached README file for instructions.\newpage

 \newpage
 
\section{Algorithms}\label{appB}
In the following section we describe the algorithms that we discuss in this manuscript. For simplicity, we provide a brief summary of them in form of table I.

\begin{table}[h]
	\centering
	\caption{Description of the various algorithms}
	\setlength{\leftmargini}{0.3cm}
	\small
	\begin{tabular}[t]{@{}m{.18\textwidth}m{.45\textwidth}m{.3\textwidth}@{}}
		\toprule
		Name & Description & Key points  \\ \midrule
		\textsc{PS-MDP-NaS}	& An adaptation of \textsc{MDP-NaS} that uses posterior sampling to sample an MDP $\phi_t$, which is then used to compute the optimal allocation (in \cref{eq:new_upper_bound}).            &  Requires the user to:
		\begin{itemize}
			\item Keep an estimate of the MDP.
			\item Perform value/policy iteration.
			\item Compute the allocation (a convex problem).
			\item Uses posterior sampling at each time step to sample an MDP and compute the allocation.
		\end{itemize}                \\\midrule
		\textsc{O-BPI}		&  An adaptation of \textsc{MDP-NaS} that learns the $Q$-values and $M$-values. These values are used to compute the optimal allocation in \cref{eq:new_upper_bound}.           			&\begin{itemize}
			\item Does not perform value iteration.
			\item Requires to keep an estimate of the transition function.
			\item Compute the allocation (a convex problem).
			\item Uses forced exploration to sample all state-action pairs i.o.
		\end{itemize}                           \\ \midrule
		Bootstrapped \textsc{MF-BPI}		& This algorithm is an extension of \textsc{O-BPI} that computes the allocation using the closed form solution in \cref{corollary:upper_bound_new_bound}. The $Q,M$-values used to compute the allocation are bootstrap samples.           			&  \begin{itemize}
			\item Does not perform value iteration and does not require to keep an estimate of the model.
			\item Closed form solution for the allocation.
			\item Uses bootstrapping (forced exploration not necessary).
		\end{itemize}                            \\ \midrule
		\textsc{DBMF-BPI}		& An extension of \textsc{BO-MFPI} to the Deep-RL setting. The baseline architecture is inspired from BootstrappedDQN with prior networks. This architecture is then adapted to compute a generative allocation.            			&
		\begin{itemize}
			\item  Like boostrapped \textsc{MF-BPI}.
			\item Requires to keep an ensemble of $Q,M$-networks.
			\item Can be applied to continuous state spaces.
		\end{itemize}                           \\ \bottomrule
	\end{tabular}

	\label{tab:algorithms_summary}
\end{table}

\subsection{{\sc PS-MDP-NaS} - Posterior Sampling for navigating MDPs}
In this sub-section we present {\sc PS-MDP-NaS}, an adaptation of {\sc MDP-NaS} that uses posterior sampling. An outline of the algorithm is given in \cref{algo:psmdpnas}. For simplicity, we omit the use of any stopping rule, since we focus more on the practical implementation of the algorithm.

At each timestep we sample an MDP $\phi_t$ from a posterior distribution, and use it to solve the optimal allocation in \cref{thm:upper_bound_T_new} with navigation constraints. When computing the optimal allocation $\arginf_{\omega \in \Omega(\phi)} U(\omega)$, we limit the maximum number of possible values of $ k$ for simplicity.

The algorithm considers a Dirichlet prior for the transition function, and a Gamma prior for the reward distribution. Specifically, for each $(s,a)$ we have a prior hyper-parameter $\rho_{sa} \in\mathbb{R}^{|S|}$ that characterizes the transition function, and two other hyper-parameters $\alpha_{sa},\beta_{sa}\in\mathbb{R}$ that characterize the reward distribution for each $(s,a)$.

After observing an experience at time $t$, the posterior parameters $\rho_{sa}(t),\alpha_{sa}(t),\beta_{sa}(t)$ at time $t$ are computed as follows
\begin{align*}
	\rho_{sa}(t) &\gets \rho_{sa} + N_t(s,a,s'),\\
	\alpha_{sa}(t) &\gets \alpha_{sa} + R_t(s,a),\\
	\beta_{sa}(t) &\gets \beta_{sa}+ N_t(s,a) - R_t(s,a).
\end{align*}
where $N_t(s,a,s')$ is the number of times the agent experienced state $s'$ after choosing action $a$ in state $s$ up to time $t$, $R_t(s,a)=\sum_{n=0}^t r_n \mathbf{1}\{s_n=s \land a_n=a\}$ is the total cumulative reward observed up to time $t$ after choosing action $a$ in state $s$, and, lastly, $N_t(s,a)=  \sum_{s'}N_t(s,a,s')$ is the total number of times the agent chose action $a$ in state $s$.

\begin{algorithm}[t]
	\caption{\textsc{PS-MDP-NaS} - Posterior Sampling for navigating MDPs}
	\label{algo:psmdpnas}
	\begin{algorithmic}[1]
		\REQUIRE Parameters $(\rho,\alpha,\beta)$.
		\STATE Initialize  counter $N_0(s,a,s')\gets 0$ for all $(s,a,s')\in S\times A\times S$.
		\STATE Observe $s_0\sim p_0$.
		\FOR{$t=0,1,2,\dots,$}
		\STATE {\bf Computing the allocation.}
		\begin{ALC@g}
			\STATE Sample a transition function $P_t(\cdot|s,a) \sim {\rm Dir}(\rho_{sa}(t))$ and reward distribution $q_t(\cdot|s,a) \sim {\rm Ber}(\alpha_{sa}(t)/(\alpha_{sa}(t)+\beta_{sa}(t)))$.
			\STATE Perform policy iteration using $\phi_t=(P_t,q_t)$ and compute $\pi_t^\star$, the greedy policy at time $t$. Use $\pi_t^\star$ to derive the various quantities needed to compute the allocation in Thm. \ref{thm:upper_bound_T_new}
			\STATE Compute allocation $\omega^{(t)}$ by solving the optimization problem in Thm. \ref{thm:upper_bound_T_new} using $(P_t,q_t)$.
		\end{ALC@g}
		\STATE {\bf Sampling step.}
		\begin{ALC@g}
			\STATE Sample $a_t\sim \omega^{(t)}(s_t,\cdot)$ and observe $(r_t,s_{t+1})\sim q(\cdot|s_t,a_t)\otimes P(\cdot|s_t,a_t)$.
		\end{ALC@g}

		\STATE {\bf Posterior update.}
		\begin{ALC@g}
			\STATE  Update number of visits $N_{t+1}(s_t,a_t,s_{t+1}) \gets N_{t}(s_t,a_t,s_{t+1}) + 1$ and total cumulative reward $R_{t+1}(s_t,a_t) \gets R_t(s_t,a_t) + r_t$.
			\STATE Update posterior parameters
			\begin{align*}
				\rho_{sa}(t+1) &\gets \rho_{sa} + N_{t+1}(s,a,s'),\\
				\alpha_{sa}(t+1) &\gets \alpha_{sa} + R_{t+1}(s,a),\\
				\beta_{sa}(t+1) &\gets \beta_{sa}+ N_{t+1}(s,a) - R_{t+1}(s,a).
			\end{align*}
		\end{ALC@g}
		\ENDFOR
	\end{algorithmic}
\end{algorithm}

\subsection{{\sc O-BPI} - Online Best Policy Identification}
In this part, we introduce {\sc O-BPI}, or Online Best Policy Identification. This procedure bears resemblance to  {\sc MDP-NaS}, but sidesteps the need for policy iteration at every timestep. Instead, we employ stochastic approximation to learn the $(Q,M)$-values and use these calculated values to compute the allocation. We describe a variation where the user exclusively learns the $M$-function for a given $k$. It's important to note, however, that the agent has the capability to learn multiple $M$-functions, for varying $k$ values, to better approximate the true solution.

We present a version of the algorithm that uses forced exploration, where we mix the allocation that we obtain from \cref{thm:upper_bound_T_new} with a uniform distribution.  It's straightforward to derive an extension using bootstrapping, as we show in subsequent subsections.

\paragraph{O-BPI.} To compute the allocation $\omega$ we require to estimate the transition function (\emph{e.g.}, using maximum likelihood), and  we denote its estimate at time $t$ by $\hat P_t$. To derive $\omega$, as for {\sc MF-BPI}, we compute $\pi_t^\star, \Delta_t$ and $\deltaminestimatet{t}$ using the estimate $Q$-function $Q_t$. Then, we solve  the following convex problem
\begin{equation}\label{eq:upper_bound_time_t}
	\arginf_{\omega \in {\cal C}_t}\max_{s,a\neq \pi_{t}^\star(s)} \frac{2+8\varphi^2  M_{ t}(s,a)^{2^{1- k}} }{\omega(s,a)(\Delta_{t}(s,a)+\lambda)^2} +\max_{s'} \frac{C_t(s')(1+\gamma)^2}{\omega(s',\pi^\star(s')) (\Delta_{t}(s,a)+\lambda)^2(1-\gamma)^2},
\end{equation}
where   $C_t(s')=\max\left(4,16\gamma^2\varphi^2 M_{ t}(s',\pi_{t}^\star(s'))^{2^{1- k}}\right)$. The constraint set ${\cal C}_t$ is simply $\Delta(S\times A)$ in the generative case, and ${\cal C}_t=\{\omega:  \omega_{s'} = \sum_{s,a} \omega(s,a) \hat P_t(s'|s,a), \forall s'\}$ in the case with navigation constraints.
In particular, for finite state-action MDPs we use  $N_t(s,a,s')$ (the number of visits up to time $t$ of $(s,a,s')$) to estimate $P$.
As in \textsc{MDP-NaS} \cite{marjani2021navigating}, to ensure that the various estimates asymptotically converge to the true quantities, we force  exploration using a D-tracking like procedure, that is, with probability $\epsilon_t \propto 1/N_t(s_t)^\lambda, \lambda\in(0,1]$,  we choose an action uniformly at random in state $s_t$ at time $t$ (since this type of forced exploration is slightly different from the one proposed in \cite{marjani2021navigating}. We have the following guarantee.

\begin{lemma}[Forced exploration]\label{lemma:forced_exploration}
	Let $\epsilon_t(s) \coloneqq 1/N_t(s)^\alpha$ with  $\alpha\in (0,1]$. Then, \textsc{O-BPI} satisfies $\mathbb{P}_\phi(\forall (s,a)\in S\times A, \lim_{t\to\infty} N_t(s,a)=\infty)=1$.
\end{lemma}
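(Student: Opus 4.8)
The plan is to show that the forced exploration scheme guarantees that every state is visited infinitely often, and then conclude that every state-action pair is visited infinitely often. The argument proceeds in two stages: first a "state-level" argument using the communicating assumption on $\phi$, and then an "action-level" argument using the fact that in each visited state we play a uniformly random action with probability $\epsilon_t(s_t)$.

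First I would set up a proof by contradiction. Let $S_\infty$ denote the (random) set of states visited infinitely often; since $S$ is finite, $S_\infty \neq \emptyset$ almost surely, and after some finite random time $t_0$ the trajectory stays inside $S_\infty$. The goal is to show $S_\infty = S$ almost surely. Suppose, on the event under consideration, that $S_\infty \subsetneq S$. Because $\phi$ is communicating, there exist a state $s \in S_\infty$, a state $s' \notin S_\infty$, and an action $a$ with $P(s'|s,a) > 0$; more precisely one can find such a pair on the "boundary" of $S_\infty$. Now I would argue that, conditionally on the history, each time the chain is in state $s$ after time $t_0$, the probability of next selecting action $a$ is at least $\epsilon_t(s)/|A| = 1/(|A| N_t(s)^\alpha)$, since with probability $\epsilon_t(s)$ we play uniformly at random. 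Since $s$ is visited infinitely often, $N_t(s)\to\infty$ along the visit times, but the visit times $t$ grow at most linearly with the number of visits, so summing over successive visits to $s$ the series $\sum_{\text{visits } n} \frac{1}{|A| n^{\alpha}}\cdot P(s'|s,a)$ diverges for $\alpha \in (0,1]$. By the extended Borel--Cantelli lemma (L\'evy's conditional version, for sums of conditional probabilities adapted to the natural filtration), the event "action $a$ is chosen in state $s$ and the chain transitions to $s'$" occurs infinitely often almost surely, contradicting $s' \notin S_\infty$. Hence $S_\infty = S$ almost surely, i.e. every state is visited infinitely often.

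For the second stage, fix any pair $(s,a)$. We have just shown $N_t(s)\to\infty$ almost surely. Each time the agent is in $s$ at some time $t$, it plays $a$ with probability at least $\frac{\epsilon_t(s)}{|A|} = \frac{1}{|A|N_t(s)^\alpha}$ regardless of the allocation $\omega^{(t)}$. Indexing by the successive visit times to $s$, the corresponding conditional probabilities are at least $\frac{1}{|A| n^{\alpha}}$ at the $n$-th visit, and $\sum_{n\ge 1} \frac{1}{|A| n^{\alpha}} = \infty$ for $\alpha\le 1$. Applying the conditional Borel--Cantelli lemma once more yields that $a$ is chosen in $s$ infinitely often almost surely, i.e. $N_t(s,a)\to\infty$. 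A union bound over the finitely many pairs $(s,a)$ gives $\mathbb{P}_\phi(\forall (s,a), \lim_{t\to\infty} N_t(s,a) = \infty) = 1$.

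The main obstacle is the first stage: one must be careful that "$N_t(s)\to\infty$ along visits to $s$" does not by itself kill the sum $\sum 1/N_t(s)^\alpha$, which is why the linear-growth observation ($t$ at the $n$-th visit to $s$ is at most some constant times $n$, since the chain can only accrue visits one per step — in fact $N_t(s)$ at the $n$-th visit equals $n$) and the restriction $\alpha\le 1$ are essential; and one must invoke the correct (conditional / martingale) version of Borel--Cantelli, since the events are not independent and their probabilities depend on the adaptively chosen allocation $\omega^{(t)}$. Everything else is routine.
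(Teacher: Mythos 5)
Your proof is correct, and its core is the same as the paper's: lower-bound the conditional probability of playing any given action at the $i$-th visit to a state by $\epsilon_t(s)/|A| = 1/(i^\alpha|A|)$, note that $\sum_i i^{-\alpha}$ diverges for $\alpha\in(0,1]$, and conclude via Borel--Cantelli that every action is played infinitely often in every infinitely-visited state. The difference is in how the state-level recurrence is handled: the paper simply imports the fact that, in a communicating MDP, every state is visited infinitely often once every action is chosen infinitely often in every visited state (citing Observation 1 of \cite{singh2000convergence}), whereas you re-derive this from scratch by a boundary argument on the random set $S_\infty$ of infinitely-visited states, using the communicating property to exhibit a pair $(\bar s,\bar a)$ with $\bar s\in S_\infty$ and $P(\bar s'|\bar s,\bar a)>0$ for some $\bar s'\notin S_\infty$, and a second application of the conditional Borel--Cantelli lemma to reach a contradiction. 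Your version is therefore self-contained at the cost of a slightly longer argument (and of having to handle the randomness of the boundary pair, which you can formalize by taking a union over the finitely many fixed triples $(s,a,s')$ with $P(s'|s,a)>0$). You are also right, and arguably more careful than the paper's wording, that the relevant tool is the extended (L\'evy/conditional) second Borel--Cantelli lemma for adapted events, since the action-selection events are neither independent nor of fixed probability, as the allocation $\omega^{(t)}$ is data-dependent; and your observation that $N_t(s)=n$ at the $n$-th visit is exactly what makes the series divergence go through for $\alpha\le 1$.
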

\begin{proof}
	The lemma follows from Observation 1 in \cite{singh2000convergence}. We use the fact that in communicating MDPs every state gets visited infinitely often as long as each action is chosen infinitely often in each state. Denote by $\mathbb{P}(a_t=a|s_t=s, N_t(s)=i)$ the probability that action $a$ is executed at the $i^{th}$ visit to state $s$. The forced exploration step in \textsc{O-BPI} ensures that
	$\mathbb{P}(a_t=a|s_t=s, N_t(s)=i) \geq \epsilon_t(s)/|A| = 1 / (i^\alpha |A|).$ Consequently, for all $(s,a)$ and $0< \alpha \leq 1$ we have that
	\[
	\sum_{i=1}^\infty \mathbb{P}(a_t=a|s_t=s, N_t(s)=i) \geq \frac{1}{|A|}\sum_{i=1}^\infty \frac{1}{i^\alpha} = \infty.
	\]
	By the Borel-Cantelli lemma it follows that asymptotically each action is chosen infinitely often in each state, which yields the desired result.
\end{proof}

\subsection{Boostrapped {\sc MF-BPI} - Model Free Best Policy Identification}
In this section, we describe in more detail bootstrapped {\sc MF-BPI}. {\sc MF-BPI} is a model-free algorithm that adapts exploration based on a sample-complexity bound, while using bootstrapping to characterize the epistemic uncertainty. The algorithm also relies on   a closed-form solution for computing the allocation $\omega$, which eliminates the need for solving an optimization problem.

Recall the closed form solution for the allocation $\omega$ from  Corollary \ref{corollary:upper_bound_new_bound}:
\begin{equation}
	\omega(s,a) \propto \begin{cases}
		H(s,a) & \text{if } a\neq \pi^\star(s)\\
		\sqrt{ H\sum_{s,a\neq\pi^\star(s)} H(s,a)/|S|} & \text{otherwise},
	\end{cases}
\end{equation}
where  $H(s,a)$ and $H$ are defined as follows:
\begin{align}
H(s,a) &= \frac{2+8\varphi^2 M_{sa}^{ k}[V^\star]^{2^{1- k}} }{(\Delta(s,a)+\lambda)^2},\\
	H &= \frac{\max_{s'} C(s') (1+\gamma)^2}{ (\deltamin+\lambda)^2(1-\gamma)^2},
\end{align}
for some fixed value $ k$ that should be treated as a hyper-parameter, parameter $\lambda\geq 0$ and $C(s')=\max\left(4,16\gamma^2\varphi^2 M_{s',\pi^\star(s')}^{ k}[V^\star]^{2^{1- k}}\right)$.
\begin{algorithm}[t]
	\caption{Bootstrapped \textsc{MF-BPI}}
	\label{algo:mfbpi_full}
	\begin{algorithmic}[1]
		\REQUIRE Parameters $(\lambda,  k,p)$; ensemble size $B$;  learning rates $\{(\alpha_{t,b},\beta_{t,b})\}_{t,b}$.
		\STATE Initialize  $Q_{1,b}(s,a)\sim {\cal U}([0,1/(1-\gamma)])$ and $M_{1,b}(s,a)\sim {\cal U}([0,1/(1-\gamma)^{2^ k}])$ for all $(s,a)\in S\times A$ and $b\in[B]$.
		\STATE Observe $s_0\sim p_0$.
		\FOR{$t=0,1,2,\dots,$}
		\STATE {\bf Compute allocation.}
		\begin{ALC@g}
			\STATE Sample $\xi \sim {\cal U}([0,1])$ and set,  $\hat Q_t(s,a)={\rm Quantile}_\xi(\{Q_{t,1}(s,a), \dots, Q_{t,B}(s,a)\})$ (sim. $\hat M_t)$ for all $(s,a)$.
			\STATE Compute generative solution $\omega_o^{(t)}$ using  \cref{corollary:upper_bound_new_bound}. Let $\pi_{t}^\star(s) = \argmax_a \hat Q_t(s,a)$, $\Delta_{t}(s,a) = \hat Q_{t}(s,\pi_{t}^\star(s)) - \hat Q_{t}(s,a)$, $\deltaminestimatet{t} = \min_{s,a\neq\pi_{t}^\star(s)} \Delta_{t}(s,a)$ and
			\begin{align*}
				&H_{t}(s,a) \coloneqq \frac{2+8\varphi^2 \hat  M_{t}(s,a)^{2^{1- k}} }{(\Delta_{t}(s,a)+\lambda)^2},\\
				&H_{t} \coloneqq \frac{\max_{s'} 4 (1+\gamma)^2\max(1,4\gamma^2\varphi^2 \hat  M_{t}(s',\pi_{t}^\star(s'))^{2^{1- k}})}{ (\deltaminestimatet{t}+\lambda)^2(1-\gamma)^2}
			\end{align*}
			\STATE Set
			\[\omega_o^{(t)}(s_t,a) =
			\begin{cases}
				H_{t}(s_t,a) & \hbox{ if }a\neq \pi_t^\star(s_t),\\
				\sqrt{H_{t} \sum_{s,a\neq \pi_t^\star(s)} H_{t}(s,a)/|S|} & \hbox{ otherwise }.
			\end{cases}
			\]
			\STATE Let $\omega^{(t)}(s_t,a) =  \frac{\omega_o^{(t)}(s_t,a)}{\sum_{a'} \omega_o^{(t)}(s_t,a')} $ be the policy at time $t$ in state $s_t$.
		\end{ALC@g}
		\STATE Sample $a_t\sim \omega^{(t)}(s_t,\cdot)$; observe $(r_t,s_{t+1})\sim q(\cdot|s_t,a_t)\otimes P(\cdot|s_t,a_t)$.
		\STATE {\bf Training step.}
		\begin{ALC@g}
			\FOR{$b=1,\dots,B$}
			\STATE With probability $p$, using the experience $(s_t,a_t,r_t,s_{t+1})$, update  the values $Q_{t,b},M_{t,b}$  using \cref{eq:stochastic_approximation_step_qvalues,eq:stochastic_approximation_step_mvalues}
			\begin{align*}
				Q_{t+1,b}(s_t,a_t) &= Q_{t,b}(s_t,a_t) + \alpha_{t,b}(s_t,a_t)\left(r_t+\gamma \max_a Q_{t,b}(s_{t+1},a)-Q_{t,b}(s_t,a_t)\right),\\
				M_{ t+1,b}(s_t,a_t) &= M_{t,b}(s_t,a_t) + \beta_{t,b}(s_t,a_t)\left(\left(\delta_{t,b}'/\gamma\right)^{2 k} - M_{t,b}(s_t,a_t)\right),
			\end{align*}
			where  $\delta_{t,b}'= r_t+\gamma \max_a Q_{t+1,b}(s_{t+1},a)-Q_{t+1,b}(s_t,a_t)$.
			\ENDFOR
			\STATE Compute greedy policy as \[\bar \pi_t^\star(s) \gets  {\rm Median}(\{ \argmax_a Q_{t+1,1}(s,a),\dots,\argmax_a Q_{t+1,B}(s,a)   \}).\]
		\end{ALC@g}
		\ENDFOR
	\end{algorithmic}
\end{algorithm}
Rather than resorting to policy iteration, our approach involves learning the $Q$-values and $M$-values through stochastic approximation. The algorithm keeps track of estimates $Q_t(s,a)$ and $M_t(s,a)$ for all states and actions up to a given time $t$. The updates for the stochastic approximation are carried out at every time step $t$ and can be represented as:
\begin{align}
	Q_{t+1}(s_t,a_t) &= Q_t(s_t,a_t) + \alpha_t(s_t,a_t)\left(r_t+\gamma \max_a Q_t(s_{t+1},a)-Q_t(s_t,a_t)\right),\\
	M_{t+1}(s_t,a_t) &= M_t(s_t,a_t) + \beta_t(s_t,a_t)\left(\left(\delta_t'/\gamma\right)^{2^ k} - M_t(s_t,a_t)\right).
\end{align}
In this equation, $\delta_t'= r_t+\gamma \max_a Q_{t+1}(s_{t+1},a)-Q_{t+1}(s_t,a_t)$, and ${(\alpha_t,\beta_t)}_{t\geq 0}$ are the learning rates that meet the Robbins-Monroe conditions \cite{borkar2009stochastic}.

\paragraph{Bootstrap sample.} Our method employs a bootstrap sampling strategy to estimate uncertainties in a non-parametric way. This approach can augment a forced exploration step, ensuring the convergence of the estimates asymptotically. 
We maintain a collection of $(Q,M)$-values and produce a new bootstrap sample $(\hat Q_t,\hat M_t)$ at every time step $t$.
In particular, we start with an ensemble of $Q$-functions ${Q_1,\dots,Q_B}$ (similarly for ${M_1,\dots,M_B}$) initialized uniformly at random in $[0,1/(1-\gamma)]$ (similarly $[0,1/(1-\gamma)^{2k}]$).   It's important to highlight that initializing the ensemble members across the full range of potential values is essential to account for uncertainties not arising from the collected data.

At each timestep $t$, a bootstrap sample $\hat Q_t(s,a)$ (and similarly $\hat M_t$) is generated by sampling a uniform random variable $\xi\sim {\cal U}([0,1])$, and, for each $(s,a)$, $\hat Q_t(s,a)={\rm Quantile}_\xi({Q_{t,1}(s,a), \dots, Q_{t,B}(s,a)})$; in other words, $\hat Q_t(s,a)$ is the $\xi$-quantile of ${Q_{t,1}(s,a), \dots, Q_{t,B}(s,a)}$ assuming a linear interpolation between the $Q$-values. This method is akin to sampling from an empirical cumulative distribution function (CDF), where the CDF in this case embodies the uncertainty over the $Q$-values. We also apply the same bootstrap sampling procedure to $\hat M_t(s,a)$. Empirically, we found the method to work for small values of the ensemble size $B$ for most problems, but we did not conduct extensive research on this topic. A promising venue of research is to study how the parameter $p$ and the ensemble size $B$ can be tuned for the problem at hand.

\paragraph{Computing the allocation.} This bootstrap sample $(\hat Q_t, \hat M_t)$ is subsequently used to calculate the allocation $\omega^{(t)}$, where $\Delta_{t}(s,a) = \max_{a'} \hat Q_{t}(s,a')-\hat Q_{t}(s,a)$, $\pi_t^\star(s)=\argmax_a \hat Q_t(s,a)$, and $\deltaminestimatet{t} = \min_{s,a\neq \pi_t^\star(s)} \Delta_t(s,a)$.
 Then, we set
\begin{align*}
	&H_{t}(s,a) \coloneqq \frac{2+8\varphi^2 \hat  M_{t}(s,a)^{2^{1- k}} }{(\Delta_{t}(s,a)+\lambda)^2},\\
	&H_{t} \coloneqq \frac{\max_{s'} 4 (1+\gamma)^2\max(1,4\gamma^2\varphi^2 \hat  M_{t}(s',\pi_{t}^\star(s'))^{2^{1- k}})}{ (\deltaminestimatet{t}+\lambda)^2(1-\gamma)^2}
\end{align*}
as well as
\[\omega_o^{(t)}(s_t,a) =
\begin{cases}
	H_{t}(s_t,a) & \hbox{ if }a\neq \pi_t^\star(s_t),\\
	\sqrt{H_{t} \sum_{s,a\neq \pi_t^\star(s)} H_{t}(s,a)/|S|} & \hbox{ otherwise }.
\end{cases}
\]
The final policy is then obtain by normalizing $\omega_o^{(t)}$: $\omega^{(t)}(s_t,a) =  \frac{\omega_o^{(t)}(s_t,a)}{\sum_{a'} \omega_o^{(t)}(s_t,a')} $.

\paragraph{Greedy policy.}
Lastly, an overall greedy policy $\bar \pi_t^\star$ can be estimated by using the ensemble of $Q$-functions. For example, by majority voting as
\[\bar \pi_t^\star(s) \gets  {\rm Mode}(\{ \argmax_a Q_{t+1,1}(s,a),\dots,\argmax_a Q_{t+1,B}(s,a)   \}).\]

\subsection{{\sc DBMF-BPI} - Deep Boostrapped Model Free Best Policy Identification}
To generalize bootstrapped MF-BPI to continuous Markov Decision Processes (MDPs), we propose DBMF-BPI. DBMF-BPI leverages the concept of prior networks from BSP (Bootstrapping with Additive Prior) \cite{osband2018randomized}, to account for uncertainty not arising from the observed data.

\paragraph{Ensemble.} As in the previous method, we maintain an ensemble of $Q$-values ${Q_{\theta_1},\dots,Q_{\theta_B}}$ (along with their target networks) and an ensemble of $M$-values ${M_{\tau_1},\dots,M_{\tau_B}}$. Specifically, $Q$-values are computed as follows for a generic $b$-th member of the ensemble
\[Q_{\theta_b}(s,a) = Q_{\theta_{b,0}}(s,a) + \beta_Q Q_{\theta_{b,p}}(s,a),\]
where $\beta_Q \geq 0$ is a hyper-parameter defining the scale of the prior, $\theta_{b,0}$ is a learnable parameter, and $Q_{\theta_{b,p}}$ is a fixed, randomly-initialize, $Q$-network that serves as a randomized prior value function. Similarly, we compute the $M$-values as
\[M_{\tau_b}(s,a) = M_{\tau_{b,0}}(s,a) + \beta_M M_{\tau_{b,p}}(s,a),\]
where $\beta_M\geq 0$ is a hyper-parameter, $\tau_{b,0}$ is a learnable parameter and $M_{\tau_{b,p}}$ is a fixed random prior network for the $M$- function.

Note that the function of the prior network is to guarantee that the $(Q,M)$-values are capable of covering the full spectrum of potential values. This is similar to the random initialization procedure in MF-BPI. An alternate strategy might involve initializing the network in an optimistic way (\emph{i.e.}, by sampling parameters from a Gaussian distribution with larger variance), but our observations indicate that this may lead to worse performance.

\paragraph{Bootstrap sample.} As before, at each timestep $t$ a bootstrap sample $\hat Q_t(s,a)$ (and similarly $\hat M_t$) is generated by sampling a uniform random variable $\xi\sim {\cal U}([0,1])$, and, for each $(s,a)$, set $\hat Q_t(s,a)={\rm Quantile}_\xi({Q_{t,\theta_1}(s,a), \dots, Q_{t,\theta_B}(s,a)})$. However, for numerical stability, we found it was most effective to sample $\xi$ at the end of an episode, or every $n \propto (1-\gamma)^{-1}$ steps.

\paragraph{Computing the allocation.} Using the bootstrap sample $(\hat Q_t, \hat M_t)$ we compute the allocation as follows. We set
\begin{align}
	H_{t}(s_t,a) &= \frac{2+8\varphi^2 \hat M_{t}(s_t,a)^{2^{1- k}} }{(\Delta_{t}(s_t,a)+\lambda)^2},\\
	H_t&=\frac{ 4 (1+\gamma)^2\max(1,4\gamma^2\varphi^2 \hat M_{t}(s_t,\pi_{t}^\star(s_t))^{2^{1- k}})}{ (\deltaminestimatet{t}+\lambda)^2(1-\gamma)^2},
\end{align}
where $\pi_t^\star(s_t) = \argmax_a \hat Q_t(s_t,a)$. Note that $H_t$ is an approximation of the true value (we are not taking the maximum over all possible states). Subsequently, we establish the allocation $\omega_o^{(t)}$: $\omega_o^{(t)}(s_t,a) = H_{t}(s_t,a)$ if $a\neq \pi_{t}^\star(s_t)$, and $\omega_o^{(t)}(s_t,a) = \sqrt{H_{t} \sum_{a\neq \pi_{t}^\star(s_t)} H_{t}(s_t,a)}$ otherwise. In the final step, we construct an $\epsilon_t$-soft exploration policy $\omega^{(t)}(s_t,\cdot)$ by blending $\omega_o^{(t)}(s_t,\cdot)/\sum_{a} \omega_o^{(t)}(s_t,a)$ with a uniform distribution, utilizing an exploration parameter $\epsilon_t$.

\paragraph{Training and minimum gap estimation.} The training procedure follows that of the classical DQN algorithm \cite{mnih2015human}. Each $Q$-network is trained by minimizing an MSE loss criterion. We use also the MSE loss to train  the $M$-networks  over a batch sampled from the replay buffer (note that the $M$-networks do not require a target network).

Next, $\deltaminestimatet{t}$ is estimated through stochastic approximation, using the smallest gap from the most recent batch of transitions retrieved from the replay buffer as a reference. In particular,  the target is given by the following expression
\[
\delta_t = \min_{b\in[B]}\min_{j\in {\cal B}}  \max_{a\neq \pi_{\theta_b}^\star(s_j)}  Q_{\theta_b}(s_j,\pi_{\theta_b}^\star(s_j)) - Q_{\theta_b}(s_j,a)
\]
with $\pi_{\theta_b}(s) = \argmax_a Q_{\theta_b}(s,a)$. The estimate is then updated as $\deltaminestimatet{t+1} \gets (1-\alpha_t)\deltaminestimatet{t} + \alpha_t \delta_t$ for some learning rate $\alpha_t=O(1/t)$.
\paragraph{Greedy policy}
Lastly, an overall greedy policy $\bar \pi_t^\star$ can be estimated by using the ensemble of $Q$-functions. For example, by majority voting as
\[\bar \pi_t^\star(s) \gets  {\rm Mode}(\{ \argmax_a Q_{t+1,\theta_1}(s,a),\dots,\argmax_a Q_{t+1,\theta_B}(s,a)   \}).\]

The full pseudo-code of the algorithm can be found in the next page.
\clearpage
\begin{algorithm}[t]
\caption{\textsc{DBMF-BPI} (Deep Bootstrapped Model Free BPI) - Full Algorithm}
\label{algo:dbmfbpi_full}
\small
\begin{algorithmic}[1]
    \REQUIRE Parameters $(\lambda,  k)$; ensemble size $B$;  exploration rate $\{\epsilon_t\}_t$;  estimate $\deltaminestimatet{0}$; mask probability $p$.
    \FUNCTION{{\tt MainLoop}}
        \STATE Initialize replay buffer ${\cal D}$, networks  $Q_{\theta_b}, M_{\tau_b}$  and targets $Q_{\theta'_b}$ for all $b\in[B]$.
        \FOR{$t=0,1,2,\dots,$}
	        \STATE {\bf Sampling step.}
	        \begin{ALC@g}
	        	\STATE Compute allocation $\omega^{(t)} \gets {\tt ComputeAllocation}(s_t,\{Q_{\theta_b} ,M_{\tau_b}\}_{b\in [B]},\deltaminestimatet{t},\gamma,\lambda, k, \epsilon_t)$.
	        	\STATE Sample $a_t\sim \omega^{(t)}(s_t,\cdot)$ and observe $(r_t,s_{t+1})\sim q(\cdot|s_t,a_t)\otimes P(\cdot|s_t,a_t)$.
	        	\STATE Add transition $z_t=(s_t,a_t,r_t,s_{t+1})$ to the replay buffer ${\cal D}$.
	        \end{ALC@g}
	        \STATE {\bf Training step.}
	        \begin{ALC@g}
	        	\STATE  Sample a batch ${\cal B}$ from ${\cal D}$, and with probability $p$ add the $i^{th}$ experience in ${\cal B}$ to a sub-batch ${\cal B}_b$, $\forall b\in [B]$.   Update the $(Q,M)$-values  of the $b^{th}$ member in the ensemble using ${\cal B}_b$: $\{Q_{\theta_b},Q_{\theta_b'},M_{\tau_b}\}_{b\in [B]} \gets {\tt Training}(\{{\cal B}_b,Q_{\theta_b},Q_{\theta_b'},M_{\tau_b}\}_{b\in [B]})$.
	        	\STATE Update estimate $\deltaminestimatet{t+1} \gets {\tt EstimateMinimumGap}(\deltaminestimatet{t}, {\cal B}, \{Q_{\theta_b}\}_{b\in[B]})$.
	        \end{ALC@g}
	        	\STATE Compute greedy policy as \[\bar \pi_t^\star(s) \gets  {\rm Median}(\{ \argmax_a Q_{t+1,\theta_1}(s,a),\dots,\argmax_a Q_{t+1,\theta_B}(s,a)   \}).\]
        \ENDFOR
    \ENDFUNCTION
\end{algorithmic}

\vspace{.1em}
\begin{algorithmic}[1]
    \FUNCTION{{\tt EstimateAllocation}$(s_t,\{Q_{\theta_b},M_{\tau_b}\}_{b\in [B]}, \deltaminestimatet{t} \gamma,\lambda, k, \epsilon_t)$}
    	\STATE Sample $\xi \sim {\cal U}([0,1])$ and set,  $\hat Q_t(s_t,a)={\rm Quantile}_\xi(\{Q_{t,\theta_1}(s_t,a), \dots, Q_{t,\theta_B}(s_t,a)\})$ (sim. $\hat M_t)$.
        \STATE Let $\pi_{t}^\star(s_t) = \argmax_a \hat Q_t(s_t,a)$, and set $\Delta_{t}(s_t,a) =\hat Q_t(s_t,\pi_{t}^\star(s_t,a)) - \hat Q_t(s_t,a)$.
        \STATE Compute MDP-related quantities
        \begin{align*}
        	&H_{t}(s_t,a) \coloneqq \frac{2+8\varphi^2 \hat M_t(s_t,a)^{2^{1- k}} }{(\Delta_{t}(s_t,a)+\lambda)^2},\\
        	&H_{t} \coloneqq \frac{ 4 (1+\gamma)^2\max(1,4\gamma^2\varphi^2 \hat M_t(s_t,\pi_{t}^\star(s_t))^{2^{1- k}})}{ (\deltaminestimatet{t}+\lambda)^2(1-\gamma)^2}
        \end{align*}
        \STATE Set
        \[\omega_o^{(t)}(s_t,a) =
        \begin{cases}
        	H_{t}(s_t,a) & \hbox{ if }a\neq \pi_{t}^\star(s_t),\\
        	\sqrt{H_{t} \sum_{a\neq \pi_{t}^\star(s_t)} H_{t}(s_t,a)} & \hbox{ otherwise }.
        \end{cases}
        \]
        \STATE {\bf Return } $\omega^{(t)}(s_t,a) =  \dfrac{\epsilon_t}{|A|} +(1-\epsilon_t)\frac{\omega_o^{(t)}(s_t,a)}{\sum_{a'} \omega_o^{(t)}(s_t,a')} $, the policy at time $t$ in state $s_t$.
    \ENDFUNCTION
\end{algorithmic}

\vspace{.1em}
\begin{algorithmic}[1]
    \FUNCTION{{\tt Training}$(\{{\cal B}_b,Q_{\theta_b},Q_{\theta_b'},M_{\tau_b}\}_{b\in [B]})$}
        \FOR{ each model in the ensemble $b=1,\dots,B$}
             \STATE Compute targets $y_j = r_j+\gamma \max_a Q_{\theta_b'}(s_{j+1},a)$ and perform a gradient descent step on $Q_{\theta_b}$ using $\nabla_{\theta_b} (y_j-Q_{\theta_b}(s_j,a_j))^2$ for all $j\in {\cal B}_b$.
             \STATE Compute targets $\bar y_j=(r_j + \max_a Q_{\theta_b}(s_{j+1},a) - Q_{\theta_b}(s_j,a_j))/\gamma$ and perform a gradient descent step on $M_{\tau_b}$using $\nabla_{\tau_b} (\bar y_j^{2^{ k}} - M_{\tau_b}(s_j,a_j))^2$.
        \ENDFOR
        \STATE Every $K$ steps update target models: $\theta_{b'} \gets \theta_b$ for all $b\in [B]$.
        \STATE {\bf Return} updated models $\{Q_{\theta_b},Q_{\theta_b'},M_{\tau_b}\}_{b\in [B]}$.
    \ENDFUNCTION
\end{algorithmic}

\vspace{.1em}
\begin{algorithmic}[1]
	\FUNCTION{{\tt EstimateMinimumGap}$(\deltaminestimatet{t}, {\cal B}, \{Q_{\theta_b}\}_{b\in [B]})$}
	\STATE Set learning rate $\alpha_t = O(1/t)$.
	\STATE Update estimate of $\deltaminestimatet{t}$: let $\pi_{\theta_b}^\star(s_j) = \argmax_a Q_{\theta_b}(s_j,a)$ and compute target
	\[\delta_t = \min_{b\in[B]}\min_{j\in {\cal B}}  \max_{a\neq \pi_{\theta_b}^\star(s_j)}  Q_{\theta_b}(s_j,\pi_{\theta_b}^\star(s_j)) - Q_{\theta_b}(s_j,a)\] and update estimate $\deltaminestimatet{t+1} \gets (1-\alpha_t)\deltaminestimatet{t} + \alpha_t \delta_t$.
	\STATE {\bf Return} updated estimate $\deltaminestimatet{t+1}$.
	\ENDFUNCTION
\end{algorithmic}

\end{algorithm}

\clearpage
 \newpage
 \newpage
 
\section{Proofs}\label{appC}
In this appendix, we provide the proofs of our main results. We start with some preliminary results. We then introduce new notation to accommodate extensions beyond the assumptions made in the main body of the paper, and prove our main theorem. Specifically, we broaden our sample-complexity bounds to encompass communicating MDPs without a unique optimal policy. 

\subsection{Preliminaries}

Let $V:{\cal S}\to \mathbb{R}$ be a bounded function. We show that $\var_{sa}[V] \leq \md_{sa}[V]^2$. This inequality follows directly from the Bhatia-Davis inequality \cite{bhatia2000better}. Applied to the value function of our MDP, this result implies that in the bound derived in Theorem \ref{thm:upper_bound}, the term corresponding to the span of $V^\star$ might be sometimes dominant, and we might indeed wish to remove it from the upper bound. 

\begin{lemma}\label{lemma:bathia-davis-ineq}
    Consider an MDP $\phi$ with $|S|$ states and a bounded vector  $V \in \mathbb{R}^{|S|}$. For any $(s,a),$ we have $\var_{sa}[V] \leq \md_{sa}[V]^2$. If $\md_{sa}[V] \leq 1$ then $\var_{sa}[V] \leq \md_{sa}[V]$.
\end{lemma}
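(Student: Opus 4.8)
The plan is to argue directly from the definitions, without needing the full strength of the Bhatia--Davis inequality (although it can be phrased that way at the end). Write $\mu \coloneqq \mathbb{E}_{s'\sim P(\cdot|s,a)}[V(s')]$ for the conditional mean, so that by definition $\var_{sa}[V] = \mathbb{E}_{s'\sim P(\cdot|s,a)}[(V(s')-\mu)^2]$ and $\md_{sa}[V] = \max_{s''\in S}|V(s'')-\mu|$.

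First I would observe that for every state $s'$ in the support of $P(\cdot|s,a)$ (in fact for every state at all), $|V(s') - \mu| \leq \md_{sa}[V]$, and hence $(V(s')-\mu)^2 \leq \md_{sa}[V]^2$. Taking the expectation over $s'\sim P(\cdot|s,a)$ on both sides yields $\var_{sa}[V] \leq \md_{sa}[V]^2$, which is the first claim.

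For the second claim, I would simply note that if $\md_{sa}[V]\leq 1$ then $\md_{sa}[V]^2 \leq \md_{sa}[V]$, since $t^2\le t$ for $t\in[0,1]$; combining this with the first claim gives $\var_{sa}[V]\leq \md_{sa}[V]$.

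I do not anticipate any real obstacle here --- the only point that deserves a moment's care is that $\md_{sa}[V]$ is defined as a supremum over \emph{all} states, not merely over the support of $P(\cdot|s,a)$, so it is (possibly loosely) an upper bound on the deviations that actually enter the variance, which only helps the inequality. If a ``named'' inequality is preferred, the same bound also drops out of Bhatia--Davis applied with $m = \mu - \md_{sa}[V]$ and $M = \mu + \md_{sa}[V]$, which gives $\var_{sa}[V] \leq (M-\mu)(\mu-m) = \md_{sa}[V]^2$.
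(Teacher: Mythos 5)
Your proof is correct, and it takes a more elementary route than the paper. You bound the variance directly: since $|V(s')-\mu|\le \md_{sa}[V]$ for every state (the span being a supremum over all of $S$, hence in particular over the support of $P(\cdot|s,a)$), you get $\var_{sa}[V]=\mathbb{E}_{s'\sim P(\cdot|s,a)}[(V(s')-\mu)^2]\le \md_{sa}[V]^2$ in one line, and the second claim follows from $t^2\le t$ on $[0,1]$. The paper instead proves and applies the Bhatia--Davis inequality: with $M=\max_s V(s)$ and $m=\min_s V(s)$ it uses the non-negativity of $\mathbb{E}_{s'\sim P(\cdot|s,a)}[(M-V(s'))(V(s')-m)]$ to obtain $\var_{sa}[V]\le (M-\mu)(\mu-m)$, and then bounds this product by $\max(M-\mu,\mu-m)^2=\md_{sa}[V]^2$. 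The paper's route passes through the intermediate bound $(M-\mu)(\mu-m)$, which is in general tighter than $\md_{sa}[V]^2$, but that extra strength is never used in the lemma or downstream, so your pointwise-domination argument proves exactly the stated result with less machinery; your closing remark that Bhatia--Davis with $m=\mu-\md_{sa}[V]$, $M=\mu+\md_{sa}[V]$ recovers the same bound is also valid, since $V$ takes values in that interval and has mean $\mu$.
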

\begin{proof}[Proof of \cref{lemma:bathia-davis-ineq}]
The result is obtained leveraging the Bhatia-Davis inequality \cite{bhatia2000better}. Fix $(s,a)$, and consider a bounded vector $V$. Let $\mu(s,a)=\mathbb{E}_{s'\sim P(\cdot|s,a)}[V(s')]$, $M = \max_s V(s)$ and $m=\min_s V(s)$. Then, define
    \[
    G(s,a)\coloneqq\mathbb{E}_{s'\sim P(\cdot|s,a)}[(M-V(s'))(V(s')-m)].
    \]
    We have $G(s,a)= -mM -\mathbb{E}_{s'\sim P(\cdot|s,a)}[V(s')^2]+(M+m)\mu(s,a)$. Since $0\leq G(s,a)$, 
    \begin{align*}
    -\mu(s,a)^2 &\leq  -mM -\mathbb{E}_{s'\sim P(\cdot|s,a)}[V(s')^2]+(M+m)\mu(s,a) - \mu(s,a)^2,\\
    \var_{P(s,a)}[V]&\leq -mM +(M+m)\mu(s,a) -\mu(s,a)^2,\\
    \var_{P(s,a)}[V]&\leq  (M-\mu(s,a))(\mu(s,a)-m).
    \end{align*}
    Since $\md_{sa}[V]= \|V - \mu(s,a)\|_\infty= \max(M-\mu(s,a), \mu(s,a)-m)$, we conclude that
    \[
    \var_{P(s,a)}[V] \leq \max(M-\mu(s,a), \mu(s,a)-m)^2 =\md_{P(s,a)}[V]^2.
    \]
    This also implies that, if $\md_{sa}[V] \leq 1$, then $ \var_{sa}[V]\leq \md_{sa}[V]$.
\end{proof}
More generally, we also note that
\begin{align*}
M_{sa}^{k}[V^\pi]^{2^{-k}}&\leq\mathbb{E}_{s'\sim P(\cdot|s,a)}\left[\left(\max_{s'} V^\pi(s') -\mathbb{E}_{\bar s\sim P(\cdot|s,a)}[V^\pi(\bar s)]\right)^{2^k}\right]^{2^{-k}},\\
&=\mathbb{E}_{s'\sim P(\cdot|s,a)}\left[\left \| V^\pi-\mathbb{E}_{\bar s\sim P(\cdot|s,a)}[V^\pi(\bar s)]\right\|_\infty^{2^k}\right]^{2^{-k}},\\
&= \md_{sa}[V^\pi].
\end{align*}

\subsection{Alternative upper bounds}

In this subsection, we establish the alternative upper bounds $\bar U_\varepsilon$ of the sample complexity lower bound proposed in Theorem \ref{thm:upper_bound_T_new}. Our results extend those of \cite{al2021adaptive} to MDPs where the optimal policy might not be unique. 

\subsubsection{Sample complexity lower bound}

Assume for now that the way the learner interacts with the MDP corresponds to the generative model: in each round, she can pick any (state, action) pair and observe the corresponding next state and reward. Under this model, the following theorem provides a sample complexity lower bound satisfied by any $(\varepsilon,\delta)$-PAC algorithm.   


\begin{theorem}[$(\delta,\varepsilon)$-PAC lower bound]\label{thm:lower_bound_delta_epsilon_pac}
Consider $\varepsilon\geq 0$, and a communicating MDP $\phi$, not necessarily with a unique optimal policy. Then, the sample complexity $\tau$ of any $(\delta,\varepsilon)$-PAC algorithm under the generative model satisfies the following lower bound:
 \begin{equation}
\mathbb{E}_\phi[\tau]\geq T_\varepsilon \kl(\delta,1-\delta),
 \end{equation}
 where   $T_\varepsilon=\sup_{\omega \in \Delta(S\times A)} T_\varepsilon(\omega)$ is the optimal characteristic time, and
 \begin{equation}\label{eq:teps}
    T_\varepsilon(\omega)^{-1}=\inf_{\psi \in \alt_\varepsilon(\phi)} \mathbb{E}_{(s,a)\sim\omega}[\KL_{\phi|\psi}(s,a)].
 \end{equation}
\end{theorem}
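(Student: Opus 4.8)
The plan is to run the classical change-of-measure (transportation) argument for sample-complexity lower bounds \cite{garivier2016optimal,kaufmann2016complexity}, transported to MDPs as in \cite{al2021adaptive,marjani2021navigating}, but organized so that nothing relies on the optimal policy of $\phi$ being unique. If $\mathbb{E}_\phi[\tau]=\infty$ there is nothing to prove, so I would assume $\mathbb{E}_\phi[\tau]<\infty$. Fix an $(\varepsilon,\delta)$-PAC algorithm with stopping time $\tau$ and returned policy $\widehat{\pi}$, let $(\mathcal{F}_t)_{t\ge 0}$ be the filtration generated by the observations, so that $\tau$ is a stopping time and $\widehat{\pi}$ is $\mathcal{F}_\tau$-measurable, and write $N_\tau(s,a)=\sum_{t<\tau}\mathbf{1}\{(s_t,a_t)=(s,a)\}$ (note $N_\tau(s,a)\le\tau$, hence integrable under $\phi$).

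First I would establish the transportation inequality: for any confusing instance $\psi\in\alt_\varepsilon(\phi)$ with kernels $P',q'$ (recall $\phi\ll\psi$ by the definition of $\alt_\varepsilon$), the log-likelihood ratio of the trajectory up to $\tau$ under $\phi$ versus $\psi$ decomposes as a sum over the visited transitions, and Wald's identity combined with the data-processing inequality yields, for every $\mathcal{F}_\tau$-measurable event $\mathcal{E}$,
\[
\sum_{(s,a)}\mathbb{E}_\phi[N_\tau(s,a)]\,\KL_{\phi|\psi}(s,a)\ \ge\ \kl\!\big(\mathbb{P}_\phi(\mathcal{E}),\mathbb{P}_\psi(\mathcal{E})\big),
\]
with $\KL_{\phi|\psi}(s,a)=\KL(P(s,a),P'(s,a))+\KL(q(s,a),q'(s,a))$ as defined in \cref{sec:preliminaries}.

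Next I would instantiate $\mathcal{E}=\{\widehat{\pi}\in\Pi_\varepsilon^\star(\phi)\}$. The PAC property on $\phi$ gives $\mathbb{P}_\phi(\mathcal{E})\ge 1-\delta$; since $\psi$ is confusing, $\Pi_\varepsilon^\star(\phi)\cap\Pi_\varepsilon^\star(\psi)=\emptyset$, so $\mathcal{E}\subseteq\{\widehat{\pi}\notin\Pi_\varepsilon^\star(\psi)\}$, and the PAC property on $\psi$ gives $\mathbb{P}_\psi(\mathcal{E})\le\delta$. For $\delta\le 1/2$, monotonicity of $\kl$ in each argument (on the relevant ranges) together with $\kl(a,b)=\kl(1-a,1-b)$ gives $\kl(\mathbb{P}_\phi(\mathcal{E}),\mathbb{P}_\psi(\mathcal{E}))\ge\kl(1-\delta,\delta)=\kl(\delta,1-\delta)$. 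Plugging this into the display, dividing by $\mathbb{E}_\phi[\tau]$, and using $\sum_{(s,a)}N_\tau(s,a)=\tau$, the vector $\omega^{\mathrm{alg}}(s,a):=\mathbb{E}_\phi[N_\tau(s,a)]/\mathbb{E}_\phi[\tau]$ lies in $\Delta(S\times A)$ and satisfies $\mathbb{E}_\phi[\tau]\,\mathbb{E}_{(s,a)\sim\omega^{\mathrm{alg}}}[\KL_{\phi|\psi}(s,a)]\ge\kl(\delta,1-\delta)$ for every $\psi\in\alt_\varepsilon(\phi)$; taking the infimum over $\psi$ yields $\mathbb{E}_\phi[\tau]\,T_\varepsilon(\omega^{\mathrm{alg}})^{-1}\ge\kl(\delta,1-\delta)$, and finally, since $\omega^{\mathrm{alg}}$ is a feasible allocation, $T_\varepsilon(\omega^{\mathrm{alg}})^{-1}\le\sup_{\omega\in\Delta(S\times A)}T_\varepsilon(\omega)^{-1}$, which by the definition of the optimal characteristic time $T_\varepsilon$ gives $\mathbb{E}_\phi[\tau]\ge T_\varepsilon\,\kl(\delta,1-\delta)$. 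I would then remark that the argument never invoked uniqueness of $\pi^\star$: every relevant object is phrased through the sets $\Pi_\varepsilon^\star(\cdot)$ and $\alt_\varepsilon(\cdot)$, and the communicating assumption enters only to guarantee $\alt_\varepsilon(\phi)\neq\emptyset$, so that the bound is non-vacuous.

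The hard part will be making the transportation inequality rigorous at the random time $\tau$: justifying Wald's identity for the likelihood-ratio sum, checking the $\mathcal{F}_\tau$-measurability of $\widehat{\pi}$ and of $\mathcal{E}$, and reducing the likelihood-ratio bound to a Bernoulli $\kl$ via the data-processing inequality — all standard, but worth stating carefully. Dropping the unique-optimal-policy assumption is, at this stage, essentially bookkeeping; the substantive extension work lies in the companion upper bounds.
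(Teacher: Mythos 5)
Your proposal is correct and follows essentially the same route as the paper, whose own proof is a one-line deferral to \cite{al2021adaptive}: the standard change-of-measure/transportation argument (Wald's identity plus data processing, the event $\{\widehat{\pi}\in\Pi_\varepsilon^\star(\phi)\}$ with the PAC property applied under both $\phi$ and the confusing $\psi$, and the algorithm-induced allocation $\omega^{\mathrm{alg}}(s,a)=\mathbb{E}_\phi[N_\tau(s,a)]/\mathbb{E}_\phi[\tau]$), none of which uses uniqueness of the optimal policy. One remark: your chain of inequalities actually establishes $\mathbb{E}_\phi[\tau]\ge\bigl(\inf_{\omega}T_\varepsilon(\omega)\bigr)\kl(\delta,1-\delta)$, i.e.\ $T_\varepsilon^{-1}=\sup_{\omega\in\Delta(S\times A)}\inf_{\psi\in\alt_\varepsilon(\phi)}\mathbb{E}_{(s,a)\sim\omega}[\KL_{\phi|\psi}(s,a)]$, which is the intended (and only provable) reading of the statement --- the displayed ``$T_\varepsilon=\sup_{\omega}T_\varepsilon(\omega)$'' is evidently a typo, since that supremum is generically infinite, so you are right not to take it literally.
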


The proof follows the same lines as in \cite{al2021adaptive}. A similar lower bound can be derived in the forward model where the learner has to follow the system trajectory \cite{marjani2021navigating}: it is obtained by replacing the supremum over $\omega \in \Delta(S\times A)$ by a supremum over $\omega \in \Omega(\phi)$, to account for the navigation constraints.


\subsubsection{Upper bound on $T_\varepsilon(\omega)$}

As explained in \cite{al2021adaptive}, even for $\varepsilon=0$, (\ref{eq:teps}) is in general a non-convex problem. Therefore it may not always be possible to even approximately solve it. An alternative way, introduced in\cite{al2021adaptive}, consists in convexifying the problem. The solution of the new problem then gives an upper bound of $T_0$.

To this aim, we will start from the following result, providing a decomposition of the confusing set. 
\begin{proposition}\label{corollary:upper_bound}
    We have $T_\varepsilon(\omega)\leq  T(\omega)$ for all $\omega$, where $ T(\omega)$ is defined as
    \begin{equation}
         T(\omega)^{-1} =\min_{\pi \in \Pi_0^\star(\phi)}\min_{s,a\neq\pi(s)}\min_{\psi\in   \alt_{\pi,sa}(\phi)} \mathbb{E}_{(s,a)\sim\omega}[\KL_{\phi|\psi}(s,a)].
    \end{equation}
    where $  \alt_{\pi,sa}(\phi)=\{\psi: \phi \ll \psi, Q_\psi^{\pi}(s,a) > V_\psi^{\pi}(s)\}$.
\end{proposition}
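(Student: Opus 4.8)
The plan is to show that the alternative characteristic time $T(\omega)$ arises by \emph{restricting} the infimum in the definition of $T_\varepsilon(\omega)^{-1}$ to a smaller (or differently parametrized) set of confusing MDPs, so that the infimum can only \emph{decrease}, hence $T(\omega)^{-1}\le T_\varepsilon(\omega)^{-1}$, i.e. $T_\varepsilon(\omega)\le T(\omega)$. The key observation is that to build a confusing MDP $\psi$ it suffices to break the optimality of one $0$-optimal policy $\pi$ of $\phi$ at a single state-action pair. Concretely, first I would fix any $\pi\in\Pi_0^\star(\phi)$ and any $(s,a)$ with $a\neq\pi(s)$, and argue that any $\psi$ with $\phi\ll\psi$ and $Q_\psi^\pi(s,a)>V_\psi^\pi(s)$ forces $\pi$ to be strictly suboptimal in $\psi$ at state $s$ (the greedy action cannot be $\pi(s)$ there), hence $\pi\notin\Pi_0^\star(\psi)$. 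Since $\pi\in\Pi_\varepsilon^\star(\phi)$ for every $\varepsilon\ge 0$, this yields $\Pi_\varepsilon^\star(\phi)\cap\Pi_\varepsilon^\star(\psi)=\emptyset$ provided we can also show $\pi\notin\Pi_\varepsilon^\star(\psi)$ — but here one must be careful: $\pi$ failing to be \emph{$0$-optimal} in $\psi$ is not automatically the same as failing to be \emph{$\varepsilon$-optimal}.

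To handle this, the cleanest route is to note that Proposition~\ref{corollary:upper_bound} is really proving $T_\varepsilon(\omega)\le T_0(\omega)\le T(\omega)$: the first inequality is immediate because $\alt_0(\phi)\subseteq\alt_\varepsilon(\phi)$ (a $0$-confusing MDP is a fortiori $\varepsilon$-confusing, since $\Pi_\varepsilon^\star(\phi)\supseteq\Pi_0^\star(\phi)$ and disjointness at the $\varepsilon$ level implies disjointness — wait, this needs the reverse; instead use $T_\varepsilon(\omega)^{-1}=\inf_{\psi\in\alt_\varepsilon(\phi)}(\cdots)$ and show any $\psi$ confusing at level $0$ that differs from $\phi$ only in a neighbourhood of one $(s,a)$ is also in $\alt_\varepsilon(\phi)$, by making the perturbation large enough that the gap exceeds $\varepsilon$). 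The honest statement I would write: it suffices to exhibit, for each $\psi$ in the restricted class $\bigcup_{\pi}\bigcup_{s,a\neq\pi(s)}\alt_{\pi,sa}(\phi)$, that a \emph{rescaled} version $\psi_t$ (interpolating toward $\phi$ or pushing further) lies in $\alt_\varepsilon(\phi)$ with KL cost arbitrarily close to that of $\psi$; combined with lower semicontinuity this gives $T_\varepsilon(\omega)^{-1}\le \inf_{\text{restricted}}\mathbb{E}_{(s,a)\sim\omega}[\KL_{\phi|\psi}(s,a)] = T(\omega)^{-1}$.

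The second step is the decomposition of the restricted infimum itself: $\inf_{\psi\in\alt_0(\phi)}\ge \min_{\pi\in\Pi_0^\star(\phi)}\min_{s,a\neq\pi(s)}\inf_{\psi\in\alt_{\pi,sa}(\phi)}$, because for every $\psi\in\alt_0(\phi)$, disjointness of $\Pi_0^\star(\phi)$ and $\Pi_0^\star(\psi)$ means every $\pi\in\Pi_0^\star(\phi)$ is non-optimal in $\psi$, so there exists a state $s$ and an action $a\neq\pi(s)$ with $Q_\psi^\pi(s,a)>V_\psi^\pi(s)$ — hence $\psi\in\alt_{\pi,sa}(\phi)$ for that $(\pi,s,a)$. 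Thus $\alt_0(\phi)\subseteq \bigcup_{\pi,s,a\neq\pi(s)}\alt_{\pi,sa}(\phi)$, and taking infima reverses the inclusion into the desired inequality $T(\omega)^{-1}\le T_0(\omega)^{-1}$. The argument that $Q_\psi^\pi(s,a)>V_\psi^\pi(s)$ for \emph{some} such pair uses the standard fact that $\pi$ is optimal in $\psi$ iff $Q_\psi^\pi(s,a)\le V_\psi^\pi(s)$ for all $(s,a)$ (the policy improvement / Bellman optimality characterization), together with communicatingness to ensure value functions are well-defined and finite.

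The main obstacle, as flagged, is the interplay between the $\varepsilon$-relaxation and the $0$-level confusing sets: one must verify that working with the simpler set $\alt_{\pi,sa}(\phi)=\{\psi:\phi\ll\psi,\ Q_\psi^\pi(s,a)>V_\psi^\pi(s)\}$ — which only encodes that $\pi$ is not \emph{$0$-optimal} in $\psi$ — still upper-bounds $T_\varepsilon(\omega)$ and not merely $T_0(\omega)$. I expect this is resolved exactly as in \cite{al2021adaptive}: since $\pi\in\Pi_\varepsilon^\star(\phi)$, it is enough that $\pi\notin\Pi_\varepsilon^\star(\psi)$, and given any $\psi$ with $Q_\psi^\pi(s,a)>V_\psi^\pi(s)$ one scales the perturbation (e.g. along the path $P_\eta = (1-\eta)P + \eta P'$ and similarly for rewards) so that the induced suboptimality of $\pi$ in $\psi_\eta$ strictly exceeds $\varepsilon$ while the KL cost changes continuously; passing to the limit recovers the bound with the cost of the original $\psi$. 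Everything else — the Bellman characterization, the union bound over $\Pi_0^\star(\phi)$, monotonicity of infima under set inclusion — is routine.
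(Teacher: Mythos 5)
Your third paragraph is exactly the decomposition the paper uses: for any $\psi$ with $\Pi_0^\star(\phi)\cap\Pi_0^\star(\psi)=\emptyset$, every $\pi\in\Pi_0^\star(\phi)$ must violate the Bellman/policy-improvement criterion in $\psi$ at some pair $(s,a)$ with $a\neq\pi(s)$, hence $\alt(\phi)\subseteq\bigcup_{\pi\in\Pi_0^\star(\phi)}\bigcup_{s}\bigcup_{a\neq\pi(s)}\alt_{\pi,sa}(\phi)$, and enlarging the feasible set of the infimum can only decrease its value (this is \cref{lemma:relaxation_confusing_set} in the paper). So that half of your argument is correct and matches the paper.

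The genuine gap is in how you treat the $\varepsilon$-level, and your proposed fix points in the wrong direction. The inequality $T_\varepsilon(\omega)\le T(\omega)$ needs only one more fact: $\alt_\varepsilon(\phi)\subseteq\alt(\phi)=\{\psi:\psi\ll\phi,\ \Pi_0^\star(\phi)\cap\Pi_0^\star(\psi)=\emptyset\}$, which is immediate because $\Pi_0^\star\subseteq\Pi_\varepsilon^\star$ for both $\phi$ and $\psi$: if the larger sets of $\varepsilon$-optimal policies are disjoint, the smaller sets of exactly optimal policies are disjoint as well (the paper's \cref{lemma:baralt_contains_alt}). You state this inclusion backwards (``$\alt_0(\phi)\subseteq\alt_\varepsilon(\phi)$''), notice the problem, and then attempt to repair it by rescaling each $\psi\in\alt_{\pi,sa}(\phi)$ into some $\psi_\eta\in\alt_\varepsilon(\phi)$ of comparable KL cost. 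That construction is both unnecessary and aimed at the reverse inequality: exhibiting, for every $\psi$ in the restricted class, a nearby element of $\alt_\varepsilon(\phi)$ would give $T_\varepsilon(\omega)^{-1}\le T(\omega)^{-1}$, i.e.\ $T_\varepsilon(\omega)\ge T(\omega)$, which is not the claim. For the same reason, your concern about proving $\pi\notin\Pi_\varepsilon^\star(\psi)$ (and the further claim that excluding this single $\pi$ would yield $\Pi_\varepsilon^\star(\phi)\cap\Pi_\varepsilon^\star(\psi)=\emptyset$, which in any case does not follow) is a non-issue: nothing requires elements of $\alt_{\pi,sa}(\phi)$ to be $\varepsilon$-confusing, precisely because $T(\omega)$ is obtained by relaxing, not restricting, the set over which the KL cost is minimized; your opening sentence has the same directional slip, since restricting an infimum to a smaller set makes it increase. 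Once the chain $\alt_\varepsilon(\phi)\subseteq\alt(\phi)\subseteq\bigcup_{\pi,s,a\neq\pi(s)}\alt_{\pi,sa}(\phi)$ is in place, the proposition follows in one line by monotonicity of the infimum.
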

\begin{proof}
    A similar result was derived in \cite{al2021adaptive}. Its proof follows directly from  \cref{lemma:baralt_contains_alt} and \cref{lemma:relaxation_confusing_set}. From \cref{lemma:baralt_contains_alt} we have that the set $\alt(\phi)=\{\psi: \psi\ll \phi, \Pi_0^\star(\phi) \cap \Pi_0^\star(\psi)=\emptyset\}$ contains $\alt_\varepsilon(\phi)$. From \cref{lemma:relaxation_confusing_set} we  have that
    $ \alt(\phi) \subseteq \cup_{\pi\in \Pi_0^\star(\phi)}\cup_{s}\cup_{a\neq\pi(s)}  \alt_{\pi,sa}(\phi)$, where  \[  \alt_{\pi,sa}(\phi)=\{\psi: Q_\psi^{\pi}(s,a) > V_\psi^{\pi}(s) \}.\]
    Therefore \[T_\varepsilon(\omega)^{-1} \geq \min_{\pi \in \Pi_0^\star(\phi)}\min_{s,a\neq\pi(s)}\min_{\psi\in   \alt_{\pi,sa}(\phi)} \mathbb{E}_{(s,a)\sim\omega}[\KL_{\phi|\psi}(s,a)] =  T(\omega)^{-1}.\]
\end{proof}



From the previous proposition, we are able to derive the upper bound of $T_\varepsilon$.
\begin{theorem}\label{theorem:new_alternative_bound}
    Consider a communicating MDP $\phi$, not necessarily with a unique optimal policy. Then, for every $(s,a)$ there exists $\bar k(s,a)\in \mathbb{N}$ s.t. for all $\omega\in \Delta(S\times A)$ we have
     \begin{equation}
         T_\varepsilon(\omega)\leq  U(\omega) ,
     \end{equation}
 with
     \begin{equation}
     \resizebox{\hsize}{!}{%
     	$
     U(\omega)=\max_{\pi\in\Pi_0^\star(\phi)} \max_{s,a\neq \pi(s)}\Bigg( \frac{2+8\varphi^2M_{sa}^{(\bar k(s,a))}[V_\phi^\star]^{2^{1-\bar k(s,a)}} }{\Delta_{\pi}(s,a)^2\omega(s,a)}
     +\max_{s'} \frac{4C^\pi(s')(1+\gamma)^2}{\omega(s',\pi(s')) \Delta_{\pi}(s,a)^2(1-\gamma)^2}\Bigg),
     $%
 }
    \end{equation}
    where $\Delta_{\pi}(s,a) \coloneqq V_\phi^{\pi}(s)-Q_\phi^{\pi}(s,a)$ and   $C^\pi(s')=\max\left(1,4\gamma^2\varphi^2 M_{s'\pi(s')}^{(\bar k(s',\pi(s')))}[V_\phi^\star]^{2^{1-\bar k(s',\pi(s'))}}\right)$.
\end{theorem}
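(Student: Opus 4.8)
The plan is to combine the relaxation of the confusing set from \cref{corollary:upper_bound} with a change-of-measure lower bound on the information cost of any confusing MDP, refining the estimates of \cite{al2021adaptive} so that the span $\md_{sa}[V^\star]$ is replaced by a $2^k$-th moment. By \cref{corollary:upper_bound}, $T_\varepsilon(\omega)\le T(\omega)$ with
\[
T(\omega)^{-1}=\min_{\pi\in\Pi_0^\star(\phi)}\ \min_{s,a\neq\pi(s)}\ \inf_{\psi\in\alt_{\pi,sa}(\phi)}\mathbb{E}_{(s,a)\sim\omega}[\KL_{\phi|\psi}(s,a)].
\]
So it suffices to show that for each fixed $\pi\in\Pi_0^\star(\phi)$ and each $(s,a)$ with $a\neq\pi(s)$, every $\psi\in\alt_{\pi,sa}(\phi)$ satisfies $\mathbb{E}_{(s,a)\sim\omega}[\KL_{\phi|\psi}]\ge U_{\pi,sa}(\omega)^{-1}$, where $U_{\pi,sa}(\omega)$ denotes the bracketed term of $U(\omega)$ attached to $(\pi,s,a)$; minimising over $(\pi,s,a)$ then gives $T(\omega)^{-1}\ge (\max_{\pi,s,a}U_{\pi,sa}(\omega))^{-1}=U(\omega)^{-1}$, hence $T_\varepsilon(\omega)\le U(\omega)$. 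Since $V^\pi_\phi=V^\star_\phi$ for every $\pi\in\Pi_0^\star(\phi)$, one has $M_{sa}^{k}[V^\pi_\phi]=M_{sa}^{k}[V^\star_\phi]$ and $\Delta_\pi(s,a)=V^\star_\phi(s)-Q^\star_\phi(s,a)$ for $a\neq\pi(s)$; thus the extension to MDPs without a unique optimal policy requires nothing beyond the single-policy estimate below, the non-uniqueness being already absorbed into \cref{corollary:upper_bound}.

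Fix $\pi$ and $(s,a)$ with $a\neq\pi(s)$. The constraint $Q^\pi_\psi(s,a)>V^\pi_\psi(s)$ involves $\psi$ only through the pair $(s,a)$ and the pairs $\{(s',\pi(s'))\}_{s'}$, so a cost-minimising $\psi$ agrees with $\phi$ elsewhere and $\mathbb{E}_{(s,a)\sim\omega}[\KL_{\phi|\psi}]=\omega(s,a)\KL_{\phi|\psi}(s,a)+\sum_{s'}\omega(s',\pi(s'))\KL_{\phi|\psi}(s',\pi(s'))$. Writing $h\coloneqq V^\pi_\psi-V^\pi_\phi$, the Bellman equations for $\pi$ under $\psi$ yield $h=(I-\gamma P'_\pi)^{-1}g$ where $g$ gathers the immediate reward and transition perturbations along $\pi$, so $\|h\|_\infty\le\|g\|_\infty/(1-\gamma)$; this propagation of the perturbation through the discounted dynamics is the origin of the $(1-\gamma)^{-1}$ factors (and, after bounding $\gamma\langle P'(s,a),h\rangle$, of the $(1+\gamma)/(1-\gamma)$ factor in $U$). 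Expanding $Q^\pi_\psi(s,a)-V^\pi_\psi(s)\ge 0$ then shows that the reward shift at $(s,a)$, the transition shift at $(s,a)$ (measured through $\langle P'(s,a)-P(s,a),V^\star_\phi\rangle$), and the value deflation $-h$ along $\pi$ must together account for at least the gap $\Delta_\pi(s,a)$.

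It remains to turn ``large perturbation'' into ``large KL''. For the reward component we use Pinsker together with $r(s,a)\in[0,1]$: a mean shift of $x$ costs $\KL\ge 2x^2$, which yields the constant $2$ in $H(s,a)$. For the transition components we use a moment-based refinement of the change-of-measure inequality: a shift of $\delta$ in an expectation $\langle\cdot,V^\star_\phi\rangle$ costs at least $\delta^2/\bigl(2\,M_{sa}^{\bar k(s,a)}[V^\star_\phi]^{2^{1-\bar k(s,a)}}\bigr)$ for a finite order $\bar k(s,a)\in\mathbb{N}$ exhibited by the proof (the remark following \cref{lemma:bathia-davis-ineq} gives $M_{sa}^{k}[V^\star_\phi]^{2^{-k}}\le\md_{sa}[V^\star_\phi]$ for all $k$, so this term never exceeds the span appearing in \cref{thm:upper_bound} and is substantially smaller when the low-order moments of $V^\star_\phi$ are small). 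Finally one optimises over how $\Delta_\pi(s,a)$ is split between the mechanism acting at $(s,a)$ and the mechanism acting along $\pi$: this is a one-dimensional trade-off whose optimal fraction is governed by the golden ratio $\varphi$, and a weighted Cauchy--Schwarz/AM--GM step converts the two budgets $\omega(s,a)\KL_{\phi|\psi}(s,a)$ and $\sum_{s'}\omega(s',\pi(s'))\KL_{\phi|\psi}(s',\pi(s'))\ge\min_{u}\omega(u,\pi(u))\sum_{s'}\KL_{\phi|\psi}(s',\pi(s'))$ into the additive form $H(s,a)/\omega(s,a)+\max_{s'}4C^\pi(s')(1+\gamma)^2/\bigl(\omega(s',\pi(s'))\Delta_\pi(s,a)^2(1-\gamma)^2\bigr)=U_{\pi,sa}(\omega)$. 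Combined with the first paragraph this proves $T_\varepsilon(\omega)\le T(\omega)\le U(\omega)$.

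The main obstacle lies in the middle step: deflating $V^\pi$ by perturbing the kernels along $\pi$ simultaneously changes $V^\pi_\psi$ at the states in the support of $P(s,a)$, which feeds back into $Q^\pi_\psi(s,a)$; disentangling this coupling while tracking signs and losing at most a $(1+\gamma)/(1-\gamma)$ factor is the delicate part, and it is exactly what forces the bound to take the shape ``$(s,a)$-term plus a $\max_{s'}$ policy-term'' rather than a single term. A second, more technical point is establishing the sharp moment-based transition inequality and selecting $\bar k(s,a)$ so that the constants ($2$, $8\varphi^2$, $4$, $16\gamma^2\varphi^2$) come out exactly as stated; this is where the ``refinement of the proof techniques of \cite{al2021adaptive}'' is concentrated.
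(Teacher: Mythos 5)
Your skeleton matches the paper's proof: you start from \cref{corollary:upper_bound}, argue that the cost-minimising confusing MDP only perturbs $(s,a)$ and the pairs $(s',\pi(s'))$, propagate the perturbation through the Bellman equation to get the $1/(1-\gamma)$ and $(1+\gamma)$ factors, use Pinsker for the reward term, and split the budget $\Delta_\pi(s,a)$ among the three mechanisms before optimising. The reduction to a single optimal policy $\pi$ via $V_\phi^\pi=V_\phi^\star$ is also how the paper handles non-uniqueness. However, there is a genuine gap at the heart of the argument: the moment-based change-of-measure inequality for the transition terms is only asserted. The inequality you invoke, $\KL \ge \delta^2/\bigl(2\,M_{sa}^{\bar k}[V_\phi^\star]^{2^{1-\bar k}}\bigr)$ for a mean shift $\delta=|\Delta P(s,a)^\top V_\phi^\star|$, is not a standard fact and is in fact stronger than what is true: replacing the span (or variance) by a $2^k$-th moment root in a Pinsker-type bound is precisely the new technical content of the theorem, and the achievable constant is $8\varphi^2$, not $2$. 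The paper establishes it in \cref{lemma:bound_inner_product_1,lemma:bound_inner_product_2,lemma:alternative_lemma4}: one proves $|(P-Q)^\top \mathbf{f}|^2 \le 4 d_H(P,Q)^2\bigl(2\mathbb{E}_Q[f^2]+(P-Q)^\top \mathbf{f}^{\circ 2}\bigr)$ via Cauchy--Schwarz on $\sqrt{P}\pm\sqrt{Q}$, iterates it over the dyadic powers $f^{\circ 2^k}$, and controls the resulting infinite nested radical with Herschfeld's theorem, together with $d_H^2\le \KL$; the boundedness of $f$ by the span is what makes $\sup_k M_{2^k}^{2^{-k}}$ finite and yields the finite $\bar k(s,a)$ in the statement. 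Without this lemma (or an equivalent), the constants $8\varphi^2$ and $16\gamma^2\varphi^2$ in $U(\omega)$ cannot be obtained, and your sketch does not indicate how to prove it.

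A related misattribution: you claim the golden ratio arises from the ``one-dimensional trade-off'' over how $\Delta_\pi(s,a)$ is split between the $(s,a)$-mechanism and the along-$\pi$ mechanism. In the paper that split is a plain minimisation of $\sum_{i=1}^3 B_i(s,a)\alpha_i^2$ over the simplex, whose value is the harmonic-sum form $\bigl(\sum_i B_i(s,a)^{-1}\bigr)^{-1}$; it contributes no golden-ratio factor and directly produces the additive shape of $U(\omega)$. The factor $\varphi$ enters only through the nested-radical bound $\sqrt{1+\sqrt{1+\sqrt{1+\cdots}}}=\varphi$ inside \cref{lemma:bound_inner_product_2}. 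So the two places where you locate the difficulty are correct (the value-deflation coupling and the moment inequality), but the second one is the actual crux and remains unproved in your proposal.
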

\begin{proof}
    The proof is similar as that of Theorem 1 in \cite{al2021adaptive}.
    We start from the result of \cref{corollary:upper_bound}: 
    \[
    T_\varepsilon(\omega)^{-1} \geq \min_{\pi\in\Pi_0^\star(\phi)}\min_{s,a\neq\pi(s)}\inf_{\psi \in \alt_{\pi,sa}(\phi)} \mathbb{E}_{(s,a)\sim\omega}[\KL_{\phi|\psi}(s,a)].
    \]
    For a fixed $(\pi,s,a)$, the constraint $\inf_{\psi \in \alt_{\pi,sa}(\phi)}$ does not involve the pairs $(\tilde s, \tilde a)\in S\times A \setminus \{(s,a),(s',\pi(s'))_{s'\in S} \}$. As argued in \cite{al2021adaptive}, by convexity, the solution must satisfy $\KL_{\phi|\psi}(\tilde s,\tilde a)=0$ for those pairs. Hence
    \begin{align*}
    &\inf_{\psi \in \alt_{\pi,sa}(\phi)} \mathbb{E}_{(s,a)\sim\omega}[\KL_{\phi|\psi}(s,a)] = \\
    &\qquad\qquad \inf_{\psi \in \alt_{\pi,sa}(\phi)}\omega(s,a)\KL_{\phi|\psi}(s,a) + \sum_{s'}\omega(s',\pi(s')) \KL_{\phi|\psi}(s',\pi(s')).   
    \end{align*}
    
    Let $\Delta_{\pi}(s,a) \coloneqq V_\phi^{\pi}(s)-Q_\phi^{\pi}(s,a)$.
    Then, using the fact that  $Q_\psi^{\pi}(s,a) > V_\psi^{\pi}(s)$, we obtain
    \begin{align*}
    \Delta_{\pi}(s,a) &< V_\phi^{\pi}(s) - Q_\phi^{\pi}(s,a) + Q_\psi^{\pi}(s,a) -V_\psi^{\pi}(s).
    \end{align*}
    This is similar to condition (5) in \cite{al2021adaptive}. Next,  let $\Delta r(s,a)=r_\psi(s,a)-r_\phi(s,a)$, $\Delta P(s,a) = P_\psi(s,a)-P_\phi(s,a)$, where the distribution $P(s,a)$  of the next state given $(s,a)$ is represented as a column vector of dimension $|S|$. Further define the vector difference between the value in $\psi$ and $\phi$ of $\pi$: $\Delta V^\pi = \begin{bmatrix}V_\psi^\pi(s_1) -V_\phi^\pi(s_1) & \dots & V_\psi^\pi(s_{|S|}) -V_\phi^\pi(s_{|S|}) \end{bmatrix}^\top$. Then, letting $\mathbf{1}(s)=e_s$ be the unit vector with $1$ in position $s$,  we find
    \begin{align*}
    \Delta_{\pi}(s,a)&< Q_\psi^{\pi}(s,a)-Q_\phi^{\pi}(s,a) -\mathbf{1}(s)^\top \Delta V^\pi,\\
    &< \Delta r(s,a) +\gamma(P_\psi(s,a)^\top V_\psi^\pi-P_\phi(s,a)^\top V_\phi^\pi) -\mathbf{1}(s)^\top \Delta V^\pi,\\
    &< \Delta r(s,a) +\gamma \Delta P(s,a)^\top V_\phi^\pi +(\gamma P_\psi(s,a)-\mathbf{1}(s))^\top \Delta V^\pi.
    \end{align*}

    Now, observe that:
    \begin{align*}
        V_\psi^\pi(s) - V_\phi^\pi(s) &= \Delta r(s,\pi(s)) + \gamma (P_\psi(s,\pi(s))^\top V_\psi^\pi -P_\phi(s,\pi(s))^\top V_\phi^\pi),\\
        &=  \Delta r(s,\pi(s))  +\gamma(P_\psi(s,\pi(s))^\top \Delta V^\pi +\Delta P(s,\pi(s))^\top V_\phi^\pi),\\
        &\leq \left|  \Delta r(s,\pi(s))  +\gamma(P_\psi(s,\pi(s))^\top \Delta V^\pi +\Delta P(s,\pi(s))^\top V_\phi^\pi)\right|,\\
        &\leq \max_{s'}|\Delta r(s',\pi(s')) + \gamma \Delta P(s',\pi(s'))^\top V_\phi^\pi | +\gamma \max_{\tilde s}| V_\psi^\pi(\tilde s) - V_\phi^\pi(\tilde s)|.
    \end{align*}
    We deduce that:
    \[
    \|\Delta V^\pi\|_\infty \leq \frac{1}{1-\gamma} \left[\max_{s'}|\Delta r(s',\pi(s'))| + \gamma  |\Delta P(s',\pi(s'))^\top V_\phi^\pi|  \right].
    \]
    Using the fact that $\|\gamma P_\psi(s,a)-\mathbf{1}(s)\|_1 = |\gamma P(s|s,a)- 1| + \gamma(1-P(s|s,a)) \leq 1+\gamma$, we can bound $|(\gamma P_\psi(s,a)-\mathbf{1}(s))^\top \Delta V^\pi|$ as follows:
    \begin{align*}
        |(\gamma P_\psi(s,a)-\mathbf{1}(s))^\top \Delta V^\pi| &\leq \|\gamma P_\psi(s,a)-\mathbf{1}(s)\|_1 \|\Delta V^\pi\|_\infty\\
        &\leq \frac{1+\gamma}{1-\gamma} \left[\max_{s'}|\Delta r(s',\pi(s'))| + \gamma|\Delta P(s',\pi(s'))^\top V_\phi^\pi|  \right].
    \end{align*}
    Therefore, 
    \[
    \resizebox{\hsize}{!}{%
    	$
    \Delta_{\pi}(s,a)< |\Delta r(s,a)| +\gamma |\Delta P(s,a)^\top V_\phi^\pi| +\frac{1+\gamma}{1-\gamma} \left[\max_{s'}|\Delta r(s',\pi(s'))| + \gamma |\Delta P(s',\pi(s'))^\top V_\phi^\pi|  \right].$%
	}
    \]
    Write each of the terms as a fraction of $\Delta_\pi(s,a)$ using $\{\alpha_i\}_{i=1}^3$, which are non-negative terms satisfying $\sum_{i=1}^3 \alpha_i>1$:
    \[
    \begin{cases}
        \alpha_1 \Delta_\pi(s,a) =  |\Delta r(s,a)|,\\
        \alpha_2 \Delta_\pi(s,a) = \gamma|\Delta P(s,a)^\top V_\phi^\pi|,\\
        \alpha_3 \Delta_\pi(s,a) =   \dfrac{1+\gamma}{1-\gamma}\max_{s'}\left[|\Delta r(s',\pi(s'))|+\gamma |\Delta P(s',\pi(s'))^\top V_\phi^\pi|\right].
    \end{cases}
    \]
    For the first term, using the Pinsker inequality, we immediately get: $(\alpha_1 \Delta_{\pi}(s,a))^2 \leq 2\KL_{q_\phi,q_\psi}(s,a)$.

    For the second term, using  \cref{lemma:alternative_lemma4}, we obtain:
    \begin{align*}
        (\alpha_2 \Delta_{\pi}(s,a))^2&\leq 8\gamma^2\varphi^2 M_{sa}^{(\bar k(s,a))}[V_\phi^\star]^{2^{1-\bar k(s,a)}}\KL_{P_\phi,P_\psi}(s,a).
    \end{align*}
    Finally, to bound the last term, using $(a+b)^2\leq 2(a^2+b^2)$, \cref{lemma:alternative_lemma4} and the Pinsker inequality, we have
    \begin{align*}
        \Big(|\Delta r(s',\pi(s'))|+&\gamma |\Delta P(s',\pi(s'))^\top V_\phi^\pi|\Big)^2 \leq
        2\left(|\Delta r(s',\pi(s'))|^2+\gamma^2 |\Delta P(s',\pi(s'))^\top V_\phi^\pi|^2\right),\\
        &\leq 2\Big(2\KL_{q_\phi,q_\psi}(s',\pi(s')) +8\gamma^2\varphi^2 M_{s'\pi(s')}^{(\bar k(s',\pi(s')))}[V_\phi^\star]^{2^{1-\bar k(s',\pi(s'))}}\KL_{P_\phi,P_\psi}(s',\pi(s'))\Big),\\
        &\leq 4C^\pi(s')(\KL_{q_\phi,q_\psi}(s',\pi(s')) +\KL_{P_\phi,P_\psi}(s',\pi(s'))),
    \end{align*}
    with $C^\pi(s')=\max\left(1,4\gamma^2\varphi^2 M_{s'\pi(s')}^{(\bar k(s',\pi(s')))}[V_\phi^\star]^{2^{1-\bar k(s',\pi(s'))}}\right)$.
    Therefore
    \begin{align*}
    \alpha_3^2\frac{(1-\gamma)^2}{(1+\gamma)^2}&\Delta_\pi(s,a)^2 \leq 4\max_{s'} C^\pi(s')(\KL_{q_\phi,q_\psi}(s',\pi(s')) +\KL_{P_\phi,P_\psi}(s',\pi(s'))),\\
    &=  4\max_{s'}\frac{\omega(s',\pi(s'))}{\omega(s',\pi(s'))} C^\pi(s')(\KL_{q_\phi,q_\psi}(s',\pi(s')) +\KL_{P_\phi,P_\psi}(s',\pi(s'))),\\
    &\leq  4\max_{\tilde s}\frac{C^\pi(\tilde s)}{\omega(\tilde s,\pi(\tilde s))} \max_{s'}\omega(s',\pi(s'))(\KL_{q_\phi,q_\psi}(s',\pi(s')) +\KL_{P_\phi,P_\psi}(s',\pi(s'))).
    \end{align*}
    In conclusion, we have the following set of inequalities:
    \begin{align*}
    \frac{\omega(s,a)(\alpha_1 \Delta_{\pi}(s,a))^2}{2} &\leq \omega(s,a)\KL_{q_\phi,q_\psi}(s,a),\\
    \frac{\omega(s,a)(\alpha_2 \Delta_{\pi}(s,a))^2}{8\gamma^2\varphi^2 M_{P_\phi(s,a)}^{(\bar k(s,a))}[V_\phi^\star]^{2^{1-\bar k(s,a)}}}&\leq  \omega(s,a)\KL_{P_\phi,P_\psi}(s,a),\\
    \min_{s'}\frac{\omega(s',\pi(s'))(\alpha_3 (1-\gamma)\Delta_{\pi}(s,a))^2}{4 C^\pi(s')(1+\gamma)^2} &\leq \max_{s'}\omega(s',\pi(s'))(\KL_{q_\phi,q_\psi}(s',\pi(s'))\\
    &\qquad+\KL_{P_\phi,P_\psi}(s',\pi(s'))).
    \end{align*}
    As in \cite{al2021adaptive} we observe that we can replace $\alpha_i$ by $\alpha_i/\sum_{j}\alpha_j$ (since $\sum_i\alpha_i >1$).
    Consequently, denoting by  $\Delta_n$ the $n$-dimensional simplex, we have
    \begin{align*}
    T_\varepsilon(\omega)^{-1}&\geq  \min_{\pi\in\Pi_0^\star(\phi)}\min_{s,a\neq \pi(s)}\inf_{\psi \in \bar \alt_{\pi,sa,\varepsilon}(\phi)}\omega(s,a)\KL_{\phi|\psi}(s,a) + \sum_{s'}\omega(s',\pi(s'))\KL_{\phi|\psi}(s',\pi(s')).
    \\
    &\geq \min_{\pi\in\Pi_0^\star(\phi)}\min_{s,a\neq \pi(s)} \inf_{\alpha\in \Delta_3}\sum_{i=1}^3 B_i(s,a) \alpha_i^2.
    \end{align*}
    where
    \begin{align*}
        B_1(s,a)&=\omega(s,a)\Delta_{\pi}(s,a)^2/2,\\
        B_2(s,a)&= \omega(s,a) \frac{\Delta_{\pi}(s,a)^2}{8\gamma^2\varphi^2 M_{sa}^{(\bar k(s,a))}[V_\phi^\star]^{2^{1-\bar k(s,a)}}},\\
        B_3(s,a) &= \min_{s'} \omega(s',\pi(s'))\frac{\left(\Delta_{\pi}(s,a)(1-\gamma)\right)^2}{4C^\pi(s')(1+\gamma)^2}.
    \end{align*}
    Therefore $T_\varepsilon(\omega)^{-1}\geq \min_{\pi\in\Pi_0^\star(\phi)}\min_{s,a\neq \pi(s)} \left(\sum_{i=1}^3 B_i(s,a)^{-1} \right)^{-1}$, from which  we conclude that:
    \begin{equation}
    \resizebox{\hsize}{!}{%
    	$
    T_\varepsilon(\omega)
     \leq\max_{\pi\in\Pi_0^\star(\phi)} \max_{s,a\neq \pi(s)}\Bigg( \frac{2+8\gamma^2\varphi^2 M_{sa}^{(\bar k(s,a))}[V_\phi^\star]^{2^{1-\bar k(s,a)}} }{\Delta_{\pi}(s,a)^2\omega(s,a)}
     +\max_{s'} \frac{4C^\pi(s')(1+\gamma)^2}{\omega(s',\pi(s'))\Delta_{\pi}(s,a)^2(1-\gamma)^2}\Bigg).$%
 }
    \end{equation}
\end{proof}

\subsubsection{Closed form solution under the generative model}
Under the generative model, we are able to find a closed-form solution of the sample complexity upper bound by slightly relaxing our upper bound of $T_\varepsilon(\omega)$. The procedure is similar to that used in \cite{al2021adaptive}.
\begin{theorem}
    Let $\varepsilon\geq 0$, and a communicating MDP $\phi$, with a unique optimal policy $\pi^star$. Then, for all $\omega\in \Delta(S\times A)$, we have:
     \begin{equation}
         T_\varepsilon(\omega)\leq  U(\omega) \leq \tilde U(\omega),
     \end{equation}
 where $ U(\omega)$ is defined in the previous theorem, and
     \begin{equation}
    \tilde U(\omega)=\max_{s,a\neq \pi^\star(s)} \frac{2+8\varphi^2M_{sa}^{(\bar k(s,a))}[V_\phi^\star]^{2^{1-\bar k(s,a)}} }{\Delta(s,a)^2\omega(s,a)}
     +\frac{\max_{s'}4C^{\pi^\star}(s')(1+\gamma)^2}{\min_{\tilde s}\omega(\tilde s,\pi^\star(\tilde s)) \deltamin^2(1-\gamma)^2}.
    \end{equation}
    where $\Delta(s,a) \coloneqq V_\phi^{\pi^\star}(s)-Q_\phi^{\pi^\star}(s,a)$.
\end{theorem}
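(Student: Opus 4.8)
The plan is to derive the chain $T_\varepsilon(\omega)\le U(\omega)\le\tilde U(\omega)$ by specializing Theorem~\ref{theorem:new_alternative_bound} to the unique-optimal-policy case and then applying a short sequence of monotone relaxations. First I would note that, since $\pi^\star$ is the unique optimal policy, $\Pi_0^\star(\phi)=\{\pi^\star\}$, so the outer maximum over $\pi\in\Pi_0^\star(\phi)$ in $U(\omega)$ disappears; moreover $V_\phi^{\pi^\star}=V_\phi^\star$, hence $Q_\phi^{\pi^\star}=Q_\phi^\star$, so $\Delta_{\pi^\star}(s,a)=V_\phi^{\pi^\star}(s)-Q_\phi^{\pi^\star}(s,a)=\Delta(s,a)$. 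With the shorthand $A(s,a):=2+8\varphi^2 M_{sa}^{(\bar k(s,a))}[V_\phi^\star]^{2^{1-\bar k(s,a)}}$ and $C(s'):=C^{\pi^\star}(s')$, Theorem~\ref{theorem:new_alternative_bound} then already gives the first inequality $T_\varepsilon(\omega)\le U(\omega)$ with
\[
U(\omega)=\max_{s,a\neq\pi^\star(s)}\left(\frac{A(s,a)}{\omega(s,a)\Delta(s,a)^2}+\max_{s'}\frac{4C(s')(1+\gamma)^2}{\omega(s',\pi^\star(s'))\Delta(s,a)^2(1-\gamma)^2}\right),
\]
so only the relaxation $U(\omega)\le\tilde U(\omega)$ remains to be shown.

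For that step I would apply $\max_x(f(x)+g(x))\le\max_x f(x)+\max_x g(x)$ with $x=(s,a)$, splitting $U(\omega)$ into $\max_{s,a\neq\pi^\star(s)}\tfrac{A(s,a)}{\omega(s,a)\Delta(s,a)^2}$, which is exactly the first term of $\tilde U(\omega)$, plus the residual $\max_{s,a\neq\pi^\star(s)}\max_{s'}\tfrac{4C(s')(1+\gamma)^2}{\omega(s',\pi^\star(s'))\Delta(s,a)^2(1-\gamma)^2}$. In the residual, the factor $\Delta(s,a)^{-2}$ depends only on $(s,a)$ and the remaining factor only on $s'$, so the joint maximum factorizes:
\[
\Big(\max_{s,a\neq\pi^\star(s)}\tfrac{1}{\Delta(s,a)^2}\Big)\Big(\max_{s'}\tfrac{4C(s')(1+\gamma)^2}{\omega(s',\pi^\star(s'))(1-\gamma)^2}\Big)=\frac{1}{\deltamin^2}\,\max_{s'}\frac{4C(s')(1+\gamma)^2}{\omega(s',\pi^\star(s'))(1-\gamma)^2},
\]
using $\min_{s,a\neq\pi^\star(s)}\Delta(s,a)=\deltamin$. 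Finally, relaxing the ratio by maximizing its numerator and minimizing its denominator separately gives $\max_{s'}\tfrac{4C(s')(1+\gamma)^2}{\omega(s',\pi^\star(s'))(1-\gamma)^2}\le\tfrac{4\max_{s'}C(s')(1+\gamma)^2}{\min_{\tilde s}\omega(\tilde s,\pi^\star(\tilde s))(1-\gamma)^2}$, which reproduces exactly the second term of $\tilde U(\omega)$ and closes the chain.

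There is essentially no hard step here: the whole argument is a string of monotone relaxations (collapsing a one-element maximum, $\max(f+g)\le\max f+\max g$, factorizing a product whose two factors depend on disjoint indices, and relaxing a ratio). The only thing to be careful about is bookkeeping --- carrying the constants ($4$, $(1+\gamma)^2$, $8\varphi^2$, and $C^{\pi^\star}$) through verbatim from Theorem~\ref{theorem:new_alternative_bound}, keeping straight which index set each $\max$/$\min$ ranges over (pairs $(s,a)$ with $a\neq\pi^\star(s)$ for the gaps, all states $s'$ for $C$, all states $\tilde s$ for $\omega(\cdot,\pi^\star(\cdot))$), and adopting the usual convention that the inequalities hold vacuously when a relevant coordinate of $\omega$ vanishes; finiteness in the non-degenerate case follows from $\deltamin>0$, which holds in a communicating MDP with a unique optimal policy.
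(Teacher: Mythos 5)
Your proposal is correct and follows essentially the same route as the paper's proof: collapse the policy maximum using uniqueness of $\pi^\star$ so that the previous theorem gives $T_\varepsilon(\omega)\le U(\omega)$ directly, then obtain $U(\omega)\le\tilde U(\omega)$ by replacing $\Delta(s,a)$ with $\deltamin$ in the second term, splitting the maximum of the sum, and bounding the ratio by its maximal numerator over its minimal denominator. The only difference is cosmetic (you split the maximum before invoking $\min_{s,a\neq\pi^\star(s)}\Delta(s,a)=\deltamin$, the paper does it in the reverse order), so nothing of substance is missing.
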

\begin{proof}
The proof follows from the previous theorem. Since there is a unique optimal policy we have $\Delta_\pi(s,a) \geq \deltamin$, and thus
\begin{align*}
\tilde U(\omega) &\leq  \max_{s,a\neq \pi^\star(s)} \frac{2+8\varphi^2M_{sa}^{(\bar k(s,a))}[V_\phi^\star]^{2^{1-\bar k(s,a)}} }{\Delta(s,a)^2\omega(s,a)}
     +\max_{s'} \frac{4C^{\pi^\star}(s')(1+\gamma)^2}{\omega(s',\pi^\star(s')) \deltamin^2(1-\gamma)^2},\\
     &\leq \max_{s,a\neq \pi^\star(s)} \frac{2+8\varphi^2M_{sa}^{(\bar k(s,a))}[V_\phi^\star]^{2^{1-\bar k(s,a)}} }{\Delta(s,a)^2\omega(s,a)}
     +\frac{\max_{s'}4C^{\pi^\star}(s')(1+\gamma)^2}{\min_{\tilde s}\omega(\tilde s,\pi^\star(\tilde s)) \deltamin^2(1-\gamma)^2}.
\end{align*}
\end{proof}

For this particular bound, as in \cite{al2021adaptive}, we are able to find a closed form expression of the optimal generative allocation $\omega^\star\in \arg\min_{\omega\in\Delta(S\times A)} \tilde U(\omega)$ leading to an upper bound of the sample complexity lower bound. The following corollary is obtained by simply solving the optimization problem $\inf_{\omega\in\Delta(S\times A)} \tilde U(\omega)$. 

\medskip

\begin{corollary}\label{corollary:generative_solution}
    Consider a communicating MDP with unique optimal policy. Consider the bound defined in the previous theorem by  $\tilde U(\omega)$. Then, the generative solution $\omega^\star = \arginf_{\omega\in\Delta(S\times A)} \tilde U(\omega)$ is given by
    \begin{equation}
        \omega(s,a) = \begin{cases}
            H(s,a)/\Gamma &s,a\neq\pi^\star(s),\\
             \sqrt{H \sum_{s,a\neq \pi^\star(s)} H(s,a)/|S|}/\Gamma &\hbox{otherwise}.
        \end{cases}
    \end{equation}
    where
    \begin{align}
    & H(s,a)=\dfrac{2+8\varphi^2M_{sa}^{(\bar k(s,a))}[V_\phi^\star]^{2^{1-\bar k(s,a)}} }{\Delta(s,a)^2\omega(s,a)}, \quad H =\max_{s'}\dfrac{4C^{\pi^\star}(s')(1+\gamma)^2}{ \deltamin^2(1-\gamma)^2}, \\
    & \Gamma =\sum_{s,a\neq \pi^\star(s)}H(s,a) + \sqrt{|S|H \sum_{s,a\neq \pi^\star(s)} H(s,a)}.
    \end{align}
     Furthermore, the value of the problem is:
     \begin{equation}
     	\inf_{\omega\in\Delta(S\times A)} \tilde U(\omega)= \left(\sqrt{ \sum_{s,a\neq \pi^\star(s)}H(s,a)}+ \sqrt{|S|H} \right)^2 \leq  2\left(\sum_{s,a\neq \pi^\star(s)}H(s,a) + |S|H\right).
     \end{equation}
\end{corollary}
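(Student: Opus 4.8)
The plan is to notice that $\tilde U$ depends on $\omega$ only through two decoupled blocks of coordinates, reduce the minimization to a one‑dimensional convex problem, and solve it in closed form, exactly as in \cite{al2021adaptive}. Write $H(s,a)$ for the $\omega$‑independent coefficient $\bigl(2+8\varphi^2 M_{sa}^{(\bar k(s,a))}[V_\phi^\star]^{2^{1-\bar k(s,a)}}\bigr)/\Delta(s,a)^2$ and $H := \max_{s'} 4C^{\pi^\star}(s')(1+\gamma)^2/(\deltamin^2(1-\gamma)^2)$ (these are the quantities appearing in the statement), so that
\[
\tilde U(\omega) \;=\; \max_{s,a\neq\pi^\star(s)}\frac{H(s,a)}{\omega(s,a)} \;+\; \frac{H}{\min_{\tilde s}\omega(\tilde s,\pi^\star(\tilde s))}.
\]
The first term involves only the weights $\{\omega(s,a): a\neq\pi^\star(s)\}$ and the second only $\{\omega(s,\pi^\star(s))\}$, so for a fixed split $x := \sum_{s,a\neq\pi^\star(s)}\omega(s,a)\in(0,1)$ (hence $\sum_s\omega(s,\pi^\star(s)) = 1-x$) the two terms can be minimized independently.

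First I would solve the two block problems. For the suboptimal block, minimizing $\max_{s,a\neq\pi^\star(s)}H(s,a)/\omega(s,a)$ subject to $\sum_{s,a\neq\pi^\star(s)}\omega(s,a)=x$ is a balancing problem: setting $\omega(s,a)=xH(s,a)/\sigma$ with $\sigma := \sum_{s,a\neq\pi^\star(s)}H(s,a)$ makes every ratio equal to $\sigma/x$, while for any feasible $\omega$ some coordinate must satisfy $\omega(s,a)\le xH(s,a)/\sigma$ (else the sum would exceed $x$), so the max is at least $\sigma/x$; hence the minimum is exactly $\sigma/x$. For the optimal block, $\min_{\tilde s}\omega(\tilde s,\pi^\star(\tilde s))$ is maximized, under $\sum_s\omega(s,\pi^\star(s))=1-x$, by the uniform choice $\omega(s,\pi^\star(s))=(1-x)/|S|$, giving value $(1-x)/|S|$ by the same pigeonhole argument, so the second term equals $H|S|/(1-x)$. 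Consequently
\[
\inf_{\omega\in\Delta(S\times A)}\tilde U(\omega) \;=\; \inf_{x\in(0,1)}\Bigl(\frac{\sigma}{x}+\frac{H|S|}{1-x}\Bigr).
\]

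Next I would solve the scalar problem. Since $\sigma,H>0$, the map $x\mapsto \sigma/x + H|S|/(1-x)$ is strictly convex on $(0,1)$ and tends to $+\infty$ at both endpoints, so its minimiser $x^\star$ is interior and solves $\sigma/x^2 = H|S|/(1-x)^2$; taking positive square roots gives $\sqrt{\sigma}\,(1-x^\star)=\sqrt{|S|H}\,x^\star$, i.e. $x^\star = \sqrt{\sigma}/(\sqrt{\sigma}+\sqrt{|S|H})$ and $1-x^\star = \sqrt{|S|H}/(\sqrt{\sigma}+\sqrt{|S|H})$. Substituting back, the optimal value equals $\sqrt{\sigma}(\sqrt{\sigma}+\sqrt{|S|H}) + \sqrt{|S|H}(\sqrt{\sigma}+\sqrt{|S|H}) = (\sqrt{\sigma}+\sqrt{|S|H})^2$, as claimed, and the final inequality $(\sqrt{\sigma}+\sqrt{|S|H})^2\le 2(\sigma+|S|H)$ is just $(a+b)^2\le 2(a^2+b^2)$. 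Plugging $x^\star$ into $\omega(s,a)=x^\star H(s,a)/\sigma$ and $\omega(s,\pi^\star(s))=(1-x^\star)/|S|$ and simplifying yields $x^\star/\sigma = 1/(\sigma+\sqrt{|S|H\sigma}) = 1/\Gamma$ and $(1-x^\star)/|S| = \sqrt{H\sigma/|S|}/\Gamma$, which are the stated formulas; a direct check that these weights sum to $1$ confirms $\Gamma$ is the normalising constant.

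There is no serious obstacle: the argument is elementary once the decoupling is observed. The only points needing a little care are the two balancing/pigeonhole steps (to see that the balanced allocation is actually optimal, not merely feasible) and verifying that the infimum over $\omega$ is attained in the interior, so that the first‑order condition for the scalar problem is legitimate; both are routine.
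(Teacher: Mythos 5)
Your proposal is correct, and it is essentially the argument the paper has in mind: the paper simply states that the corollary "is obtained by simply solving the optimization problem $\inf_{\omega\in\Delta(S\times A)}\tilde U(\omega)$" (following \cite{al2021adaptive}), and that solution is exactly your decoupling into the suboptimal and optimal blocks, balancing within each block, and optimizing the resulting scalar split, which reproduces the stated allocation, the value $\bigl(\sqrt{\sum_{s,a\neq\pi^\star(s)}H(s,a)}+\sqrt{|S|H}\bigr)^2$, and the final bound via $(a+b)^2\le 2(a^2+b^2)$. You also correctly treat $H(s,a)$ as the $\omega$-independent coefficient, which is what the statement intends despite the stray $\omega(s,a)$ in its displayed definition.
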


\subsubsection{Technical lemmas}

We finally state and prove the lemmas used in the derivation of our upper bounds of the sample complexity lower bound. These lemmas can be seen as an alternative to Lemma 4 used by the authors of \cite{al2021adaptive} to derive their upper bounds.

In what follows, we consider a finite set $\Omega=\{\omega_1,\dots,\omega_N\}$. For each $\omega\in \Omega$, let $f(\omega)$ be a real number, and we define the vector ${\bf f}(\Omega)=\begin{bmatrix}f(\omega_1) & \dots & f(\omega_N)\end{bmatrix}^\top$.

We start by a result, that can be deducted from the proof of Lemma 4 in \cite{al2021adaptive}.

\medskip

\begin{lemma}\label{lemma:bound_inner_product_1}
Let $P,Q$ be pmfs over some finite space $\Omega=\{\omega_1,\dots,\omega_N\}$. Let $f:\Omega \to \mathbb{R}$ and ${\bf f}(\Omega)\coloneqq\begin{bmatrix}f(\omega_1)&\dots& f(\omega_N)\end{bmatrix}^\top$. \\
Finally, we introduce the elementwise power\footnote{also known as as Hadamard power.} ${\bf f}^{\circ k}(\Omega)=\begin{bmatrix}f(\omega_1)^k& \dots& f(\omega_N)^k\end{bmatrix}^\top$. Then
\begin{align}\label{eq:bound_inner_product_1}
    |(P-Q)^\top {\bf f}(\Omega)|^2&\leq 4d_H(P,Q)^2\Big(2\mathbb{E}_{\omega \sim Q}[f(\omega)^2]+ (P-Q)^\top ({\bf f}^{\circ 2}(\Omega))   \Big),
\end{align}
where $d_H$ is the Hellinger distance.
\end{lemma}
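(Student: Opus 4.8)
\textbf{Proof plan for \cref{lemma:bound_inner_product_1}.} The plan is to factor the inner product through the square roots of the measures and then apply Cauchy--Schwarz, exactly as in the derivation of Lemma 4 in \cite{al2021adaptive}. Writing $P(\omega)-Q(\omega) = \big(\sqrt{P(\omega)}-\sqrt{Q(\omega)}\big)\big(\sqrt{P(\omega)}+\sqrt{Q(\omega)}\big)$, we get
\[
(P-Q)^\top {\bf f}(\Omega) = \sum_{\omega\in\Omega} \big(\sqrt{P(\omega)}-\sqrt{Q(\omega)}\big)\,\big(\sqrt{P(\omega)}+\sqrt{Q(\omega)}\big) f(\omega).
\]
Applying Cauchy--Schwarz to this sum splits it into the factor $\sum_\omega \big(\sqrt{P(\omega)}-\sqrt{Q(\omega)}\big)^2$, which equals $2 d_H(P,Q)^2$ with the normalization $d_H(P,Q)^2 = \tfrac12\sum_\omega(\sqrt{P(\omega)}-\sqrt{Q(\omega)})^2$, and the factor $\sum_\omega \big(\sqrt{P(\omega)}+\sqrt{Q(\omega)}\big)^2 f(\omega)^2$.

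For the second factor I would use the elementary bound $\big(\sqrt{P(\omega)}+\sqrt{Q(\omega)}\big)^2 = P(\omega)+Q(\omega)+2\sqrt{P(\omega)Q(\omega)} \le 2\big(P(\omega)+Q(\omega)\big)$, which follows from AM--GM. Hence
\[
\sum_\omega \big(\sqrt{P(\omega)}+\sqrt{Q(\omega)}\big)^2 f(\omega)^2 \le 2\Big(\mathbb{E}_{\omega\sim P}[f(\omega)^2] + \mathbb{E}_{\omega\sim Q}[f(\omega)^2]\Big).
\]
The last step is to rewrite $\mathbb{E}_{\omega\sim P}[f(\omega)^2] = \mathbb{E}_{\omega\sim Q}[f(\omega)^2] + (P-Q)^\top {\bf f}^{\circ 2}(\Omega)$, which is immediate from the definition of the Hadamard square, so that $\mathbb{E}_P[f^2]+\mathbb{E}_Q[f^2] = 2\mathbb{E}_Q[f^2] + (P-Q)^\top {\bf f}^{\circ 2}(\Omega)$. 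Combining the two factors from Cauchy--Schwarz yields
\[
|(P-Q)^\top {\bf f}(\Omega)|^2 \le 2 d_H(P,Q)^2 \cdot 2\Big(2\mathbb{E}_{\omega\sim Q}[f(\omega)^2] + (P-Q)^\top {\bf f}^{\circ 2}(\Omega)\Big),
\]
which is precisely \eqref{eq:bound_inner_product_1}.

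This argument is essentially mechanical; there is no serious obstacle. The only point that needs care is the bookkeeping of constants, in particular fixing the normalization convention for the Hellinger distance (so that $\sum_\omega(\sqrt{P(\omega)}-\sqrt{Q(\omega)})^2 = 2 d_H(P,Q)^2$) and checking that the factor $2$ from the AM--GM step and the factor $2$ from $d_H$ multiply out to the $4$ appearing in the statement. One should also note that $\Omega$ being finite guarantees all sums converge and Cauchy--Schwarz applies without integrability caveats; the same inequality would extend to countable or general measurable $\Omega$ with densities, but finiteness suffices for the use made of this lemma later.
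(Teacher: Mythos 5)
Your proposal is correct and follows essentially the same route as the paper's proof: factor $P-Q$ through $\sqrt{P}\pm\sqrt{Q}$, apply Cauchy--Schwarz, use $\|\sqrt{P}-\sqrt{Q}\|_2^2 = 2d_H(P,Q)^2$, bound $(\sqrt{P(\omega)}+\sqrt{Q(\omega)})^2 \le 2(P(\omega)+Q(\omega))$, and rewrite $P+Q = 2Q + (P-Q)$ to obtain the stated constant $4$. The only cosmetic difference is that you justify the middle bound via AM--GM rather than $(a+b)^2\le 2(a^2+b^2)$, which is the same elementary inequality.
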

\begin{proof}
    The proof can be easily deduced from Lemma 4 in \cite{al2021adaptive}. We present the proof for completeness.
    Let $\sqrt{P}$ be  the square root of the elements in $P$ (sim. $\sqrt{Q}$). We have:
    \begin{align*}
    (P-Q)^\top {\bf f}(\Omega)&= \sum_\omega (P(\omega)-Q(\omega)) f(\omega),\\
    &=\sum_\omega (\sqrt{P(\omega)} - \sqrt{Q(\omega)})(\sqrt{P(\omega)}+ \sqrt{Q(\omega)})f(\omega),\\
    &= (\sqrt{P}-\sqrt{Q})^\top [(\sqrt{P}+\sqrt{Q})\circ {\bf f}(\Omega)],
    \end{align*}
    where $\circ$ is the Hadamard product.
    We apply the Cauchy-Schwartz  inequality to the last term to get:
    \[
    |(P-Q)^\top {\bf f}(\Omega)|^2 \leq  \|\sqrt{P}-\sqrt{Q}\|_2^2 \|\|(\sqrt{P}+\sqrt{Q})\circ {\bf f}(\Omega)\|_2^2.
    \]
    Note that $\|\sqrt{P}-\sqrt{Q}\|_2= \sqrt{2}d_H(P,Q)$. Regarding  $\|(\sqrt{P}+\sqrt{Q})\circ {\bf f}(\Omega)\|_2$, using the inequality $(a+b)^2\leq 2(a^2+b^2)$, we have:
    \begin{align*}
    \|(\sqrt{P}+\sqrt{Q})\circ {\bf f}(\Omega)\|_2^2 &\leq2 \sum_{\omega}(P(\omega)+Q(\omega))f(\omega)^2,\\
    &= 2 \sum_{\omega}(2Q(\omega)+P(\omega)-Q(\omega))f(\omega)^2,\\
    &= 2\left(2\mathbb{E}_{\omega\sim Q}[f(\omega)^2] + (P-Q)^\top {\bf f}^{\circ 2}(\Omega)\right),
    \end{align*}
    which concludes the proof.
\end{proof}

Applying the above lemma recursively, we obtain the following result. 

\medskip

\begin{lemma}\label{lemma:bound_inner_product_2}
    Consider $f:\Omega\to\mathbb{R}$ as before.    Assume that $\max_{\omega\in\Omega} |f(\omega)|\leq F<\infty$. 
    Then,
    \begin{equation}
        |(P-Q)^\top {\bf f}(\Omega)|\leq  \sqrt{8}  \varphi   d_H(P,Q) \sup_{k\geq 1}\mathbb{E}_{\omega \sim Q}[f(\omega)^{2^k}]^{2^{-k}},
    \end{equation}
    where $\varphi$ is the golden ratio.
\end{lemma}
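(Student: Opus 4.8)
The plan is to prove Lemma~\ref{lemma:bound_inner_product_2} by iterating Lemma~\ref{lemma:bound_inner_product_1}. Write $\beta_k \coloneqq \mathbb{E}_{\omega\sim Q}[f(\omega)^{2^k}]^{2^{-k}}$ and $B \coloneqq \sup_{k\geq 1}\beta_k$ (finite since $|f|\leq F$, so $\beta_k \leq F$ for all $k$). The key observation is that the term $(P-Q)^\top {\bf f}^{\circ 2}(\Omega)$ appearing on the right-hand side of \eqref{eq:bound_inner_product_1} has exactly the same form as the left-hand side, but with $f$ replaced by $f^{2}$ (equivalently, with $2^k$ replaced by $2^{k+1}$ at the next level). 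So I would apply Lemma~\ref{lemma:bound_inner_product_1} to the function $f^{\circ 2^{j}}$ for $j=0,1,2,\dots$ to get a recursive bound. Concretely, setting $a_j \coloneqq |(P-Q)^\top {\bf f}^{\circ 2^{j}}(\Omega)|$ and $h \coloneqq d_H(P,Q)$, Lemma~\ref{lemma:bound_inner_product_1} applied to $f^{\circ 2^j}$ gives
\begin{equation}
a_j^2 \leq 4h^2\left(2\beta_{j+1}^{2^{j+1}} + a_{j+1}\right),
\end{equation}
using $\mathbb{E}_{\omega\sim Q}[(f(\omega)^{2^j})^2] = \mathbb{E}_{\omega\sim Q}[f(\omega)^{2^{j+1}}] = \beta_{j+1}^{2^{j+1}}$.

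Next I would unfold this recursion. From $a_j \leq 2h\sqrt{2\beta_{j+1}^{2^{j+1}} + a_{j+1}} \leq 2h\sqrt{2}\,\beta_{j+1}^{2^j} + 2h\sqrt{a_{j+1}}$ (using $\sqrt{x+y}\leq\sqrt{x}+\sqrt{y}$), and iterating $n$ times starting at $j=0$, one obtains
\begin{equation}
a_0 \leq \sum_{j=0}^{n-1} (2h)^{1+\frac12+\cdots+\frac{1}{2^{j}}}\,\sqrt{2}^{\,1/2^{j}}\,\beta_{j+1}^{2^j/2^{j}} \cdot(\text{exponent bookkeeping}) + (2h)^{\sum_{i=1}^{n}2^{-(i-1)}} a_n^{2^{-n}}.
\end{equation}
I would be careful with the exponents: after $j$ square-root peelings, $a_{j+1}$ enters with power $2^{-j}$, the accumulated factor of $2h$ has exponent $\sum_{i=0}^{j}2^{-i} < 2$, and $\beta_{j+1}$ (which after raising the whole $\beta_{j+1}^{2^{j+1}}$ term to the accumulated powers) enters with exponent $2^{j+1}\cdot 2^{-(j+1)} = 1$, i.e. as $\beta_{j+1}\leq B$. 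Since $\sum_{i\geq 0}2^{-i}=2$ and the $\sqrt{2}$-factors also accumulate to a bounded constant, each summand is bounded by $C h^{1+2^{-j}}\beta_{j+1} \leq C' h^{1+2^{-j}} B$ for appropriate constants, and the tail term $(2h)^{\,<2} a_n^{2^{-n}} \to 0$ as $n\to\infty$ because $a_n \leq \|P-Q\|_1 F \leq 2F < \infty$ and $2^{-n}\to 0$ (while $(2h)$ raised to a bounded exponent stays bounded). Thus, taking $n\to\infty$,
\begin{equation}
a_0 \leq B\sum_{j\geq 0} C_j\, h^{1+2^{-j}},
\end{equation}
and — assuming WLOG $h\leq 1$ (the bound is trivial otherwise since $a_0\leq 2F$ and one rescales the constant, or one notes $d_H\leq 1$ always) — each $h^{1+2^{-j}}\leq h$, so the whole thing is $\leq B h \sum_j C_j$. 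The remaining task is to check the geometric-type series of constants sums to exactly $\sqrt{8}\,\varphi$; this is where the golden ratio enters, presumably via the recursion $\varphi = 1 + 1/\varphi$ matching the exponent structure $1 + \frac12 + \frac14 + \cdots$, and I would verify it by computing the generating sum $\sum_{j\geq 0}(\sqrt{2})^{\,2-2^{-j}}\cdot(\text{something})$ carefully.

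The main obstacle I anticipate is precisely the exponent bookkeeping in the unfolded recursion: keeping exact track of how the powers of $2h$, of $\sqrt{2}$, and of $\beta_{j+1}$ accumulate through $j$ nested square roots, and then showing the resulting constant is bounded by (in fact equals) $\sqrt{8}\,\varphi$. A cleaner route that avoids some of this mess: prove by backward induction on $n$ the statement ``$a_0 \leq c_n h B$ for $h \leq 1$'' where $c_n$ satisfies $c_n = 2\sqrt{2} + 2\sqrt{c_{n+1}}$ roughly (the fixed point of $c = 2\sqrt{2}+2\sqrt{c}$, or a suitably normalized version, being related to $\sqrt8\varphi$); one shows $c_n$ is increasing and bounded, hence convergent to this fixed point, giving the uniform constant. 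I would present whichever of these two bookkeeping schemes yields the stated constant most transparently, and note that the $\sup_{k\geq1}$ (rather than a fixed $k$) is exactly what makes the induction close, since at each level we only ever need $\beta_{j+1}$ for some $j\geq 0$, i.e. $k = j+1 \geq 1$.
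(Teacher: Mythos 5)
Your starting point -- iterating Lemma~\ref{lemma:bound_inner_product_1} to get $a_j^2\le 4h^2\bigl(2\beta_{j+1}^{2^{j+1}}+a_{j+1}\bigr)$ -- is exactly the paper's first step, but neither of your two ways of closing the argument works, and the constant $\sqrt{8}\,\varphi$ cannot come out of them. If you unfold the recursion additively via $\sqrt{x+y}\le\sqrt{x}+\sqrt{y}$, the summand containing $\beta_{j+1}$ carries the coefficient $(2h)^{2-2^{-j}}(\sqrt{2})^{2^{-j}}$ (note the exponent is $2-2^{-j}$, not $1+2^{-j}$), which tends to $4h^2$ as $j\to\infty$; since $\beta_{j+1}$ does not tend to $0$ (it increases to the essential supremum of $|f|$ under $Q$), the resulting series has terms bounded away from zero and diverges for every fixed $h>0$ -- there is no convergent ``geometric-type series of constants'' summing to $\sqrt{8}\,\varphi$. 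Your tail estimate is also wrong: $a_n=|(P-Q)^\top {\bf f}^{\circ 2^n}(\Omega)|\le \|P-Q\|_1 F^{2^n}$, not $2F$, so $a_n^{2^{-n}}$ tends to a constant of order $F$ rather than to $0$, and the remainder does not vanish. The fixed-point alternative has the same defect in another guise: a self-consistent bound $a_j\le C\,h\,B^{2^j}$ closes only when $2\sqrt{2+Ch}\le C$, i.e.\ $C\ge 2h+\sqrt{4h^2+8}$, which is not $\sqrt{8}\,\varphi$; worse, the backward induction has no admissible base case, because at the deepest level the only a priori bounds on $a_n$ involve $F^{2^n}$ or moments under $P$, not the purely $Q$-moment quantity $B$.

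The missing idea is to keep the nested radical intact and multiplicative rather than splitting it. The paper pushes the factors $\alpha=2d_H(P,Q)$ inside the radicals, so the $k$-th innermost term becomes $2\alpha^{2(2^k-1)}M_{2^k}$ with $M_{2^k}=\mathbb{E}_{\omega\sim Q}[f(\omega)^{2^k}]$; taking $2^{-k}$-th roots shows each such term is at most $M^{2^k}$ with $M=\sqrt{2}\,\alpha\,\sup_{k\ge1}M_{2^k}^{2^{-k}}$. The boundedness $M_{2^k}^{2^{-k}}\le F$ is used exactly once, to invoke Herschfeld's convergence theorem for infinite nested radicals, after which
$|(P-Q)^\top{\bf f}(\Omega)|\le\sqrt{M^2+\sqrt{M^{4}+\sqrt{M^{8}+\cdots}}}=M\sqrt{1+\sqrt{1+\sqrt{1+\cdots}}}=M\varphi$,
which is the stated bound. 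So the golden ratio enters through the identity $\varphi^2=1+\varphi$ applied to the infinite nested radical, not through the exponent sum $1+\tfrac12+\tfrac14+\cdots$ (which equals $2$) nor through the fixed point of $c=2\sqrt{2}+2\sqrt{c}$.
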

\begin{proof}
    The idea is to observe that we can use \cref{lemma:bound_inner_product_1} to bound $(P-Q)^\top {\bf f}^{\circ 2}(\Omega)$ in \cref{eq:bound_inner_product_1}. Then
    \begin{align*}
        |(P-Q)^\top  {\bf f}^{\circ k}(\Omega)|^2&\leq 4d_H(P,Q)^2\Big(2\mathbb{E}_{\omega \sim Q}[f(\omega)^{2k}] \nonumber + (P-Q)^\top {\bf f}^{\circ 2k}(\Omega)  \Big).
    \end{align*}
    For brevity, let  $M_k = \mathbb{E}_{\omega\sim Q}[f(\omega)^k]$, then
    \begin{align*}
        &|(P-Q)^\top {\bf f}(\Omega)| \leq 2 d_H(P,Q)\sqrt{2M_2+(P-Q)^\top {\bf f}^{\circ 2}(\Omega)},\\
        &\leq  2 d_H(P,Q)\sqrt{2M_2+  2 d_H(P,Q)\sqrt{2M_4 +(P-Q)^\top {\bf f}^{\circ 4}(\Omega)}},\\
        &\leq \alpha\sqrt{2M_2+  \alpha\sqrt{2M_4+\alpha\sqrt{2M_8+\cdots}}},
    \end{align*}
    where $\alpha= 2d_H(P,Q)$. A further rewriting yields
    \begin{align*}
        & \alpha\sqrt{2M_2+  \alpha\sqrt{2M_{4}+\alpha\sqrt{2M_8+\cdots}}},\\
        &= \sqrt{2\alpha^2M_2+  \alpha^3\sqrt{2M_{4}+\alpha\sqrt{2M_8+\cdots}}},\\
        &= \sqrt{2\alpha^2M_2+  \sqrt{2\alpha^6M_{4}+\alpha^7\sqrt{2M_8+\cdots}}},\\
        &= \sqrt{2\alpha^2 M_2+  \sqrt{2\alpha^6M_{4}+\sqrt{2\alpha^{14}M_8+\cdots}}},
    \end{align*}
    and note that the $k$-th term is given by $a_k=2\alpha^{2(2^k-1)} M_{2^k}$.
    Consider now the sequence $b_k= (a_k)^{2^{-k}}$, and note that
    \[
    \sup_{k\geq 1} b_k \leq \sup_{k\geq 1} \underbrace{\left(2 \alpha^{2(2^k-1)} \right)^{2^{-k}}}_{(\bullet)} \cdot \sup_{k\geq 1}M_{2^k}^{2^{-k}}.
    \]
    Observe that $(\bullet) = 2^{2^{-k}} \alpha^{2-2^{-k+1}}$ is a positive decreasing sequence, therefore we have that $\sup_{k\geq 1} b_k \leq \alpha \sqrt{2} \cdot \sup_{k\geq 1}M_{2^k}^{2^{-k}}$.

    Now, we notice  that $M_{2^k}^{2^{-k}}$ is bounded for all $k\geq 1$ from  the boundedness of $f$ over $\Omega$
    \[M_{2^k}^{2^{-k}}= \mathbb{E}_{\omega}[f(\omega)^{2^k}]^{2^{-k}}\leq F<\infty.\]
    
    Hence, by letting $M=\alpha \sqrt{2} \cdot \sup_{k\geq 1}M_{2^k}^{2^{-k}}$, and using Herschfeld's convergence theorem \cite{herschfeld1935infinite}, we find the desired result:
    \begin{align*}
         & \sqrt{2\alpha^2M_{2}+  \sqrt{2\alpha^6M_{4}+\sqrt{2\alpha^{14}M_8+\cdots}}}\\
         &\leq \sqrt{M^2+  \sqrt{M^{2^{2}}+\sqrt{M^{2^{3}}+\cdots}}},\\
         &= M\sqrt{1 +  \sqrt{1+\sqrt{1+\cdots}}} = M\varphi.
    \end{align*}
\end{proof}

\medskip

We are now ready to state the result that serves as an alternative to Lemma 4 in \cite{al2021adaptive}. Let $(\Delta P(s,a))_{s'}= P_\psi(s'|s,a)-P_\phi(s'|s,a)$.

\medskip

\begin{lemma}\label{lemma:alternative_lemma4}
Consider a fixed state-action pair $(s,a)$ and define $\bar V_\phi^\pi(s,a)\coloneqq \mathbb{E}_{s'\sim P_\phi(\cdot|s,a)}[V_\phi^\pi(s')]$. Let $f_\phi^\pi(s,a,s')= V_\phi^\pi(s') - \bar V_\phi^\pi(s,a)$ and $M_{k}(s,a)=\mathbb{E}_{s'\sim P_\phi(\cdot|s,a)}[f_\phi^\pi(s,a,s')^{2^k}]$. Then,  there exists $\bar k \in \mathbb{N}$ such that 
\begin{equation}
        |\Delta P(s,a)^\top {\bf f}_\phi^\pi(s,a)|^2 \leq 8 \varphi^2 \KL_{P_\phi,P_\psi}(s,a)  M_{\bar k}(s,a)^{2^{1-\bar k}},
    \end{equation}
    where  ${\bf f}_\phi^\pi(s,a)=\begin{bmatrix}
    	f_\phi^\pi(s,a,s_1) &f_\phi^\pi(s,a,s_2) &\cdots &f_\phi^\pi(s,a,s_{|S|})
    \end{bmatrix}^\top $ and $ \KL_{P_\phi,P_\psi}(s,a)=\KL(P_\phi(s,a),P_\psi(s,a))$. 
\end{lemma}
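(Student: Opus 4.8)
The plan is to obtain this inequality as a direct specialization of \cref{lemma:bound_inner_product_2}. Concretely, I would apply that lemma with the finite space $\Omega = S$, with the two probability mass functions $Q = P_\phi(\cdot|s,a)$ and $P = P_\psi(\cdot|s,a)$, and with the function $\omega \mapsto f(\omega) := f_\phi^\pi(s,a,\omega) = V_\phi^\pi(\omega) - \bar V_\phi^\pi(s,a)$. Two points need checking before invoking the lemma: that $f$ is bounded, which holds because $V_\phi^\pi$ is bounded and $S$ is finite; and that the objects produced by the lemma match those in the statement — indeed the vector $\mathbf{f}(\Omega)$ is exactly $\mathbf{f}_\phi^\pi(s,a)$, the quantity $(P-Q)^\top \mathbf{f}(\Omega)$ is $\Delta P(s,a)^\top \mathbf{f}_\phi^\pi(s,a)$, and for every $k\ge 1$ the moment $\mathbb{E}_{\omega\sim Q}[f(\omega)^{2^k}]$ equals, by definition, $M_k(s,a) = \mathbb{E}_{s'\sim P_\phi(\cdot|s,a)}[f_\phi^\pi(s,a,s')^{2^k}]$. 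With these identifications \cref{lemma:bound_inner_product_2} gives directly $|\Delta P(s,a)^\top \mathbf{f}_\phi^\pi(s,a)|^2 \le 8\varphi^2\, d_H(P_\psi(s,a),P_\phi(s,a))^2 \big(\sup_{k\ge 1} M_k(s,a)^{2^{-k}}\big)^2$.

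Next I would convert the Hellinger term into the Kullback--Leibler divergence appearing in the statement. Using the normalization of $d_H$ fixed in \cref{lemma:bound_inner_product_1}, one has $d_H(P,Q)^2 = 1 - \sum_{s'}\sqrt{P(s')Q(s')}$, which is symmetric in its arguments and satisfies $d_H(P,Q)^2 \le \tfrac12 \KL(P,Q)$ (and likewise $\le \tfrac12\KL(Q,P)$); this follows from $1 - e^{-x} \le x$ together with the fact that the order-$\tfrac12$ Rényi divergence $-2\ln\sum_{s'}\sqrt{P(s')Q(s')}$ is dominated by $\KL$. Applying it with $P = P_\psi(s,a)$, $Q = P_\phi(s,a)$ turns the previous bound into $|\Delta P(s,a)^\top \mathbf{f}_\phi^\pi(s,a)|^2 \le 4\varphi^2\, \KL_{P_\phi,P_\psi}(s,a) \big(\sup_{k\ge 1} M_k(s,a)^{2^{-k}}\big)^2$.

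The last step, and the only genuinely delicate point, is to replace the supremum over $k$ by a single finite moment $M_{\bar k}(s,a)^{2^{1-\bar k}}$. Here I would use that $k\mapsto M_k(s,a)^{2^{-k}}$ is the $L^{2^k}(P_\phi(\cdot|s,a))$-norm of $f_\phi^\pi(s,a,\cdot)$, hence non-decreasing in $k$ and bounded above by $\md_{sa}[V_\phi^\pi]$ (as already observed in the Preliminaries), so the supremum $s_\infty := \sup_{k\ge 1} M_k(s,a)^{2^{-k}}$ is a finite limit. The subtlety is that this supremum need not be attained, so one cannot just take $\bar k$ to be an argmax; instead I would pick a finite $\bar k = \bar k(s,a)$ large enough that $M_{\bar k}(s,a)^{2^{-\bar k}} \ge s_\infty/\sqrt2$, which is possible because the sequence converges monotonically to $s_\infty$ (if $s_\infty = 0$ any $\bar k$ works since both sides vanish). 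This yields $s_\infty^2 \le 2\, M_{\bar k}(s,a)^{2^{1-\bar k}}$, and substituting into the previous bound gives $|\Delta P(s,a)^\top \mathbf{f}_\phi^\pi(s,a)|^2 \le 8\varphi^2\, \KL_{P_\phi,P_\psi}(s,a)\, M_{\bar k}(s,a)^{2^{1-\bar k}}$, as claimed. The factor-$2$ gap between the $4\varphi^2$ coming out of the Hellinger--KL inequality and the $8\varphi^2$ in the statement is precisely what makes this final truncation to a finite index possible; everything else is a mechanical instantiation of \cref{lemma:bound_inner_product_2}.
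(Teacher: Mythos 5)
Your proposal is correct, and its skeleton is the same as the paper's: instantiate \cref{lemma:bound_inner_product_2} with $\Omega=S$, $Q=P_\phi(\cdot|s,a)$, $P=P_\psi(\cdot|s,a)$ and $f=f_\phi^\pi(s,a,\cdot)$ (bounded by $\md_{sa}[V_\phi^\pi]$), square, and pass from the Hellinger distance to the KL divergence. Where you genuinely depart from the paper is in the last step. The paper's proof simply uses $d_H^2(P,Q)\le \KL(P,Q)$ and then restates the squared supremum $\bigl(\sup_{k\ge1}M_k(s,a)^{2^{-k}}\bigr)^2$ as $M_{\bar k}(s,a)^{2^{1-\bar k}}$, i.e.\ it implicitly assumes the supremum is attained at some finite $\bar k$ (the same convention as the $\arg\sup$ in \cref{thm:upper_bound_T_new}); since $k\mapsto M_k(s,a)^{2^{-k}}$ is a non-decreasing sequence of $L^{2^k}$-norms, its supremum is a limit that need not be attained, so the paper's one-liner has a small unaddressed gap there. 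You instead use the sharper bound $d_H^2(P,Q)\le\tfrac12\KL(P,Q)$ (valid for the paper's normalization $d_H^2=1-\sum_{s'}\sqrt{P(s')Q(s')}$, via the order-$\tfrac12$ R\'enyi divergence), which yields the constant $4\varphi^2$, and then spend the saved factor $2$ to pick a finite $\bar k$ with $M_{\bar k}(s,a)^{2^{-\bar k}}\ge \sup_{k}M_k(s,a)^{2^{-k}}/\sqrt2$, recovering the stated constant $8\varphi^2$ while proving the existential claim on $\bar k$ rigorously even when the supremum is not attained. In short: the paper's route is shorter, yours is a constant-for-rigor trade that actually closes the attainment issue; both rely on \cref{lemma:bound_inner_product_2} as the key ingredient.
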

\begin{proof}
    Consider a fixed $(s,a)$.
    For any $s'\in S$ we have that $|f_\phi^\pi(s,a,s')| \leq \md_{sa}[V_\phi^\pi]$, therefore $\|{\bf f}_\phi^\pi(s,a)\|_\infty <\infty$. Using  \cref{lemma:bound_inner_product_2} with ${\bf f}_\phi^\pi(s,a)$ we find the result by taking the square on both sides, and using that $d_H^2(P,Q)\leq \KL(P,Q)$.
    

\end{proof}

\subsubsection{Decomposition of the set of confusing MDPs}

Decomposing the set $\alt_\varepsilon(\phi)$ directly presents several challenges. It does even seem possible to obtain a decomposition easy to work with.
Instead, we relax the problem and work on $\alt(\phi)=\{\psi: \psi\ll \phi, \Pi_0^\star(\phi) \cap \Pi_0^\star(\psi)=\emptyset\}$, a set containing $\alt_\varepsilon(\phi)$.  


\begin{lemma}\label{lemma:baralt_contains_alt}
    Let $\varepsilon\geq 0$. Then, in general $\alt_\varepsilon(\phi) \subseteq  \alt(\phi)$.
\end{lemma}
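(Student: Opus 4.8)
The plan is to exploit the monotonicity of the set of near-optimal policies in the tolerance parameter $\varepsilon$, which makes this a soft set-inclusion statement requiring no quantitative estimates. The key elementary observation is that $\Pi_0^\star(\phi)\subseteq \Pi_\varepsilon^\star(\phi)$ for every MDP $\phi$ and every $\varepsilon\ge 0$: if $\pi$ is exactly optimal in $\phi$ then $\|V^\pi-V^\star\|_\infty=0\le\varepsilon$, so $\pi\in\Pi_\varepsilon^\star(\phi)$; the same holds with $\phi$ replaced by any instance $\psi$. This uses nothing about uniqueness of the optimal policy, which is why the statement holds "in general".

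First I would fix an arbitrary $\psi\in\alt_\varepsilon(\phi)$, so that by definition the required absolute-continuity condition holds and $\Pi_\varepsilon^\star(\phi)\cap\Pi_\varepsilon^\star(\psi)=\emptyset$. Applying the monotonicity inclusion to both $\phi$ and $\psi$ gives
\[
\Pi_0^\star(\phi)\cap\Pi_0^\star(\psi)\ \subseteq\ \Pi_\varepsilon^\star(\phi)\cap\Pi_\varepsilon^\star(\psi)\ =\ \emptyset ,
\]
hence $\Pi_0^\star(\phi)\cap\Pi_0^\star(\psi)=\emptyset$. Combined with the absolute-continuity requirement, this is exactly the condition defining $\alt(\phi)$, so $\psi\in\alt(\phi)$; since $\psi$ was arbitrary, $\alt_\varepsilon(\phi)\subseteq\alt(\phi)$.

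The only delicate point is the bookkeeping on the absolute-continuity conditions in the two definitions: one must check that the domination condition used to define $\alt(\phi)$ is the one inherited from membership in $\alt_\varepsilon(\phi)$, i.e. that $\phi$ is dominated by the confusing instance so that the relevant KL-divergences are finite. Beyond that there is no real obstacle — the argument is the one-line monotonicity step above. I would also add a brief remark that the inclusion is in general strict: an instance $\psi$ whose exactly-optimal policies all lie outside $\Pi_0^\star(\phi)$ may still share an $\varepsilon$-optimal policy with $\phi$, which is precisely why the subsequent analysis relaxes to $\alt(\phi)$ rather than working with $\alt_\varepsilon(\phi)$ directly.
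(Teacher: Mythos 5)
Your proof is correct and is essentially the paper's argument: the paper proves the same inclusion by contradiction, implicitly using exactly your monotonicity observation that $\Pi_0^\star(\phi)\subseteq\Pi_\varepsilon^\star(\phi)$ (and likewise for $\psi$), so a common exactly-optimal policy would contradict $\Pi_\varepsilon^\star(\phi)\cap\Pi_\varepsilon^\star(\psi)=\emptyset$. Your direct phrasing and the remark on the absolute-continuity bookkeeping are fine refinements but not a different route.
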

\begin{proof}
The  statement can be derived by contradiction: assume that $\psi\in \alt_\varepsilon(\phi)$ does not belong to $\alt(\phi)$. However, that implies that there is $\pi\in \Pi_0^\star(\phi)$ s.t. $\pi \in \Pi_0^\star(\psi)$, which is not true since by assumption $\Pi_\varepsilon^\star(\phi) \cap \Pi_\varepsilon^\star(\psi)=\emptyset$.
\end{proof}

\begin{lemma}\label{lemma:relaxation_confusing_set}Let 
    $\alt(\phi)=\{\psi: \psi\ll \phi, \Pi_0^\star(\phi) \cap \Pi_0^\star(\psi)=\emptyset\}$. Then $ \alt(\phi) \subseteq\cup_{\pi\in \Pi_0^\star(\phi)}\cup_{s}\cup_{a\neq\pi(s)}  \alt_{\pi,sa}(\phi)$, where  \[  \alt_{\pi,sa}(\phi)=\{\psi:\psi\ll \phi, Q_\psi^{\pi}(s,a) > V_\psi^{\pi}(s) \}.\]
\end{lemma}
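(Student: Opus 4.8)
The goal is to show that if $\psi \in \alt(\phi)$, i.e. $\psi \ll \phi$ and $\Pi_0^\star(\phi)\cap\Pi_0^\star(\psi) = \emptyset$, then there exist $\pi \in \Pi_0^\star(\phi)$, a state $s$, and an action $a \neq \pi(s)$ with $Q_\psi^\pi(s,a) > V_\psi^\pi(s)$. The plan is to argue by contraposition: suppose that for \emph{every} $\pi \in \Pi_0^\star(\phi)$ and every pair $(s,a)$ with $a\neq\pi(s)$ we have $Q_\psi^\pi(s,a) \le V_\psi^\pi(s)$; I will show this forces some $\pi \in \Pi_0^\star(\phi)$ to also be optimal in $\psi$, contradicting $\Pi_0^\star(\phi)\cap\Pi_0^\star(\psi)=\emptyset$. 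Fix such a $\pi \in \Pi_0^\star(\phi)$ (nonempty since $\phi$ has at least one optimal policy).

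The key step is the classical policy-optimality / Bellman characterization: a stationary policy $\pi$ is optimal in an MDP $\psi$ if and only if it is greedy with respect to its own $Q$-function, i.e. $Q_\psi^\pi(s,\pi(s)) = \max_a Q_\psi^\pi(s,a) = V_\psi^\pi(s)$ for all $s$. Under the standing assumption, for each state $s$ and each $a \neq \pi(s)$ we have $Q_\psi^\pi(s,a) \le V_\psi^\pi(s) = Q_\psi^\pi(s,\pi(s))$; hence $\pi(s)$ attains $\max_a Q_\psi^\pi(s,a)$ for every $s$, so $\pi$ is greedy w.r.t.\ $Q_\psi^\pi$ and therefore optimal in $\psi$, i.e. $\pi \in \Pi_0^\star(\psi)$. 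Since also $\pi \in \Pi_0^\star(\phi)$, this contradicts $\Pi_0^\star(\phi)\cap\Pi_0^\star(\psi)=\emptyset$. Therefore the supposition fails, which is exactly the claimed set inclusion. One should note $\psi \ll \phi$ is inherited directly into each $\alt_{\pi,sa}(\phi)$, so the absolute-continuity constraint carries over without extra work.

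I expect the main (mild) obstacle to be stating the policy-improvement lemma at the right level of generality: one needs that ``greedy with respect to $Q^\pi_\psi$'' implies ``optimal'', which for infinite-horizon discounted MDPs is standard (it follows from the Bellman optimality equation being a contraction, or from the policy improvement theorem applied to a policy that cannot be strictly improved at any state). I would cite the standard fact rather than reprove it. A second, even milder point: the displayed strict inequality $Q_\psi^\pi(s,a) > V_\psi^\pi(s)$ in the definition of $\alt_{\pi,sa}(\phi)$ is exactly the negation of $Q_\psi^\pi(s,a) \le V_\psi^\pi(s)$, so the contraposition matches the set-theoretic union precisely; no $\varepsilon$-slack bookkeeping is needed here since we are working with the $\varepsilon = 0$ sets throughout.
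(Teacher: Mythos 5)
Your proposal is correct and follows essentially the same route as the paper: argue by contradiction/contraposition that if $Q_\psi^\pi(s,a)\le V_\psi^\pi(s)$ for all $\pi\in\Pi_0^\star(\phi)$, all $s$ and all $a\neq\pi(s)$, then $\pi$ is greedy with respect to its own $Q$-function under $\psi$ and hence optimal in $\psi$, contradicting $\Pi_0^\star(\phi)\cap\Pi_0^\star(\psi)=\emptyset$. The only difference is that you cite the standard ``greedy w.r.t.\ $Q_\psi^\pi$ implies optimal'' fact, whereas the paper spells it out via monotonicity and iteration of the Bellman operator $T_\psi^{\pi_\psi^\star}$ — the substance is identical.
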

\begin{proof}
The proof follows the same steps as that of  the decomposition lemma in \cite{al2021adaptive}, and we give it for completeness.

By contradiction, consider $\psi \in  \alt(\phi)$ s.t. for all $\pi \in \Pi_0^\star(\phi)$ and $s,a\neq\pi(s)$ we have $Q_\psi^{\pi}(s,a) \leq V_\psi^{\pi}(s)$. Since $Q_\psi^\pi(s,\pi(s)) = V_\psi^\pi(s)$, the following inequality holds for all $\pi\in \Pi_0^\star(\phi)$ and for all $(s,a)$
\[
       Q_\psi^{\pi}(s, a) \leq  V_\psi^\pi( s).
 \]
Define the Bellman operator for a generic policy $\pi'$ under $\psi$ as $(T_\psi^{\pi'}V)(s) = r_\psi(r, \pi'(s)) + \mathbb{E}_{s'\sim P(s,\pi'(s))}[V(s')]$. Then, from the above inequality that holds for all $(s,a)$ we get the following result
\[
T_\psi^{\pi_\psi^\star}V_\psi^\pi \leq V_\psi^\pi.
\]
By monotonicity of the Bellman operator, we get $T_\psi^{\pi_\psi^\star}T_\psi^{\pi_\psi^\star}V \leq T_\psi^{\pi_\psi^\star}V_\psi^\pi \leq V_\psi^\pi$. Iterating, we find
\[
V_\psi^{\pi_\psi^\star} = \lim_{n\to\infty} \left(T_\psi^{\pi_\psi^\star}\right)^n V \leq V_\psi^\pi,
\]
which is a contradiction since $\pi$ is not optimal under $\psi$.


\end{proof}

\else\fi
\end{document}